\documentclass{article}

\providecommand*{\abs}[1]{\left|{#1}\right|}
\providecommand*{\absnormal}[1]{|{#1}|}
\providecommand*{\N}[1]{\left\|{#1}\right\|} 
\providecommand*{\Nnormal}[1]{\|{#1}\|} 
\providecommand*{\Nbig}[1]{\big\|{#1}\big\|} 

\providecommand{\bbR}{{\mathbb{R}}}

\newcommand{\x}{\mathbf{x}}
\newcommand{\y}{\mathbf{y}}

\providecommand{\Id}{\mathrm{Id}}                     

\providecommand{\CB}{{\cal B}}

\providecommand{\CD}{{\cal D}}

\providecommand{\CO}{{\cal O}}
\providecommand{\CP}{{\cal P}}

\providecommand{\bbE}{\mathbb{E}}

\providecommand{\bbN}{\mathbb{N}}

\providecommand{\bbP}{\mathbb{P}}

\providecommand{\bbR}{\mathbb{R}}

\providecommand*{\abs}[1]{\left|{#1}\right|}
\providecommand*{\absnormal}[1]{|{#1}|}
\providecommand*{\N}[1]{\left\|{#1}\right\|} 
\providecommand*{\Nnormal}[1]{\|{#1}\|} 
\providecommand*{\Nbig}[1]{\big\|{#1}\big\|} 

\providecommand*{\abs}[1]{\left|{#1}\right|} 

\usepackage{titletoc}

\makeatletter
\newcommand{\startnewtoc}{%
  \immediate\write\@auxout{\string\@writefile{toc}{\string\contentsline{section}{}{}}}%
}
\makeatother

\DeclareFontEncoding{LS1}{}{}
\DeclareFontSubstitution{LS1}{stix2}{m}{n}
\DeclareMathAlphabet{\mathscr}{LS1}{stix2scr}{m}{n}
\renewcommand{\CB}{\mathscr{B}}

\usepackage{amssymb,amsthm}
\usepackage{amsmath}

\newtheorem{theorem}{Theorem}
\newtheorem{lemma}[theorem]{Lemma}
\newtheorem{corollary}[theorem]{Corollary}

\newtheorem{assumption}[theorem]{Assumption}
\theoremstyle{definition}

\newtheorem{remark}[theorem]{Remark}

\usepackage{xcolor}
\definecolor{OXBLUE}{RGB}{0,33,71}
\definecolor{OXROYAL}{RGB}{29,66,166}

\definecolor{OXGRAY}{RGB}{97, 97, 95}

\definecolor{OXROYAL}{RGB}{29,66,166}\definecolor{OXGREEN}{RGB}{149, 193, 31}
\definecolor{OXORANGE}{RGB}{251, 86, 7}
\definecolor{OXRED}{RGB}{190, 15, 52}

\definecolor{OXGREENtwo}{RGB}{0, 170, 180}

\definecolor{OXMIX}{RGB}{89,130,99}

\usepackage{bm}
\usepackage{bbm}
\usepackage{arydshln}
\usepackage{subcaption}
\usepackage{xcolor}
\usepackage{ifdraft}
\usepackage{scalerel}
\usepackage{mathrsfs}
\usepackage{etoc}
\usepackage{comment}
\usepackage{soul}

\usepackage{tikz}
\usetikzlibrary{decorations.pathreplacing}

\usepackage{latexsym,amsfonts}
\usepackage[utf8]{inputenc}
\usepackage[final=true,protrusion=true,expansion=true]{microtype}
\usepackage[noadjust]{cite}
\usepackage{enumitem}
\usepackage{color}
\usepackage{mathtools}

\usepackage{booktabs}
\usepackage{multirow}

     \PassOptionsToPackage{numbers, compress}{natbib}

\usepackage[preprint]{neurips_2026}

\usepackage[utf8]{inputenc} 
\usepackage[T1]{fontenc}    
\usepackage{hyperref}       
\usepackage{url}            
\usepackage{booktabs}       
\usepackage{amsfonts}       
\usepackage{nicefrac}       
\usepackage{microtype}      
\usepackage{xcolor}         
\usepackage{cleveref}

\title{Convergence Analysis of Newton's Method for\\ Neural Networks in the Overparameterized Limit}

\author{%
  Konstantin Riedl \\
  Mathematical Institute\\
  University of Oxford\\
  \texttt{Konstantin.Riedl@maths.ox.ac.uk} \\
  \And
  Konstantinos Spiliopoulos \\
  Department of Mathematics \& Statistics \\
  Boston University\\
  \texttt{kspiliop@bu.edu}
  \AND
    Justin Sirignano \\
  Mathematical Institute\\
  University of Oxford\\
  \texttt{Justin.Sirignano@maths.ox.ac.uk} \\
}

\begin{document}

\maketitle

\begin{abstract}
A convergence analysis is developed for the regularized Newton method for training neural networks (NNs) in the overparameterized limit. We prove that as the number of hidden units tends to infinity, the NN training dynamics converge in probability to the solution of a deterministic limit equation involving a ``Newton neural tangent kernel'' (NNTK). Explicit rates characterizing this convergence are provided and, in the infinite-width limit, we prove that the NN converges exponentially fast to the target data (i.e., a global minimizer with zero loss). Crucially, we show that this convergence is uniform across the frequency spectrum, addressing the spectral bias inherent in gradient descent. The eigenvalues of the neural tangent kernel for gradient descent accumulate at zero, leading to slow convergence for target data with high-frequency components. In contrast, the NNTK has uniformly lower bounded eigenvalues if the regularization parameter is selected appropriately, allowing Newton's method to converge more quickly for data with high-frequency components. Mathematical challenges that need to be addressed in our analysis include the implicit parameter update of the Newton method with a potentially indefinite Hessian matrix and the fact that the dimension of this linear system of equations tends to infinity as the NN width grows. This substantially complicates deriving the training dynamics in the overparameterized limit as well as proving the convergence of the finite-width dynamics thereto. The analysis identifies a scaling formula for selecting the regularization parameter, which we show can vanish at a suitable rate as the number of hidden units becomes larger. In addition, we prove that, for sufficiently large numbers of hidden units, the regularized Hessian remains positive definite during training and the Newton updates for individual NN parameters converge to zero, demonstrating that the model behaves as a linearization around the initialization.
\end{abstract}

\section{Introduction}\label{S:Introduction}

Although the mean-field and overparameterized asymptotics of gradient descent methods for neural network (NN) training have been widely-studied~\cite{jacot2018neural,chizat2019lazy,sirignano2019scaling,sirignano2020meanlawlarenumbers,sirignano2020meancentrallimit,sirignano2022mean,mei2018mean,chizat2018global,rotskoff2018trainability,chizat2024infinite},
there has been limited mathematical analysis of second-order optimization methods for training NNs~\cite{arbel2023rethinking,cai2019gram,cayci2024gauss,adeoye2024regularized,ishikawa2023parameterization,zhang2019fast,karakida2020understanding,bai2022generalized}.
Specifically, the literature currently lacks
 a convergence analysis of Newton's method for NNs,
despite it being a cornerstone of classical optimization and having been the foundation for the development of numerous popular second-order optimization algorithms \cite{nocedal2006optimization,boyd2004convex,fletcher2001practical,conn2000trust,martens2016second,mishchenko2023regularized,liu1989limited} employed in science and engineering as well as in machine learning.
These include quasi-Newton (like Broyden's method and (L-)BFGS, or the more recent K-FAC~\cite{martens2015optimizing} or Shampoo~\cite{gupta2018shampoo}) optimizers as well as trust-region methods~\cite{conn2000trust},
which are the focus of significant current research efforts regarding their applications in machine learning and AI~\cite{martens2016second,anil2020scalable,vyas2024soap,jordan2024muon,gomes2024adafisher} (although not as widely-used in the past in machine learning as first-order methods).
In particular, second-order optimization methods have had significant recent success in a number of scientific machine learning applications where they have outperformed first-order methods  \cite{bonfanti2024challenges,dangel2024kronecker,kiyani2025optimizing,daryakenari2026representation,jnini2025gauss,jnini2026curvature,muller2023achieving,wang2025discovery}.

In this paper, we develop a convergence analysis for the classical regularized Newton method for training NNs in the overparameterized regime \cite{jacot2018neural,chizat2019lazy,sirignano2019scaling}.
 We consider a fully-connected single-hidden-layer NN~$f_\theta^N:\bbR^d\rightarrow\bbR$ with $N$ neurons of the form
\begin{equation}
    \label{eq:NN}
    f_\theta^N(x)
    = \frac{1}{N^\beta} \sum_{i=1}^N c^i \sigma(w^i \cdot x + \eta^i),
\end{equation}
where the vector $\theta=(\theta^i)_{i=1}^N$ with $\theta^i=(c^i,w^i,\eta^i)$ collects all $N(d+2)$ trainable NN parameters.
For our theoretical results, we require Assumption~\ref{def:NN_assumptions} on the NN \eqref{eq:NN}.
 The factor~$1/N^\beta$ in \eqref{eq:NN} with $\beta\in(1/2,1)$ is a normalization/scaling~\cite{spiliopoulos2025mathematical,sirignano2019scaling,sirignano2023pde,riedl2025global}.
 The NN parameters~$\theta$ are trained to minimize the loss~$L^N$ on the dataset $\CD = (x^m,y^m)_{m=1}^M \subset \bbR^d\times\bbR$,
given as
\begin{equation}
    \label{eq:loss}
    L^N(\theta)
    = \frac{1}{2M}\sum_{m=1}^M \left( y^m - f^N_\theta(x^m) \right)^2
    = \frac{1}{2M} \N{\y-f_{\theta}^N(\x)}_2^2,
\end{equation}
where $f_{\theta}^N(\x)$ and $\y$ denote the vectorized NN output for all $M$ data samples and the vector of all $M$ labels, respectively.
To find the optimal NN parameters for \eqref{eq:loss},
we employ the regularized Newton method with step size $\alpha\in\left(0,2/\!\left(\lambda_{\max}(\CB^*_0)+\lambda_{\min}(\CB^*_0)\right)\right)\supset(0,1]$,
 where $\CB^*_0$ denotes the Newton neural tangent kernel (NNTK) in the overparameterized limit defined in \eqref{eq:NNTK*}.
That is,
\begin{equation}
    \label{eq:regularizedNewton}
    \theta_{k+1}
    = \theta_k + \alpha z^N_k
    \quad\text{ with }\quad
    \theta_0 \sim \mu_0,
\end{equation}
where the Newton update~$z^N_k$ is the solution of the linear system
\begin{equation}
    \label{eq:regularizedNewtonStep}
    \left(\gamma^N \Id_{N(d+2)} + \nabla^2_\theta L^N(\theta_k)\right) z^N_k
    = -\nabla_\theta L^N(\theta_k)
\end{equation}
with regularizer $\gamma^N$ scaling as $\gamma^N = \frac{\gamma}{N^{2\beta-1}}$, where the regularizer parameter~$\gamma$ is assumed to be strictly positive ($\gamma>0$) throughout the manuscript.

\paragraph{Contributions.}
We develop a convergence analysis for the regularized Newton method for training NNs in the overparameterized limit (i.e., as $N\rightarrow\infty$) for \eqref{eq:NN}.
\mbox{Our main contributions are:}
\begin{itemize}[label=\large \textbullet,labelsep=8pt,leftmargin=25pt,topsep=-4pt,itemsep=-1pt]
    \item \emph{NNTK.}
    We derive the NNTK~\eqref{eq:NNTK}
    governing the training evolution~\eqref{eq:NNfunction_evolution_prelimit} of the NN~\eqref{eq:NN} when its parameters are trained with the regularized Newton method~\eqref{eq:regularizedNewton}--\eqref{eq:regularizedNewtonStep}.
    This kernel is random at initialization, varies during training, and involves the implicit solution of a linear system of equations whose dimensionality tends to infinity as the NN width $N\rightarrow\infty$ (see Section~\ref{subsec:reduction}).
    \item \emph{Infinite-width limit.}
    As the width $N\rightarrow\infty$, we rigorously show that the random, time-varying NNTK converges in probability (with explicit rates) to a deterministic limit~\eqref{eq:NNTK*} that remains constant during training.
    This allows us to prove that the finite-width discrete-time NN training trajectory~\eqref{eq:NNfunction_evolution_prelimit} converges in probability to the solution of an ``overparameterized'' deterministic limit equation~\eqref{eq:NNfunction_evolution_limit} as $N \rightarrow \infty$ (see Theorem~\ref{thm:NTK_limit}).
    A key step is to show that the Newton updates for individual NN parameters vanish in the infinite-width regime (see Lemma~\ref{prop:NTK_limit_bounds_z}).
    \item \emph{Global convergence in the overparameterized limit.}
    In this infinite-width limit, we prove that the NN function output~\eqref{eq:NNfunction_evolution_limit} converges exponentially fast to the target data (i.e., a global minimizer with zero loss) as the number of training steps $k\rightarrow\infty$ (see Theorem~\ref{thm:convergence}).
    Crucially and in contrast to gradient descent, we show that this convergence is uniform across the frequency spectrum, provided a sufficiently small regularizer parameter~$\gamma$,
    thereby provably mitigating the spectral bias inherent in gradient descent.
    This is attributed to better conditioning of the NNTK compared to its gradient descent counterpart (see Remark~\ref{rem:NNTKB_conv}). The unregularized Newton method ($\gamma=0$) converges in the infinite-width limit in one training step (see Remark~\ref{rem:conv_one_step}).
    \item \emph{Probabilistic  convergence for finite-width NNs and a finite number of training steps.}
    Combining the previous two results allows us to bridge
    the global convergence results to the finite-width regime,
    resulting in a high-probability convergence result in terms of the NN width and the number of Newton steps used during training (see Theorem~\ref{thm:convergence_finite}).
    \item \emph{Choice of the hyperparameters.} As a by-product of our mathematical analysis, we identify a scaling formula for selecting the regularization parameter~$\gamma^N$, which we show can vanish at a suitable rate as $N\rightarrow\infty$.
    Our analysis further allows for the practicable step size $\alpha=1$ in \eqref{eq:regularizedNewton}.
    \item \emph{Mathematical challenges.} The key challenges arise from the implicit parameter update~\eqref{eq:regularizedNewtonStep} of the Newton method with a potentially indefinite Hessian matrix and the fact that the dimension of this linear system of equations tends to infinity as the NN width grows.
    To prove that, for sufficiently large numbers of hidden units, the regularized Hessian in \eqref{eq:regularizedNewtonStep} remains positive definite during training,
    we exploit the block-diagonal plus low-rank structure of the regularized Newton system~\eqref{eq:regularizedNewtonStep} together with the push-through identity.
\end{itemize}

Before moving onto discussing the outline of this manuscript, we highlight a connection to trust-region methods.
Since trust-region optimization is mathematically equivalent to Newton's method for an appropriate regularization parameter, being the dual and primal perspectives,
our results provide theoretical guarantees for second-order trust-region methods. The Newton update
$z^N_k$ as defined in \eqref{eq:regularizedNewtonStep} is the solution to
$\min_{z:\N{z}_2 \leq \Delta_k} L^N(\theta_k) + \nabla_\theta L^N(\theta_k)^\top z + \frac{1}{2} z^\top \nabla^2_\theta L^N(\theta_k) z$,
if the trust-region radius $\Delta_k$ is chosen such that $\Delta_k = \Nnormal{z^N_k}_2$.
The regularization parameter $\gamma^N$ acts as the Lagrange multiplier associated with the constraint $\N{z} \leq \Delta_k$.
Consequently, for any $\gamma^N > 0$, there exists a corresponding trust-region radius $\Delta_k$ yielding the same update $z^N_k$,
identifying the regularized Newton method as a form of a trust-region algorithm.

\paragraph{Organization of the paper.}
After discussing related works in what follows,
we present in Section~\ref{sec:main} our convergence analysis for the regularized Newton method for training NNs in the overparameterized limit.
We first state in Section~\ref{subsec:assumptions} the mathematical assumptions that hold throughout this analysis,
and then derive in Section~\ref{subsec:reduction} the evolution of the NN function~\eqref{eq:NN} when the NN parameters are trained with Newton's method as in \eqref{eq:regularizedNewton}--\eqref{eq:regularizedNewtonStep}, in particular identifying the NNTK.
In Section~\ref{subsec:convergence}, we study the global convergence of the NN function in the infinite-width limit to the target data.
Section~\ref{subsec:NTK_limit} proves that the finite-width NN converges to the overparameterized limit as the number of hidden units $N \rightarrow \infty$. Section~\ref{subsec:convergence_final} presents a probabilistic convergence result for finite-width NNs and a finite number of training steps.
Proofs for these results can be found in Appendix~\ref{app:proofs_main}.
Numerical illustrations support our theoretical findings.
Section~\ref{sec:conclusions} concludes the paper.

\paragraph{Related works.}
Second-order optimization methods have been studied extensively in the classical optimization literature, see \cite{nocedal2006optimization,boyd2004convex,fletcher2001practical,conn2000trust,martens2016second} and the references therein. Our literature review below focuses on papers closely related to the objective of this paper,
which is the analysis of second-order methods for training NNs in the overparameterized regime.
 While our paper is the first to rigorously study the convergence properties and establish assumptions under which global convergence holds  for the regularized Newton method when applied to shallow neural networks,
similar questions have been previously explored for different second-order optimization methods.
 Studying the regularized Newton method poses unique mathematical challenges inherent to its implicit parameter update with a potentially
indefinite Hessian matrix,
demanding the development of new techniques that did not exist before in the literature.
We discuss in Remark~\ref{R:MathContribution} how we overcome these difficulties in the paper.

\textit{Gauss-Newton method.}
The authors of \cite{arbel2023rethinking} study the global convergence of the Gauss-Newton (GN) method when optimizing overparameterized one-hidden-layer NNs in the mean-field regime, i.e., in the case where $\beta=1$ in \eqref{eq:NN}.
The kernel regime, i.e., the case where $\beta=1/2$, is studied for the GN method in \cite{cai2019gram},
where the authors propose and study the Gram-Gauss-Newton method for regression problems with mean squared error. Specifically, they solve the kernel regression w.r.t.\@ the NTK at each step of the optimization.
\cite{cayci2024gauss} investigates the convergence behavior of the GN method in both the overparameterized and underparameterized regime leveraging the Levenberg-Marquardt dynamics and using tools from Riemannian optimization, respectively.
In particular, it is shown that the convergence is independent from the smallest eigenvalue of the NTK matrix.
 A generalized GN method, which incorporates curvature estimates to resemble Newton's method more closely, is studied when optimizing one-hidden-layer NNs with explicit regularization in \cite{adeoye2024regularized}.
In \cite{ishikawa2023parameterization}, the authors identify a specific parameterization for second-order optimization enabling a stable feature learning
even if the network width increases significantly. They study the GN method as well as other  second-order optimization algorithms like, K-FAC~\cite{martens2015optimizing} and Shampoo~\cite{gupta2018shampoo}.

\textit{Natural gradient descent.}
The authors of \cite{zhang2019fast} investigate the convergence of natural gradient descent (NGD)
on one-hidden-layer NNs with mean squared error loss in the kernel regime.
To ensure linear convergence, they identify stability conditions on the Jacobian, which hold for randomly initialized NNs with high probability.
The paper further covers an analogous convergence analysis for K-FAC.
In \cite{karakida2020understanding},
the convergence of vanilla NGD as well as NGD with approximated Fisher information matrix is investigated on deep NNs in the infinite-width limit with mean squared error loss via the neural tangent kernel.
The authors in particular observe that the training dynamics are independent of the NTK matrix.
\cite{bai2022generalized} investigates NGD from a more geometric perspective by proving that a generalized form of the NTK, called the generalized tangent kernel, unifies NGD and standard gradient descent.

Let us emphasize that the aforementioned works focus on methods which do not face the challenges of an implicit parameter update resulting from an infinite-dimensional linear system with a potentially indefinite Hessian. This distinction is central to the mathematical techniques developed in this paper.

\paragraph{Notation.}
The residual is denoted by $r^N_\theta(\x)=\y-f^N_\theta(\x)$ with $r^N_\theta(x^m)=y^m-f^N_\theta(x^m)$ being its $m$-th entry.
For a generic data sample~$x$, we introduce the variable $s^N_\theta(x) = \left(s^N_{\theta^1}(x),\dots,s^N_{\theta^N}(x)\right)\in\bbR^{1\times N(d+2)}$ where $s^N_{\theta^i}(x) = \left(\sigma(w^i \cdot x+\eta^i), c^i\sigma'(w^i \cdot x+\eta^i)x^\top,c^i\sigma'(w^i \cdot x+\eta^i)\right)\in\bbR^{1\times (d+2)}$.
The matrices $s^N_\theta\in\bbR^{M\times N(d+2)}$ and $s^N_{\theta^i}\in\bbR^{M\times (d+2)}$ are defined as $(s^N_\theta)_{m,:} =s^N_\theta(x^m)$ and $(s^N_{\theta^i})_{m,:}= s^N_{\theta^i}(x^m)$, respectively.
Furthermore,
we define the matrix $h_{\theta^i}^N(x)\in\bbR^{(d+2)\times(d+2)}$ as
\begin{equation}
    \label{eq:h}
    h_{\theta^i}^N(x)=
        \left(\begin{smallmatrix}0 & \sigma'(w^i \cdot x + \eta^i)x^\top & \sigma'(w^i \cdot x + \eta^i) \\
        \sigma'(w^i \cdot x + \eta^i)x & c^i\sigma''(w^i \cdot x + \eta^i)xx^\top & c^i\sigma''(w^i \cdot x + \eta^i)x \\
        \sigma'(w^i \cdot x + \eta^i) & c^i\sigma''(w^i \cdot x + \eta^i)x^\top & c^i\sigma''(w^i \cdot x + \eta^i) \end{smallmatrix}\right).
\end{equation}

\section{Convergence analysis of Newton's method in the infinite-width limit}
\label{sec:main}

This section presents our main theoretical contributions.
Under the assumptions given in Section~\ref{subsec:assumptions}, we prove that NNs trained with the regularized Newton method converge to a deterministic limit equation as the number of hidden units $N \rightarrow \infty$ and, as the number of training steps $k \rightarrow \infty$, the NN converges to a global minimizer. We analyze the convergence rate of the method in the infinite-width limit and compare it against the convergence rate of gradient descent.
 Section~\ref{subsec:reduction} derives the evolution of the NN function~\eqref{eq:NN} during training when its parameters are updated with Newton's method as in \eqref{eq:regularizedNewton}--\eqref{eq:regularizedNewtonStep}. The training dynamics are governed by a finite-width NNTK~\eqref{eq:NNTK}, which is random at initialization and varies during training.
However, as $N\rightarrow\infty$, the NNTK converges to a deterministic limit NNTK~\eqref{eq:NNTK*}, which remains constant during training (as we rigorously show in Section~\ref{subsec:NTK_limit}).
In this infinite-width limit, we establish global convergence to the target data $\y$ (i.e., a global minimizer with zero loss) as $k\rightarrow\infty$, a result we discuss in Section~\ref{subsec:convergence}.
Crucially, by analyzing the eigenvalues of the NNTK~\eqref{eq:NNTK*}, we demonstrate in Remark~\ref{rem:NNTKB_conv} that this convergence is uniform across the frequency spectrum.
The subsequent Section~\ref{subsec:NTK_limit} then proves the convergence in probability of the discrete training trajectory~$(f_{\theta_{k}}^N(\x))_{k=0,\dots,K}$ to the infinite-width limit with explicit rates as $N \rightarrow \infty$.
This analysis identifies in particular the correct scaling formula for selecting the regularizer~$\gamma^N$, which we further show can vanish
as $N\rightarrow\infty$ (see Theorem~\ref{thm:NTK_limit}).
This yields a probabilistic convergence result for finite-width NNs~\eqref{eq:NN} trained with a finite number of Newton steps~\eqref{eq:regularizedNewton}--\eqref{eq:regularizedNewtonStep}  in Section~\ref{subsec:convergence_final}.

\subsection{Assumptions}
\label{subsec:assumptions}

The assumptions required for the convergence results in this paper are stated below, with assumptions related to the NN~\eqref{eq:NN} listed in Assumption~\ref{def:NN_assumptions} and the ones on the training data listed in Assumption~\ref{def:data_assumptions}.

\begin{assumption}[NN~$f_\theta^N$ in \eqref{eq:NN}]\label{def:NN_assumptions}
	The NN is such that
	\begin{enumerate}[label=A\arabic*,labelsep=10pt,leftmargin=30pt,topsep=-3pt,itemsep=0pt,parsep=0.5ex]
        \item\label{asm:NN_nonlinearity} the activation function~$\sigma$
        \begin{enumerate}[label=(\roman*),labelsep=10pt,leftmargin=30pt,topsep=0pt,itemsep=0pt,parsep=0ex]
            \item\label{asm:NN_sigma} is non-polynomial, bounded (i.e., $\abs{\sigma}\leq C_\sigma$), and $L_\sigma$-Lipschitz continuous,
            \item\label{asm:NN_sigma'} has a derivative $\sigma'$ which is bounded (i.e., $\abs{\sigma'} \leq C_{\sigma'}$) and $L_{\sigma'}$-Lipschitz continuous,
            \item\label{asm:NN_sigma''} has a second derivative $\sigma''$ which is bounded (i.e., $\abs{\sigma''} \leq C_{\sigma''}$),
        \end{enumerate}
        \item\label{asm:NN_mu0} the randomly initialized NN parameters~$\theta_0^i = (c^i_0,w^i_0,\eta^i_0)$ are i.i.d.\@ and drawn from a distribution $\mu_0\in\CP(\bbR\times\bbR^{d}\times\bbR)$
        which is such that
        \begin{enumerate}[label=(\roman*),labelsep=10pt,leftmargin=30pt,topsep=0pt,itemsep=0pt]
            \item $c^i_0$ is independent from $(w^i_0,\eta^i_0)$, \label{asm:NN_mu0i}
            \item the marginal distribution $\mu_{0,c}$ of $c^i_0$ is mean-zero and compactly supported,  \label{asm:NN_mu0ii}
            \item the marginal distribution $\mu_{0,(w,\eta)}$ of $(w^i_0,\eta^i_0)$ assigns positive probability to every set with positive Lebesgue measure. \label{asm:NN_mu0iv}
        \end{enumerate}
    \end{enumerate}
\end{assumption}

Assumption~\ref{asm:NN_nonlinearity} specifies the activation function $\sigma$, ensuring that it is discriminatory~\cite{cybenko1989approximation,hornik1991approximation,ito1996nonlinearity} and regular; $\tanh$ or sigmoid~$\sigma(z)=\frac{1}{1+e^{-z}}$ functions, for example, satisfy Assumption~\ref{asm:NN_nonlinearity}.
Assumption~\ref{asm:NN_mu0} characterizes the random parameter initialization~$\mu_0$ of the NN~\eqref{eq:NN}.

\begin{assumption}[Training dataset $\CD$]\label{def:data_assumptions}
	The training dataset $\CD = (x^m,y^m)_{m=1}^M \subset \bbR^d\times\bbR$ is such that
	\begin{enumerate}[label=B\arabic*,labelsep=10pt,leftmargin=30pt,topsep=-3pt,itemsep=0pt,parsep=0.5ex]
        \item\label{asm:data} the data samples $(x^m,y^m) \in \bbR^d\times\bbR$ are bounded (i.e., $\N{x^m}_2\leq C_x$ and $\abs{y^m}\leq C_y$),

        \item\label{asm:data_distinct}
        the data samples $x^m$ are in distinct directions, i.e., $x^m\not=0$ for all $m=1,\dots,M$ and the lines $\ell_{x^m}\coloneqq\{x\in\bbR^d:x=\xi x^m ,\xi\in\bbR\}$ meet at the origin only.
    \end{enumerate}
\end{assumption}
Assumption~\ref{asm:data_distinct} requires that the data samples and labels are bounded, and in particular that no two samples~$x^m$ are collinear.
From a learning-theoretic perspective,
such redundancies in the dataset would lead to degeneracies, as collinear inputs carry the same directional information, which may result in ill-conditioning of optimization problems.
In particular, if there are repeated training samples, the NTK matrix degenerates~\cite[Remark~1]{carvalho2025positivity}.
Hence, this is a common requirement in the analysis of NNs ensuring that each data point identifies a unique hyperplane (activation direction), thereby preventing technical degeneracies during optimization,
see also \cite[Definition~19.5]{spiliopoulos2025mathematical}.

\subsection{Evolution of the NN function during training}
\label{subsec:reduction}

Let us first derive how the NN function~$f_{\theta}^N(\x)$ in \eqref{eq:NN} evolves when its NN parameters~$\theta$ are trained with the regularized Newton method \eqref{eq:regularizedNewton}--\eqref{eq:regularizedNewtonStep}. The NN output for a single data sample~$x^m$ can be written with a Taylor series expansion as
\begin{equation}
    \label{eq:NTK_limit:evolution_fk+1_1}
\begin{split}
    f_{\theta_{k+1}}^N(x^m)
    &= f_{\theta_{k}}^N(x^m) + \nabla_\theta f^N_{\theta_k}(x^m) \cdot (\theta_{k+1}-\theta_k) + \frac{1}{2} (\theta_{k+1}-\theta_k)^\top H^N_{\tilde{\theta}_k}(x^m) (\theta_{k+1}-\theta_k)\\
    &= f_{\theta_{k}}^N(x^m) + \alpha \nabla_\theta f^N_{\theta_k}(x^m) \cdot z^N_k + R_k^N(x^m),
\end{split}
\end{equation}
where $\tilde{\theta}_k$ is a point on the line segment $[\theta_{k},\theta_{k+1}]$, $H^N_{\theta}(x)\in\bbR^{N(d+2)\times N(d+2)}$ the Hessian of the NN function~$f_{\theta}^N$ for a data sample~$x$ (see \eqref{eq:Hessian_fx}--\eqref{eq:Hessian_fxi}), and where $R_k^N(x) = \frac{\alpha^2}{2} (z^N_k)^\top H^N_{\tilde{\theta}_k}(x)z^N_k$ is the remainder.
Define $J_{\theta}^N\in\bbR^{M\times N(d+2)}$ as the Jacobian of the NN function~$f_{\theta}^N$  w.r.t.\@ the NN parameters $\theta$ for all data samples~$x^1,\dots,x^M$,
and the submatrix $J_{\theta^i}^N\in\bbR^{M\times (d+2)}$ as the Jacobian w.r.t.\@ the NN parameters $\theta^i$ of the $i$-th neuron (see \eqref{eq:Jacobian_f}--\eqref{eq:Jacobian_fxi}).
Then, denoting by $(z^N_k)^i\in\bbR^{d+2}$ the part of the Newton update~$z^N_k$ in \eqref{eq:regularizedNewtonStep} that updates the NN parameters $\theta^i_k$ of the $i$-th neuron in \eqref{eq:regularizedNewton},
\eqref{eq:NTK_limit:evolution_fk+1_1} can be written for all training data samples in vector-form as
\begin{equation}
    \label{eq:NTK_limit:evolution_fk+1_2}
    \begin{split}
        f_{\theta_{k+1}}^N(\x)
        &= f_{\theta_{k}}^N(\x) + \alpha J_{\theta_k}^N z^N_k + R^N_k(\x)
        = f_{\theta_{k}}^N(\x) + \alpha \sum_{i=1}^N J_{\theta^i_k}^N \left(z^N_k\right)^i + R^N_k(\x).
    \end{split}
\end{equation}

Let us now investigate the structure of the Newton update~$z_k^N$ given by \eqref{eq:regularizedNewtonStep}.
We first notice that the Hessian~$\nabla^2_\theta L^N(\theta)
    = G_\theta^N + S_\theta^N
    = \frac{1}{M}(J_\theta^N)^\top J_\theta^N - \frac{1}{M} \sum_{m=1}^M (y^m - f^N_\theta(x^m)) H_\theta^N(x^m)$
of the loss~$L^N(\theta)$ in \eqref{eq:loss} decomposes (see \eqref{eq:Hessian_loss})
into the sum of the low-rank Gram-Gauss-Newton matrix $G_\theta^N = \frac{1}{M}(J_\theta^N)^\top J_\theta^N$ and the non-convex contribution $S_\theta^N = - \frac{1}{M} \sum_{m=1}^M (y^m - f^N_\theta(x^m)) H_\theta^N(x^m)$,
which is block-diagonal with individual blocks corresponding to different neurons.
This enables us to leverage the Woodbury matrix identity (also known as the push-through identity) to derive
\begin{equation}
    \label{eq:zN_reformulation}
\begin{split}
    z_k^N
    &\!=\! \frac{1}{M}\!\left(D_{\theta_k}^N\right)^{-1}\!\! \left(J_{\theta_k}^N\right)^\top \!\!\zeta^N_k
    \;\;\text{ with }\;\;
    \zeta^N_k \!\coloneqq\! \left(\Id_M \!+\! \frac{1}{M} J_{\theta_k}^N\left(D_{\theta_k}^N\right)^{-1}\!\!\left(J_{\theta_k}^N\right)^\top \right)^{-1}\!\!\left(\y \!-\! f_{\theta_k}^N(\x)\right)\!,
\end{split}
\end{equation}
provided the positive definiteness of the block-diagonal matrix
\begin{equation}
    \label{eq:D}
    D_{\theta_k}^N=\gamma^N \Id_{N(d+2)} + S_{\theta_k}^N,
\end{equation}
which can be assured (see Lemma~\ref{lem:estimate_Dinv_operatornorm}) throughout training for a sufficiently large number of neurons $N$ by satisfying the condition $\gamma \geq C_\gamma \frac{1}{N^{1-\beta}}$, where $C_\gamma=C_\gamma(\alpha,\gamma,\sigma,M,\CD,\mu_0,C_{R,0},K)$ is a constant of the form $C_\gamma=\sqrt{2}C_RC_S(\sigma,\CD,C_c)$ with $C_R=C_R(\alpha,\gamma,\sigma,M,\CD,\mu_0,C_{R,0},K)$ and $C_c=C_c(\alpha,\gamma,\sigma,M,\CD,\mu_0,C_{R,0},K)$ defined as in \eqref{eq:proof:thm:NTK_limit:aprioribounds} and $C_S$ as in Lemma~\ref{lem:estimate_S_operatornorm}, and $C_{R,0} > C_y$.
A detailed proof of the central identity \eqref{eq:zN_reformulation} can be found in Appendix~\ref{sec:proof:eq:zN_reformulation}.
Since $D_{\theta_k}^N$ is block-diagonal with blocks $D_{\theta_k^i}^N$,
also its inverse~$(D_{\theta_k}^N)^{-1}$ is block-diagonal.
Exploiting this, \eqref{eq:zN_reformulation} reads neuron-wise $(z_k^N)^i= \frac{1}{M}(D_{\theta^i_k}^N)^{-1} (J_{\theta^i_k}^N)^\top \zeta^N_k$ for all $i=1\dots,N$.\\
Recalling the choice of the regularizer $\gamma^N=\frac{\gamma}{N^{2\beta-1}}$ and the formulas $J^N_{\theta^i} = \frac{1}{N^\beta}s^N_{\theta^i}$ and $H^N_{\theta^i}(x) = \frac{1}{N^\beta}h^N_{\theta^i}$,
we can rewrite \eqref{eq:NTK_limit:evolution_fk+1_2} as
\begin{equation}
    \label{eq:NTK_limit:evolution_fk+1_3}
    \begin{split}
        f_{\theta_{k+1}}^N(\x)
        &= f_{\theta_{k}}^N(\x) + \alpha \frac{1}{MN}\sum_{i=1}^N s^N_{\theta_k^i} \left(d_{\theta^i_k}^N\right)^{-1} \left(s^N_{\theta_k^i}\right)^\top \zeta_k^N + R^N_k(\x),
    \end{split}
\end{equation}
where $d_{\theta^i}^N = \gamma \Id_{d+2} - \frac{1}{N^{1-\beta}} \frac{1}{M} \sum_{m=1}^M (y^m - f_{\theta}^N(x^m)) h_{\theta^i}^N(x^m)$ and
\begin{equation}
        \label{eq:NTK_limit:evolution_fk+1_3aux}
    \begin{split}
        \zeta_k^N
        &= \left(\Id_M + \frac{1}{MN}\sum_{i=1}^N s^N_{\theta_k^i} \left(d_{\theta^i_k}^N\right)^{-1} \left(s^N_{\theta_k^i}\right)^\top \right)^{-1}\left(\y - f_{\theta_k}^N(\x)\right)\!.
    \end{split}
\end{equation}
Defining the \emph{NNTK} $\CB^N_k\in\bbR^{M\times M}$ as
\begin{equation}
    \label{eq:NNTK}
    \CB^N_k
    = \frac{1}{MN}\sum_{i=1}^N s^N_{\theta_k^i} \left(d_{\theta^i_k}^N\right)^{-1} \left(s^N_{\theta_k^i}\right)^\top\left(\Id_M + \frac{1}{MN}\sum_{i=1}^N s^N_{\theta_k^i} \left(d_{\theta^i_k}^N\right)^{-1} \left(s^N_{\theta_k^i}\right)^\top \right)^{-1},
\end{equation}
the equations \eqref{eq:NTK_limit:evolution_fk+1_3}--\eqref{eq:NTK_limit:evolution_fk+1_3aux} can be re-written as
\begin{equation}
        \label{eq:NNfunction_evolution_prelimit}
    \begin{split}
        f_{\theta_{k+1}}^N(\x)
        &= f_{\theta_{k}}^N(\x) + \alpha \CB^N_k\left(\y - f_{\theta_{k}}^N(\x)\right)  + R^N_k(\x).
    \end{split}
\end{equation}
For a detailed derivation of the above steps, we refer the reader to \eqref{eq:proof:thm:NTK_limit:evolution_fk+1_10}--\eqref{eq:proof:thm:NTK_limit:evolution_fk+1_41rewr} in Appendix~\ref{sec:app:proof_NTK_limit}.

The NNTK~\eqref{eq:NNTK} is random at initialization and varies during training.
Unlike the standard NTK (for gradient descent)~\cite{jacot2018neural,chizat2018global,sirignano2019scaling},
it involves the implicit solution of a linear system of equations whose dimensionality tends to infinity as the NN width $N\rightarrow\infty$.
This substantially complicates deriving the limit equation for the training dynamics \eqref{eq:NNfunction_evolution_prelimit} in the overparameterized limit and, in particular, showing the convergence of \eqref{eq:NNfunction_evolution_prelimit} to it.
In Section~\ref{subsec:NTK_limit}, however,
we will rigorously prove that \eqref{eq:NNfunction_evolution_prelimit} indeed converges to a deterministic limit equation as the number of neurons $N \rightarrow \infty$.
We will further observe that, similarly to the classical NTK emerging for NNs trained with gradient descent,
the NNTK converges to the deterministic limit NNTK \eqref{eq:NNTK*}.

\begin{remark}[Mathematical contributions]\label{R:MathContribution}
    At the core, our mathematical arguments rely on ensuring that the regularized Hessian~$\gamma^N\Id_{N(d+2)}+\nabla^2_\theta L^N(\theta_k)$ remains positive definite during training,
    which is, due to $G_{\theta_k}^N$ being positive semi-definite, equivalent to the block-diagonal matrix $D_{\theta_k}^N$ being positive definite.
    Using the scaling of $\gamma^N$,
    we can prove that this is the case for NN widths large enough such that $\gamma \geq C_\gamma \frac{1}{N^{1-\beta}}$ (see Lemma~\ref{lem:estimate_Dinv_operatornorm}).
    Using the closed-form formula \eqref{eq:zN_reformulation} for the Newton step $z_k^N$ in
    \eqref{eq:regularizedNewton},
    we can then prove that the increment $z_k^N$ vanishes at rate $\CO(\frac{1}{N^{1-\beta}})$ (see Lemma~\ref{prop:NTK_limit_bounds_z}).
    This then enables us to show that $\CB^N_k$ converges to a deterministic limit as $N\rightarrow\infty$,
    and in particular that \eqref{eq:NNfunction_evolution_prelimit} converges to a deterministic limit equation of the same form (see Theorem~\ref{thm:NTK_limit}).
\end{remark}

\subsection{Global convergence in the overparameterized NN limit}
\label{subsec:convergence}

In the infinite-width limit,
the NNTK~$\CB^N_k$ becomes,
as we rigorously show in Section~\ref{subsec:NTK_limit},
deterministic and converges to the
\textit{limit NNTK} $\CB^*_0\in\bbR^{M\times M}$ defined as
\begin{equation}
    \label{eq:NNTK*}
    \CB^*_0
    =\frac{1}{\gamma}B^*_0\left(\Id_M + \frac{1}{\gamma}B^*_0 \right)^{-1}
    =B^*_0\left(\gamma \Id_M + B^*_0 \right)^{-1},
\end{equation}
where $B^*_0\in\bbR^{M\times M}$ denotes the limit standard NTK (for gradient descent) with elements
\begin{equation}
    \label{eq:standardNTK}
\begin{split}
    \left(B^*_0\right)_{mn}
    &= \frac{1}{M} \int \sigma(w \cdot x^m+\eta)\sigma(w \cdot x^n+\eta)\\
    &\qquad\quad\, + c^2\sigma'(w \cdot x^m+\eta)\sigma'(w \cdot x^n+\eta) \left(x^m\cdot x^n+1\right) d\mu_0(c,w,\eta),
\end{split}
\end{equation}
which is positive definite due to Assumptions~\ref{def:NN_assumptions} and \ref{def:data_assumptions} \cite{jacot2018neural,sirignano2019scaling,spiliopoulos2025mathematical} (see Lemma~\ref{lem:standardNTKB:spectrum}),
thereby also ensuring the positive definiteness of the NNTK (see Lemma~\ref{lem:secondorderNTKB:spectrum}).
We observe that the limit NNTK $\CB^*_0$ in \eqref{eq:NNTK*} remains constant during training,
which is similar to the overparameterized training phenomenon \cite{jacot2018neural,chizat2018global,sirignano2019scaling} observed for certain scalings in \eqref{eq:NN} when trained with gradient descent.
However, due to the use of a second-order optimization algorithm, the structure of the NNTK differs, requiring the development of new mathematical methods to analyze the convergence to the overparameterized limit.

As we will show in Section~\ref{subsec:NTK_limit},
the discrete training trajectory~$(f_{\theta_{k}}^N(\x))_{k=0,\dots,K}$ of \eqref{eq:NNfunction_evolution_prelimit}
converges with high probability
to the solution~$(f_{k}^*(\x))_{k=0,\dots,K}$ of the deterministic limit equation
\begin{equation}
    \label{eq:NNfunction_evolution_limit}
\begin{split}
    f_{k+1}^*(\x)
    = f_{k}^*(\x) + \alpha \CB^*_0\left(\y - f_k^*(\x)\right)
    \quad\text{ with }\quad
    f_{0}^*(\x) = 0.
\end{split}
\end{equation}
Specifically, Theorem~\ref{thm:NTK_limit} proves that, with high probability, $\max_{k=0,\dots,K} \N{f_{\theta_{k}}^N(\x) \!-\! f_{k}^*(\x)}_2 \rightarrow0$ as $N\rightarrow\infty$.
In the remainder of this section,
we hence investigate the convergence to the target data~$\y$ in the infinite-width limit
as the number of Newton training steps~$k \rightarrow \infty$.
For notational convenience,
\begin{equation}
    \label{eq:NNfunction_evolution_split1}
\begin{split}
    f_{k+1}^*(\x)
    = f_{k}^*(\x) + \alpha \frac{1}{\gamma}B^*_0 \zeta_k^*
    \quad\text{ with }\quad
    \zeta_k^*
    = \left(\Id_M + \frac{1}{\gamma}B^*_0 \right)^{-1}\left(\y - f_k^*(\x)\right)\!.
\end{split}
\end{equation}
In this large neuron limit, we show in Theorem~\ref{thm:convergence} below that, as the number of training steps~$k$ becomes large, the dynamics~\eqref{eq:NNfunction_evolution_limit} converges exponentially fast to zero loss, i.e., a global minimizer of the loss
\begin{equation}
    \label{eq:loss_limit}
    L^*
    = \frac{1}{2M}\N{\y-f^*(\x)}_2^2.
\end{equation}
The proof is given in Appendix~\ref{sec:app:thm:convergence}.
This shows convergence of the NN function to the target data $\y$ (i.e., a global minimizer) as $k\rightarrow\infty$.
Notably, unlike in the convergence analysis of gradient descent,
the convergence rate is not affected by a poorly-conditioned limit NTK,
but instead is uniform across all frequencies provided a sufficiently small regularizer parameter~$\gamma$ (see Remark~\ref{rem:NNTKB_conv}).

\begin{theorem}[Global convergence to zero loss in the infinite-width limit]
    \label{thm:convergence}
    Let the learning rate $\alpha\in\left(0,2/\!\left(\lambda_{\max}(\CB^*_0)+\lambda_{\min}(\CB^*_0)\right)\right)$.
    Then, for all training steps $k \ge 0$,
    \begin{equation}
        \label{eq:thm:convergence}
        \N{f_k^*(\x) - \y}_2 \leq \left(1 - \alpha \lambda_{\min}(\CB_0^*)\right)^k \N{\y}_2\!.
    \end{equation}
    In particular, $f_k^*(\x) \rightarrow \y$ linearly with rate $1 - \alpha \lambda_{\min}(\CB_0^*)$ as $k \rightarrow \infty$. The loss~$L^*_k$ defined in \eqref{eq:loss_limit} decays exponentially fast (hence, converges linearly) with rate $\left(1 - \alpha \lambda_{\min}(\CB_0^*)\right)^2$.
\end{theorem}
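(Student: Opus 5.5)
The plan is to recast the limit recursion \eqref{eq:NNfunction_evolution_limit} as a linear iteration for the residual and then bound the spectrum of the iteration matrix. Set $e_k \coloneqq f_k^*(\x) - \y$. Subtracting $\y$ from both sides of \eqref{eq:NNfunction_evolution_limit} gives $e_{k+1} = (\Id_M - \alpha \CB_0^*) e_k$. Since $f_0^*(\x)=0$, the initial residual is $e_0=-\y$. Hence $e_k = (\Id_M - \alpha\CB_0^*)^k e_0$, and it suffices to show $\N{\Id_M - \alpha\CB_0^*}_2 \le 1-\alpha\lambda_{\min}(\CB_0^*)$.

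Next I would establish the spectral structure of $\CB_0^*$. The standard NTK $B_0^*$ in \eqref{eq:standardNTK} is symmetric by inspection and positive definite by Lemma~\ref{lem:standardNTKB:spectrum}. Write $B_0^* = U\Lambda U^\top$ with $U$ orthogonal and $\Lambda=\mathrm{diag}(\lambda_1,\dots,\lambda_M)$, $\lambda_m>0$. Then $\gamma\Id_M + B_0^*$ is diagonalized by the same $U$, so
\[
\CB_0^* = U\,\mathrm{diag}\!\left(\tfrac{\lambda_m}{\gamma+\lambda_m}\right)U^\top .
\]
Thus $\CB_0^*$ is symmetric, and its eigenvalues lie in $(0,1)$ because $\gamma>0$. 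This is also the content of Lemma~\ref{lem:secondorderNTKB:spectrum}, which I may cite instead. Symmetry is the point to verify carefully, since \eqref{eq:NNTK*} is written as a product of two matrices; it holds because the two factors commute.

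With symmetry in hand, $\Id_M-\alpha\CB_0^*$ is symmetric with eigenvalues $1-\alpha\mu$ for $\mu \in [\lambda_{\min}(\CB_0^*),\lambda_{\max}(\CB_0^*)]$. Its operator norm is therefore
\[
\max\bigl\{|1-\alpha\lambda_{\min}(\CB_0^*)|,\ |1-\alpha\lambda_{\max}(\CB_0^*)|\bigr\}.
\]
The step-size restriction $\alpha<2/(\lambda_{\max}+\lambda_{\min})$ is exactly what controls this maximum:
\begin{itemize}
\item It gives $\alpha\lambda_{\min}<1$, so $1-\alpha\lambda_{\min}\in(0,1)$.
\item It gives $\alpha\lambda_{\max}-1 < 1-\alpha\lambda_{\min}$, so $|1-\alpha\lambda_{\max}|\le 1-\alpha\lambda_{\min}$ whether $1-\alpha\lambda_{\max}$ is positive or negative.
\end{itemize}
Hence $\N{e_k}_2 \le (1-\alpha\lambda_{\min}(\CB_0^*))^k\N{\y}_2$, which is \eqref{eq:thm:convergence}. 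The contraction factor is strictly less than one, which gives linear convergence of $f_k^*(\x)\to\y$.

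For the loss, \eqref{eq:loss_limit} gives $L_k^* = \frac{1}{2M}\N{e_k}_2^2 \le (1-\alpha\lambda_{\min}(\CB_0^*))^{2k}\frac{1}{2M}\N{\y}_2^2$, which is the claimed rate $(1-\alpha\lambda_{\min}(\CB_0^*))^2$. The only genuine obstacle is the spectral step, that is, the symmetry and positivity of $\CB_0^*$. Everything else is the standard analysis of a Richardson iteration.
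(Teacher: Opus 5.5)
Your proposal is correct and follows essentially the same route as the paper. Both derive the residual recursion with iteration matrix $\Id_M-\alpha\CB^*_0$, bound $\N{\Id_M-\alpha\CB^*_0}_{\mathrm{op}}$ by $\max\{|1-\alpha\lambda_{\min}(\CB^*_0)|,|1-\alpha\lambda_{\max}(\CB^*_0)|\}$ using Lemma~\ref{lem:secondorderNTKB:spectrum}, and use the step-size condition to show the first term dominates. You are slightly more careful than the paper, which uses only implicitly both the symmetry of $\CB^*_0$ (needed to equate the operator norm with the spectral radius) and the positivity of $1-\alpha\lambda_{\min}(\CB^*_0)$ (needed to drop the absolute value in the final bound).
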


\begin{remark}[Convergence rate and spectra of $\CB^*_0$]
    \label{rem:NNTKB_conv}
    Theorem~\ref{thm:convergence} proves the linear convergence (i.e., exponential decay of the error) of the Newton method in the infinite-width limit.
    As can be seen from \eqref{eq:thm:convergence},
    the rate is determined by the smallest eigenvalue~$\lambda_{\min}(\CB^*_0)$ of the NNTK $\CB^*_0$ \eqref{eq:NNTK*}.
    This is analogous to the convergence behavior in case of first-order methods, where the rate is determined by the smallest eigenvalue~$\lambda_{\min}(B^*_0)$ of the standard NTK. In contrast to first-order methods where high-frequency patterns associated with small eigenvalues of the NTK~$B^*_0$ are learned slowly while low-frequency patterns (corresponding to large eigenvalues) are easy to learn, the convergence rate achieved for the Newton's method does not suffer from such spectral bias, as visualized in Figure~\ref{fig:Remark4_Eigenvalues}.
    The eigenvalues of the limit NNTK are $\lambda_{m}(\CB^*_0) = \frac{\lambda_{m}(B^*_0)}{\gamma+\lambda_{m}(B^*_0)}$, where $\lambda_{m}(B^*_0)$ denotes the $m$-th eigenvalue of the standard limit NTK $B^*_0$ for first-order methods (see Lemma~\ref{lem:secondorderNTKB:spectrum}).
    Typically, lower bounds on the smallest eigenvalue of $B^*_0$ depend on a measure of distance between data points \cite{karhadkar2024bounds}, e.g., $\min_{m\not=n} \inf_{\xi\in\bbR}\left\{\N{x^m-\xi x^n}_2\right\}$, which quantifies Assumption~\ref{asm:data_distinct}.
    As the number of data samples $M$ becomes larger, the smallest eigenvalue $\lambda_{\min}(B^*_0)$ becomes small \cite{cao2019towards,bietti2019inductive}.
    This leads to poor conditioning and slow convergence of first-order methods. The eigenvalues of the NNTK $\CB^*_0$, on the other hand, remain of order one if we let the regularization parameter $\gamma\rightarrow0$ as $M\rightarrow\infty$. In the infinite-width limit, one can in principle even choose $\gamma=0$, since $B^*_0$ is positive definite, which would lead to all eigenvalues of $\CB^*_0$ being one.
    We discuss this case in more detail in Remark~\ref{rem:conv_one_step}.
    For finite-width NNs,
    $\gamma>0$ can be chosen arbitrarily small, however this comes at the cost of requiring the number $N$ of neurons being sufficiently large in order for \eqref{eq:thm:asm:Nlargeenough} to be satisfied, which ensures
    the solvability of the Newton update~\eqref{eq:regularizedNewtonStep}.
    This inherent normalization of the spectrum results in favorable and reliable convergence of Newton's method for target functions with high-frequency components.
\end{remark}

    \begin{figure}[ht]
    \centering
    \hphantom{--}%
    \begin{subfigure}{0.43\textwidth}
        \centering
        \includegraphics[width=\linewidth,trim=0ex 3ex 0ex 3ex]{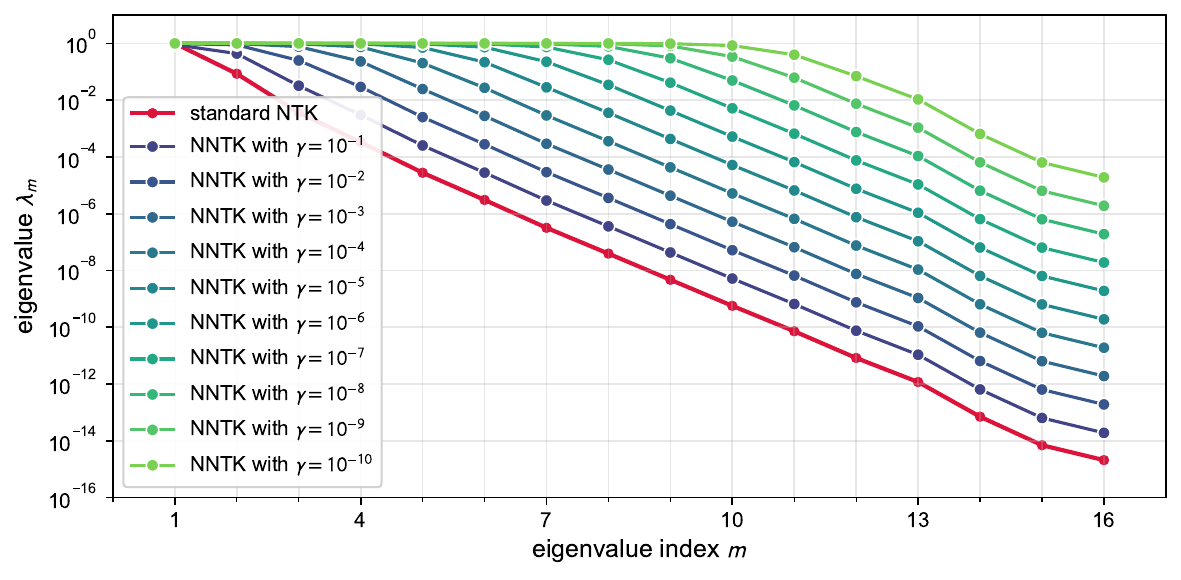}
        \caption{\footnotesize Eigenvalues of standard NTK $B^*_0$ and NNTK~$\CB^*_0$ for different values of the regularizer parameter~$\gamma$.}
        \label{fig:Remark4_Eigenvalues}
    \end{subfigure}
    \hfill
    \begin{subfigure}{0.43\textwidth}
        \centering
        \includegraphics[width=\linewidth,trim=0ex 3ex 0ex 3ex]{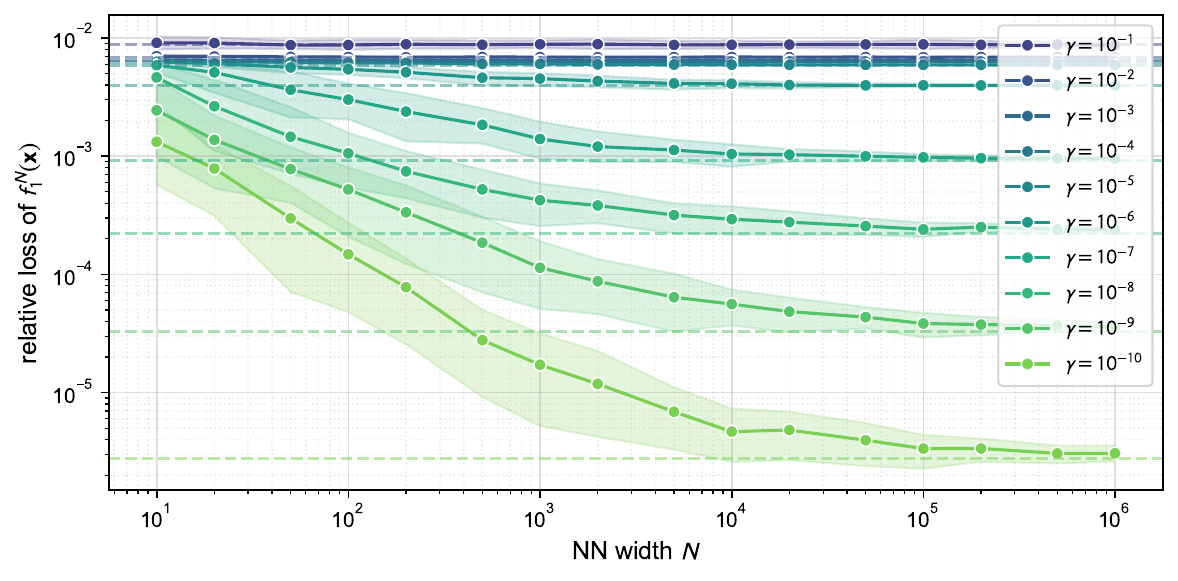}
        \caption{\footnotesize Relative loss $\nicefrac{L^N(\theta_1)}{\|\mathbf{y}\|^2}$ of NNs of width $N$ after a single Newton step with regularizer parameter~$\gamma$.}
        \label{fig:Remark5_OneStepLoss}
    \end{subfigure}%
    \hphantom{--}%
    \caption{Empirical validation of Remarks~\ref{rem:NNTKB_conv} and~\ref{rem:conv_one_step} for a shallow $\tanh$ NN~\eqref{eq:NN} with $\beta=0.52$ on $M=16$ samples of $y=2x + 0.4\sin(5\pi x)$. (Results in (b) are averaged across $100$ runs.)}
    \label{fig:1}
    \end{figure}

\begin{remark}[One-step convergence of the unregularized Newton method in the infinite-width limit]
    \label{rem:conv_one_step}
    In the limit $N\rightarrow\infty$,
    the optimization problem becomes quadratic due to the NN behaving like a linear model around its initialization as can be seen from \eqref{eq:NNfunction_evolution_limit}.
    We therefore expect the learning problem to be solvable with Newton's method without regularization in a single step.
    Indeed, for $\gamma=0$ and with step size $\alpha=1$, we observe in \eqref{eq:NNTK*} that $\CB^*_0=\Id_M$,
    confirming that only one Newton step is necessary, since
    $f_{1}^*(\x)
    = f_{0}^*(\x) + \left(\y - f_0^*(\x)\right)
    = \y$ in \eqref{eq:NNfunction_evolution_limit}.
    Figure~\ref{fig:Remark5_OneStepLoss} confirms this empirically.
\end{remark}

\subsection{Convergence of the finite-width NN to its overparameterized limit as \texorpdfstring{$N \rightarrow \infty$}{the NN width tends to infinity}}
\label{subsec:NTK_limit}

The previous section analyzed the convergence properties of the training dynamics \eqref{eq:NNfunction_evolution_limit} of the NN function~$f^*(\x)$ in the infinite-width limit. We will now rigorously prove that the training trajectory~$(f_{\theta_{k}}^N(\x))_{k=0,\dots,K}$ of the finite-width NN function~$f^N_\theta(\x)$ with $N$ hidden units trained with Newton's method, as described in \eqref{eq:NNfunction_evolution_prelimit},
converges in probability, as the number $N$ of neurons tends to infinity, to the trajectory~$(f_k^*(\x))_{k=0,\dots,K}$ of the infinite-width limit.
We present these convergence results in Theorem \ref{thm:NTK_limit} and Corollary \ref{cor:NTK_limit} below, whose proofs are deferred to Appendices~\ref{sec:app:proof_NTK_limit} and \ref{sec:app:proof_cor_NTK_limit}.

\begin{theorem}
    \label{thm:NTK_limit}
    Let $\delta\in(0,1/2]$ and $K<\infty$ be a given number of training steps.
    Fix $C_{R,0}>C_y$.
    Assume that the number of neurons~$N$ is large enough such that 
    \begin{equation}
        \label{eq:thm:asm:Nlargeenough}
        N^{2\beta-1}\geq \frac{C(\sigma,\mu_0)}{(C_{R,0}-C_y)^2}\frac1\delta
        \quad\text{ and }\quad
        \gamma \geq C_\gamma \frac{1}{N^{1-\beta}}.
    \end{equation}
    Then, we have that
    \begin{equation*}
        \label{eq:thm:NTK_limit}
        \bbP\left(\max_{k=0,\dots,K} \N{f_{\theta_{k}}^N(\x) - f_{k}^*(\x)}_2 \leq \frac{C\sqrt{M}}{\delta}\left(\frac{1}{N^{\beta-1/2}} + \frac{1}{N^{1-\beta}}+\frac{\log{M}}{N^{1/2}}\right)\right)
        \geq 1 \!-\! 2\delta
    \end{equation*}
    for a constant $C=C(\alpha,\gamma,\sigma,\CD,\mu_0,C_c,C_R,C_z,K)$ with $C_z=C_z(\alpha,\gamma,\sigma,M,\CD,\mu_0,C_{R,0},K)$ defined as in \eqref{eq:proof:thm:NTK_limit:aprioribounds}.
\end{theorem}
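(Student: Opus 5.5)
The plan is to compare the finite-width recursion \eqref{eq:NNfunction_evolution_prelimit} with the limit recursion \eqref{eq:NNfunction_evolution_limit} step by step. The error is controlled by three quantities: the deviation of the random kernel $\CB^N_k$ from $\CB^*_0$, the remainder $R^N_k(\x)$, and the initial output $f^N_{\theta_0}(\x)$. Set $e_k = f^N_{\theta_k}(\x) - f^*_k(\x)$. Subtracting the two recursions gives
\begin{equation*}
e_{k+1} = (\Id_M - \alpha \CB^*_0) e_k + \alpha(\CB^N_k - \CB^*_0)(\y - f^N_{\theta_k}(\x)) + R^N_k(\x).
\end{equation*}
The choice of $\alpha$ gives $\|\Id_M-\alpha\CB^*_0\|_2\le 1$. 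A discrete Gronwall argument over $K$ steps then yields
\begin{equation*}
\max_{k\le K}\|e_k\|_2 \le C(K)\Bigl(\|f^N_{\theta_0}(\x)\|_2 + \max_{k<K}\|\CB^N_k-\CB^*_0\|_2\,C_{R} + \max_{k<K}\|R^N_k(\x)\|_2\Bigr),
\end{equation*}
provided the residuals stay bounded by some $C_R$.

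\textbf{Step 1: initialization and a good event.} Since the $c_0^i$ are mean zero, i.i.d.\ and bounded, Chebyshev gives $\|f^N_{\theta_0}(\x)\|_2 \le C\sqrt{M}/(\delta^{1/2} N^{\beta-1/2})$ with probability at least $1-\delta$. On this event, the first condition in \eqref{eq:thm:asm:Nlargeenough} ensures $\|\y - f^N_{\theta_0}(\x)\|_\infty \le C_{R,0}$. Similarly, by Chebyshev (or a matrix concentration bound, which is where the $\log M$ would enter), the empirical NTK at initialization
\begin{equation*}
B^N_0 = \frac{1}{MN}\sum_i s^N_{\theta^i_0}(s^N_{\theta^i_0})^\top
\end{equation*}
satisfies $\|B^N_0 - B^*_0\|_2 \le C\log M/(\delta N^{1/2})$ with probability at least $1-\delta$. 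Together the two events have probability at least $1-2\delta$.

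\textbf{Step 2: a priori bounds by induction on $k\le K$.} On the good event, I would prove inductively that the residuals stay bounded by $C_R$, the output-weight parameters stay bounded by $C_c$, and each neuron update satisfies $\|(z^N_k)^i\|_2 \le C_z/N^{1-\beta}$ (Lemma~\ref{prop:NTK_limit_bounds_z}). The key ingredient is Lemma~\ref{lem:estimate_Dinv_operatornorm}. Because $\gamma\ge C_\gamma N^{-(1-\beta)}$, the per-neuron perturbation $N^{-(1-\beta)}\frac1M\sum_m r^m h^N_{\theta^i}$ in $d^N_{\theta^i}$ has norm at most $\gamma/\sqrt2$. Hence $d^N_{\theta^i_k}$ is invertible with $\|(d^N_{\theta^i_k})^{-1}\|_2\le C/\gamma$, and indeed $\|(d^N_{\theta^i_k})^{-1}-\gamma^{-1}\Id\|_2\le C N^{-(1-\beta)}$. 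Since the matrix $\Id_M+\frac1{MN}\sum_i s(d)^{-1}s^\top$ is at least $\Id_M$ up to a small error, $\zeta^N_k$ is bounded by the residual. The formula $(z^N_k)^i=\frac{1}{MN^{1-\beta}}(d^N_{\theta^i_k})^{-1}(s^N_{\theta^i_k})^\top\zeta^N_k$, obtained from \eqref{eq:zN_reformulation} after rescaling, then gives the $N^{-(1-\beta)}$ bound. Summing over $k\le K$ shows that each parameter moves by $O(K/N^{1-\beta})$, which closes the bound on $C_c$.

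\textbf{Step 3: kernel and remainder estimates.} I would split the kernel error as
\begin{equation*}
\CB^N_k-\CB^*_0 = \bigl(\CB^N_k-\tilde\CB^N_0\bigr) + \bigl(\tilde\CB^N_0-\CB^*_0\bigr),
\qquad \tilde\CB^N_0 = \gamma^{-1}B^N_0\bigl(\Id_M+\gamma^{-1}B^N_0\bigr)^{-1}.
\end{equation*}
For the first term, replacing $(d^N)^{-1}$ by $\gamma^{-1}\Id$ costs $O(N^{-(1-\beta)})$, and using the Lipschitz property of $s^N_{\theta^i}$ in $\theta^i$ together with the parameter drift from Step 2 costs another $O(K N^{-(1-\beta)})$. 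For the second term, the map $B\mapsto\gamma^{-1}B(\Id+\gamma^{-1}B)^{-1}=\Id-(\Id+\gamma^{-1}B)^{-1}$ is Lipschitz on positive semidefinite matrices (resolvent identity), so Step 1 gives a contribution $O(\log M/N^{1/2})$. For the remainder, the Hessian $H^N$ is block-diagonal with blocks $N^{-\beta}h^N_{\theta^i}$, so
\begin{equation*}
|R^N_k(x^m)|\le \frac{\alpha^2}{2}N^{-\beta}\sum_i\|h\|_2\,\|(z^N_k)^i\|_2^2 \le C N^{1-\beta}N^{-2(1-\beta)} = C N^{-(1-\beta)}.
\end{equation*}
Inserting all of these into the Gronwall bound, with $\sqrt M$ from converting to $\ell_2$ and $1/\delta$ from the Chebyshev steps, yields the stated estimate.

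\textbf{Main obstacle.} I expect Step 2 to be the hardest, since it is a circular induction. Invertibility of $D$ needs the residuals bounded, while bounding the residuals needs the updates controlled, which in turn needs invertibility of $D$. Bounding the residual uniformly on the good event seems to require the stability $\|\Id-\alpha\CB^N_k\|\le1+o(1)$, so the induction has to carry the kernel closeness from Step 3 along as well. Keeping the constants $C_R, C_c, C_z$ independent of $N$ through this coupled induction is the delicate point.
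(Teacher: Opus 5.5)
Your proposal is correct and follows essentially the paper's proof. The shared structure is: the inductive a priori bounds of Lemma~\ref{prop:NTK_limit_bounds_z} on the initialization event, with $\gamma\ge C_\gamma N^{-(1-\beta)}$ keeping every $d^N_{\theta^i_k}$ uniformly invertible; the same three-way split of the kernel error as in Lemma~\ref{lem:differenceNNTK_finite_infinite}, with matrix concentration at $k=0$; the $O(\sqrt{M}N^{-(1-\beta)})$ Taylor remainder; and discrete Gronwall. The only difference is bookkeeping: you argue pathwise on the intersection of two good events, whereas the paper bounds conditional expectations given $\Omega_R$ and applies conditional Markov at the end; both give probability at least $1-2\delta$.

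The circularity you flag as the main obstacle does not arise. The residual bound closes inside the induction from the per-neuron update bounds alone, since $\|f^N_{\theta_{k+1}}(\x)-f^N_{\theta_k}(\x)\|_\infty\le \frac{\alpha C}{N^\beta}\sum_{i=1}^N(1+|c^i_k|)\|(z^N_k)^i\|_2=O(1)$ uniformly in $N$. Alternatively, $\CB^N_k=A(\Id_M+A)^{-1}$ with $A\succeq 0$ has operator norm below $1$. So no kernel closeness needs to be carried along. That is just as well, because your good-event kernel estimate carries a factor $1/\delta$ that would otherwise leak into $C_R$ and $C_\gamma$.
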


Theorem \ref{thm:NTK_limit} provides a high-probability bound with a quantitative rate on the deviation of the finite-width NN training trajectory from its infinite-width limit.
Since $\delta$ can be chosen arbitrarily small,
this result directly implies convergence in probability, as formalized in the subsequent corollary.
\begin{corollary}
    \label{cor:NTK_limit}
    Under the assumptions of Theorem~\ref{thm:NTK_limit},
    the training trajectory $(f_{\theta_{k}}^N(\x))_{k=0,\dots,K}$ converges in probability  to $(f_{k}^*(\x))_{k=0,\dots,K}$ as $N\rightarrow\infty$.
\end{corollary}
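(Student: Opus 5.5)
The plan is to deduce the corollary directly from the quantitative bound in Theorem~\ref{thm:NTK_limit}. Convergence in probability of the finite family $(f_{\theta_{k}}^N(\x))_{k=0,\dots,K}$ means: for every $\varepsilon>0$,
\[
\bbP\Bigl(\max_{k=0,\dots,K}\N{f_{\theta_{k}}^N(\x)-f_{k}^*(\x)}_2>\varepsilon\Bigr)\to0
\quad\text{as } N\to\infty .
\]
Since $K$ is finite, this is equivalent to componentwise convergence in probability, with the max norm dominating each component.

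Fix $\varepsilon>0$ and a target level $\delta'\in(0,1]$, and set $\delta=\delta'/2\in(0,1/2]$.

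1. Since $\beta\in(1/2,1)$, we have $N^{2\beta-1}\to\infty$. Hence the first condition in \eqref{eq:thm:asm:Nlargeenough} holds for all $N\ge N_1(\delta)$.

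2. Likewise $C_\gamma N^{-(1-\beta)}\to0$, so the condition $\gamma\ge C_\gamma N^{-(1-\beta)}$ holds for all $N\ge N_2$. Here one must check that $C_\gamma$ does not depend on $N$. By the stated form $C_\gamma=\sqrt2 C_RC_S$, it depends only on $\alpha,\gamma,\sigma,M,\CD,\mu_0,C_{R,0},K$, so this is fine.

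3. The constant $C$ in the theorem is likewise independent of $N$ and of $\delta$. The $\delta$-dependence is explicit through the factor $1/\delta$, and it is worth verifying this by inspecting the listed dependencies. It follows that
\[
\frac{C\sqrt M}{\delta}\Bigl(N^{-(\beta-1/2)}+N^{-(1-\beta)}+\log M\,N^{-1/2}\Bigr)\to0,
\]
because every exponent is positive. So there is $N_3(\delta,\varepsilon)$ beyond which this bound is at most $\varepsilon$.

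For $N\ge\max(N_1,N_2,N_3)$, Theorem~\ref{thm:NTK_limit} then gives that the event that the max error is at most $\varepsilon$ has probability at least $1-2\delta=1-\delta'$. Therefore
\[
\limsup_{N\to\infty}\bbP(\max_{k}\|\cdot\|>\varepsilon)\le\delta' .
\]
Since $\delta'$ is arbitrary, the limit is $0$.

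The only potential obstacle is confirming that no hidden dependence on $N$ or $\delta$ enters $C$, $C_\gamma$, $C_R$, $C_c$ or $C_z$ beyond what is displayed. The plan is to rely on the dependency lists given in the theorem and in \eqref{eq:proof:thm:NTK_limit:aprioribounds}, none of which involve $N$ or $\delta$, and to note that $C_{R,0}>C_y$ stays fixed throughout.
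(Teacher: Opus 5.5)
Your proposal is correct and follows essentially the same route as the paper: fix $\varepsilon$ and $\delta$, note that the bound $\frac{C\sqrt M}{\delta}\bigl(N^{-(\beta-1/2)}+N^{-(1-\beta)}+\log M\,N^{-1/2}\bigr)$ from Theorem~\ref{thm:NTK_limit} drops below $\varepsilon$ for large $N$, conclude $\bbP(\max_k\|f_{\theta_k}^N(\x)-f_k^*(\x)\|_2>\varepsilon)\le 2\delta$, and then let $\delta\to0$. Your extra checks, that \eqref{eq:thm:asm:Nlargeenough} holds for all sufficiently large $N$ and that $C$ and $C_\gamma$ have no hidden dependence on $N$ or $\delta$, make explicit what the paper covers only with the phrase ``for sufficiently large $N$''.
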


A key step in the proof of Theorem~\ref{thm:NTK_limit} is Lemma~\ref{prop:NTK_limit_bounds_z} below, which establishes that, with high probability, the Newton updates $z^N_k$ in \eqref{eq:regularizedNewtonStep} vanish as $N\rightarrow\infty$.
This implies that the NN parameters when trained with Newton's method as in \eqref{eq:regularizedNewton}--\eqref{eq:regularizedNewtonStep} remain in a neighborhood of their initial conditions, with the radius of the neighborhood converging to zero as $N \rightarrow \infty$, leading to a linearization of the NN training dynamics around the initial parameter distribution.
Figure~\ref{fig:Lemma9_Parameter_Displacement} confirms this empirically.
\begin{figure}[ht]
    \centering
    \hphantom{--}%
    \begin{subfigure}{0.43\textwidth}
        \centering
        \includegraphics[width=\linewidth,trim=0ex 3ex 0ex 3ex]{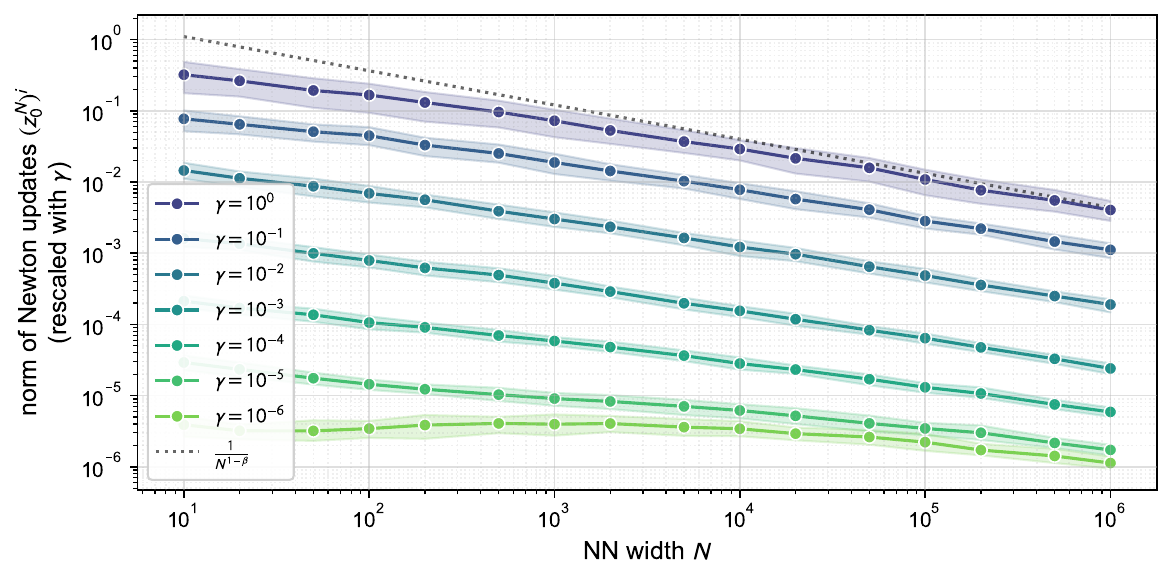}
        \caption{\footnotesize Norm of Newton updates~$(z^N_0)^i$ for different $N$ and $\gamma$. Note, the smaller $\gamma$ is, the later (w.r.t.\@ $N$) the expected rate $\nicefrac{1}{N^{1-\beta}}$ is attained due to condition~\eqref{eq:thm:asm:Nlargeenough}.}
        \label{fig:Lemma9_Parameter_Displacement}
    \end{subfigure}%
    \hfill%
    \begin{subfigure}{0.43\textwidth}
        \centering
        \includegraphics[width=\linewidth,trim=0ex 3ex 0ex 3ex]{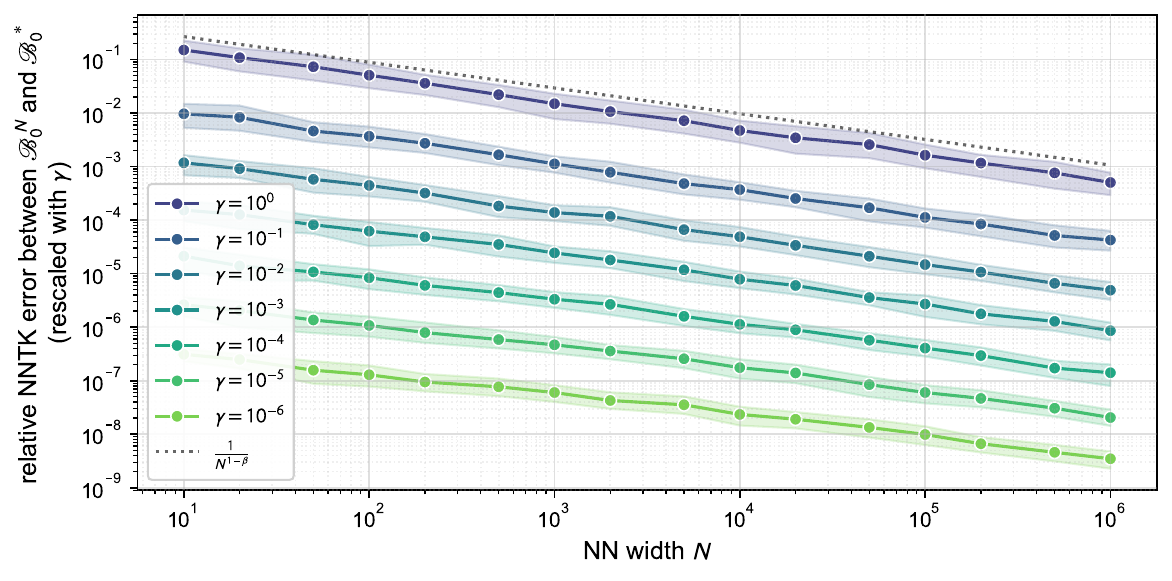}
        \caption{\footnotesize Distance between the finite-width NNTK~$\CB^N_0$ and its infinite-width limit~$\CB^*_0$ for different widths $N$ and regularizer parameters~$\gamma$.}
        \label{fig:Theorem6_Kernel_Convergence}
    \end{subfigure}%
    \hphantom{--}%
    \caption{Empirical validation of Lemma~\ref{prop:NTK_limit_bounds_z} and Theorem~\ref{thm:NTK_limit} for a shallow $\tanh$ NN~\eqref{eq:NN} with $\beta=0.52$ on $M=16$ samples of $y=2x + 0.4\sin(20\pi x)$. (Results are averaged across $100$ runs.)}
    \label{fig:2}
    \end{figure}
This is similar to the overparameterized training phenomenon \cite{jacot2018neural,chizat2018global,sirignano2019scaling} observed for certain scalings in \eqref{eq:NN} when trained with first-order methods. Due to the large number of degrees of freedom in the overparameterized regime, individual parameters are required to move less and less from their initialization to achieve a given magnitude change in the NN output as $N \rightarrow\infty$. The result below further proves, for a finite number of training steps $k=0,\dots,K$, \emph{a priori} bounds for the Newton updates~\eqref{eq:regularizedNewtonStep}, the NN function~\eqref{eq:NN}, and the loss~\eqref{eq:loss}, which are uniform in the number of neurons~$N$ and hold with high probability. These bounds are essential for proving the main theorems in the paper.
\begin{lemma}
    \label{prop:NTK_limit_bounds_z}
    Let $\delta\in(0,1/2]$ and $K<\infty$ be a given number of training steps.
    Assume that the number of neurons~$N$ is large enough such that \eqref{eq:thm:asm:Nlargeenough} is satisfied.
    Then it holds that
    \begin{equation*}
    \begin{split}
        &\bbP\bigg(\!\max_{\substack{i=1,\dots,N \\ k=0,\dots,K}}\!\abs{c^i_k}\!\leq\!C_c, \!\;\max_{k=0,\dots,K}\!\Nbig{\y\!-\!f_{\theta_{k}}^N(\x)}_2 \!\leq\! \sqrt{M}C_R, \!\; \max_{\substack{i=1,\dots,N \\ k=0,\dots,K}}\!\N{(z^N_k)^i}_2 \!\leq\! \frac{C_z}{\gamma N^{1-\beta}}\!\bigg) \geq 1\!-\! \delta.
    \end{split}
    \end{equation*}
\end{lemma}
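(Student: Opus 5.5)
We will prove Lemma~\ref{prop:NTK_limit_bounds_z} by induction over the training steps $k=0,\dots,K$, working on one high-probability event fixed at initialization. Since the bounds must hold uniformly in $N$, the constants $C_c, C_R, C_z$ are defined recursively in $k$, and the recursion is closed using the deterministic structure \eqref{eq:zN_reformulation}.

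\emph{Initialization event.} At $k=0$ we have $\abs{c^i_0}\le C_{c,0}$ almost surely, since $\mu_{0,c}$ is compactly supported. Moreover $f^N_{\theta_0}(x^m)=N^{-\beta}\sum_i c^i_0\sigma(w^i_0\cdot x^m+\eta^i_0)$ is a sum of i.i.d.\ mean-zero bounded terms, so $\bbE\abs{f^N_{\theta_0}(x^m)}^2\le C_\sigma^2 C_{c,0}^2 N^{1-2\beta}$. Chebyshev's inequality together with a union bound over $m$, or directly on $\N{f^N_{\theta_0}(\x)}_2^2$, gives $\N{\y-f^N_{\theta_0}(\x)}_2\le\sqrt M C_{R,0}$ with probability at least $1-\delta$. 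This is exactly what the first condition in \eqref{eq:thm:asm:Nlargeenough} guarantees, with $C(\sigma,\mu_0)\sim C_\sigma^2C_{c,0}^2$. All later steps are deterministic on this event, so the final probability is $1-\delta$.

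\emph{Inductive step.} Suppose that at step $k$ we have $\abs{c^i_k}\le C_{c,k}$ and $\N{\y-f^N_{\theta_k}(\x)}_2\le\sqrt M C_{R,k}$.

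First, Lemma~\ref{lem:estimate_S_operatornorm} gives $\N{h^N_{\theta^i_k}(x^m)}\le C_S(\sigma,\CD,C_{c,k})$. Hence the perturbation in $d^N_{\theta^i_k}=\gamma\Id-\frac{1}{N^{1-\beta}}\frac1M\sum_m r^m h^N_{\theta^i_k}(x^m)$ has norm at most $C_{R,k}C_S/N^{1-\beta}$. By Lemma~\ref{lem:estimate_Dinv_operatornorm}, and using the second condition $\gamma\ge C_\gamma N^{\beta-1}$ (with the factor $\sqrt2$ built into $C_\gamma$), we get $\lambda_{\min}(d^N_{\theta^i_k})\ge\gamma/2$, or a similar fraction of $\gamma$. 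Therefore $\N{(d^N_{\theta^i_k})^{-1}}\le 2/\gamma$.

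Second, the matrix $\frac1{MN}\sum_i s^N_{\theta^i_k}(d^N_{\theta^i_k})^{-1}(s^N_{\theta^i_k})^\top$ is symmetric positive semidefinite. Consequently $\N{(\Id_M+\cdot)^{-1}}\le1$, and so $\N{\zeta^N_k}_2\le\sqrt M C_{R,k}$.

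Third, use the neuronwise formula $(z^N_k)^i=\frac1M(D^N_{\theta^i_k})^{-1}(J^N_{\theta^i_k})^\top\zeta^N_k$ with $J=N^{-\beta}s$ and $D=N^{1-2\beta}d$. This gives $(z^N_k)^i=\frac{1}{MN^{1-\beta}}(d^N_{\theta^i_k})^{-1}(s^N_{\theta^i_k})^\top\zeta^N_k$. Since $\N{s^N_{\theta^i_k}}\le C(\sigma,C_x)(1+C_{c,k})\sqrt M$, we obtain $\N{(z^N_k)^i}_2\le C_{z,k}/(\gamma N^{1-\beta})$.

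Fourth, we propagate the bounds to step $k+1$. For the output-layer weights, $\abs{c^i_{k+1}}\le C_{c,k}+\alpha C_{z,k}/(\gamma N^{1-\beta})$. For the residual, we use \eqref{eq:NNfunction_evolution_prelimit}. The NNTK $\CB^N_k=A(\Id+A)^{-1}$ with $A\succeq0$ has norm at most $1$. The remainder satisfies $\abs{R^N_k(x^m)}\le\frac{\alpha^2}{2}N^{-\beta}\sum_i\N{h^N_{\tilde\theta^i_k}}\N{(z^N_k)^i}^2\le C N^{1-\beta}N^{-2(1-\beta)}/\gamma^2$. This is $O(N^{\beta-1})$ and therefore bounded for $N\ge1$; along the segment $\tilde\theta_k$ we use $\abs{\tilde c^i}\le C_{c,k+1}$. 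Together these give $C_{R,k+1}=(1+\alpha)C_{R,k}+C'$. Iterating over $K$ steps and setting $C_c=\max_k C_{c,k}$, $C_R=\max_kC_{R,k}$ and $C_z=\max_kC_{z,k}$ closes the induction.

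\emph{Main obstacle.} The difficulty is circularity: $C_\gamma$ is defined through $C_R$ and $C_c$, which depend on all $K$ steps, while invertibility at step $k$ requires exactly these bounds. We resolve it by defining the a priori constants first, in the form \eqref{eq:proof:thm:NTK_limit:aprioribounds}, by the explicit recursion above. The recursion uses only $\N{d^{-1}}\le2/\gamma$, so it does not itself require $N$ to be large. We then check that a single condition $\gamma\ge C_\gamma N^{\beta-1}$ with the final (maximal) constants suffices at every step. Because the bounds $C_{c,k}$ and $C_{R,k}$ are monotone in $k$, the final constants dominate the stepwise ones, so this condition implies positive definiteness of $D^N_{\theta_k}$ for all $k\le K$ simultaneously.
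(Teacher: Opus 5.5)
Your proposal is correct and follows essentially the same route as the paper. You fix one Markov-inequality event at initialization, on which everything else is deterministic, and then induct over $k$ with recursively defined constants. Within the induction you use the second condition in \eqref{eq:thm:asm:Nlargeenough} (with the final, monotone constants) to get $\|(d^N_{\theta^i_k})^{-1}\|_{\mathrm{op}}\lesssim 1/\gamma$, positive semidefiniteness to get $\|\zeta^N_k\|_2\le\|\y-f^N_{\theta_k}(\x)\|_2$, and the neuron-wise formula for $(z^N_k)^i$. The only deviation is in propagating the residual. You go through the NNTK recursion \eqref{eq:NNfunction_evolution_prelimit} with $\|\CB^N_k\|_{\mathrm{op}}\le 1$ plus the $\mathcal{O}(N^{\beta-1})$ Taylor remainder, whereas the paper more simply bounds $|f^N_{\theta_{k+1}}(x^m)-f^N_{\theta_k}(x^m)|\le \alpha C/\gamma$ directly via the Lipschitz continuity of $\sigma$; both close the induction.
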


The proof of Lemma \ref{prop:NTK_limit_bounds_z} is presented in Appendix~\ref{sec:app:NTK_limit_bounds_z}.
Using this result, we are able to develop a probabilistic bound on $\N{\CB^N_k-\CB^*_k}_{\mathrm{op}}$ as demonstrated in Lemma~\ref{lem:differenceNNTK_finite_infinite} and confirmed empirically in Figure~\ref{fig:Theorem6_Kernel_Convergence}, which facilitates the proof of Theorem~\ref{thm:NTK_limit}.


\subsection{Convergence in probability of Newton's method for NNs as \texorpdfstring{$k, N \rightarrow \infty$}{the number of training steps and the NN width tend to infinity}}
\label{subsec:convergence_final}

Combining the two main results of the preceding sections, Theorems~\ref{thm:convergence} and \ref{thm:NTK_limit},
leads to the following result regarding the convergence in probability of the finite-width NN function output~$f_\theta^N(\x)$ in \eqref{eq:NN} to the target data $\y$ (i.e., a global minimizer).
Its proof is given in Appendix~\ref{sec:app:convergence_finite}.

\begin{theorem}
    \label{thm:convergence_finite}
    Let $\delta\in(0,1/2]$ and $\alpha\in\left(0,2/\!\left(\lambda_{\max}(\CB^*_0)+\lambda_{\min}(\CB^*_0)\right)\right)$.
    Assume that the number of neurons~$N$ is large enough such that \eqref{eq:thm:asm:Nlargeenough} is satisfied.
    Then, it holds that
    \begin{equation*}
    \begin{split}
        &\bbP\left(\N{f_{\theta_{K}}^N(\x) \!-\! \y}_2 \leq \frac{C\sqrt{M}}{\delta}\!\left(\frac{1}{N^{\beta-1/2}} \!+\! \frac{1}{N^{1-\beta}} \!+\! \frac{\log{M}}{N^{1/2}}\right) \!+\! \left(1\!-\!\alpha \lambda_{\min}(\CB^*_0)\right)^K\!\N{\y}_2 \right) \geq 1 \!-\! 2\delta
    \end{split}
    \end{equation*}
    for a constant $C=C(\alpha,\gamma,\sigma,\CD,\mu_0,C_c,C_R,C_z,K)$.
\end{theorem}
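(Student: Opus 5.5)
The plan is to combine Theorem~\ref{thm:NTK_limit} with Theorem~\ref{thm:convergence} through a single triangle inequality at the final step $k=K$. For every realization we have
\begin{equation*}
    \N{f_{\theta_{K}}^N(\x) - \y}_2 \leq \N{f_{\theta_{K}}^N(\x) - f_{K}^*(\x)}_2 + \N{f_{K}^*(\x) - \y}_2 .
\end{equation*}
The second term is deterministic. Since the step size $\alpha$ lies in the admissible interval, Theorem~\ref{thm:convergence} applies with $k=K$ and bounds it by $(1-\alpha\lambda_{\min}(\CB^*_0))^K\N{\y}_2$. No probability is needed here.

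For the first term I bound it by $\max_{k=0,\dots,K}\N{f_{\theta_{k}}^N(\x) - f_{k}^*(\x)}_2$. The hypotheses of Theorem~\ref{thm:NTK_limit} are exactly the ones assumed here: $\delta\in(0,1/2]$, a fixed finite horizon $K$, a fixed $C_{R,0}>C_y$, and $N$ large enough that \eqref{eq:thm:asm:Nlargeenough} holds. Theorem~\ref{thm:NTK_limit} therefore gives an event $\Omega_N$ with $\bbP(\Omega_N)\ge 1-2\delta$. On $\Omega_N$ the maximum, and hence the first term, is at most $\frac{C\sqrt{M}}{\delta}\big(N^{-(\beta-1/2)}+N^{-(1-\beta)}+\log M\, N^{-1/2}\big)$.

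On $\Omega_N$ the two bounds add up to the claimed estimate. By monotonicity of probability, the event in the statement contains $\Omega_N$, so it also has probability at least $1-2\delta$.

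The constant $C=C(\alpha,\gamma,\sigma,\CD,\mu_0,C_c,C_R,C_z,K)$ is taken to be the same constant as in Theorem~\ref{thm:NTK_limit}, so no new constants appear. The main point to check, and essentially the only one, is consistency of assumptions. Theorem~\ref{thm:convergence} constrains $\alpha$, while Theorem~\ref{thm:NTK_limit} is stated for arbitrary $\alpha$ through its constants. Both therefore hold simultaneously, and the limit recursion $f_k^*$ used in the two theorems is the same sequence \eqref{eq:NNfunction_evolution_limit} with $f_0^*(\x)=0$.
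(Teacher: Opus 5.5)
Your proposal is correct and follows the paper's proof: a triangle inequality at $k=K$, with the first term controlled with probability at least $1-2\delta$ by Theorem~\ref{thm:NTK_limit} and the deterministic second term bounded by $(1-\alpha\lambda_{\min}(\CB^*_0))^K\N{\y}_2$ via Theorem~\ref{thm:convergence}. Your argument only needs the $k=K$ instance of Theorem~\ref{thm:NTK_limit}, not the maximum over $k$, and that single-step bound is exactly what the paper's Markov argument establishes.
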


This implies convergence in probability in the following sense,
as we demonstrate in Appendix~\ref{sec:app:proof_cor_convergence_finite}.

\begin{corollary}
    \label{cor:convergence_finite}
    Under the assumptions of Theorem~\ref{thm:convergence_finite}, there exists a function $g: \bbN \to \bbN$ dictating the required growth of the network width, such that for any sequence of widths $(N_K)_{K=1}^\infty$ satisfying $N_K \geq g(K)$ for all $K$, the trained NN function output $f_{\theta_{K}}^{N_K}(\x)$ converges in probability to the target data $\y$ (i.e., a global minimizer) as $K \rightarrow \infty$.
\end{corollary}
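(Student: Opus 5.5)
The statement to prove is Corollary~\ref{cor:convergence_finite}. We derive it from Theorem~\ref{thm:convergence_finite} by a diagonal choice of width and confidence level. Write
\[
E_N \coloneqq \frac{1}{N^{\beta-1/2}} + \frac{1}{N^{1-\beta}} + \frac{\log M}{N^{1/2}}.
\]
Since $\beta\in(1/2,1)$, $E_N\to0$ as $N\to\infty$. Also set $\rho\coloneqq 1-\alpha\lambda_{\min}(\CB^*_0)$. By Lemma~\ref{lem:secondorderNTKB:spectrum} we have $\lambda_{\min}(\CB^*_0)>0$, and the step-size condition gives $\abs{\rho}<1$. Hence $\rho^K\N{\y}_2\to0$.

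The key point is that the constant $C$ in Theorem~\ref{thm:convergence_finite} depends on $K$, through $C_c$, $C_R$, $C_z$ and $K$ itself. The same holds for $C_\gamma$ and hence for condition~\eqref{eq:thm:asm:Nlargeenough}. So the width must grow with $K$, and for each $K$ we also shrink the confidence level. Fix $\delta_K\coloneqq \min\{1/2,1/K\}$. Define $g(K)$ as the smallest integer $N$ such that, for every $N'\ge N$, all of the following hold:
\begin{itemize}
\item the first condition in \eqref{eq:thm:asm:Nlargeenough} with $\delta=\delta_K$;
\item $\gamma\ge C_\gamma(K)N'^{-(1-\beta)}$;
\item $\frac{C(K)\sqrt{M}}{\delta_K}E_{N'}\le 1/K$.
\end{itemize}
Such an $N$ exists because each condition is monotone in $N'$ for large $N'$. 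For $E_{N'}$ this holds since each of its three terms is eventually decreasing, so the whole threshold can be stated as a tail condition. The constants are deterministic, so $g$ is a well-defined deterministic function $\bbN\to\bbN$.

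Now take any sequence $N_K\ge g(K)$. Theorem~\ref{thm:convergence_finite} applies with $\delta=\delta_K$, $N=N_K$ and $K$ training steps. It gives
\[
\bbP\big(\N{f^{N_K}_{\theta_K}(\x)-\y}_2\le 1/K+\rho^K\N{\y}_2\big)\ge 1-2\delta_K .
\]
Given $\epsilon>0$, pick $K_0$ such that $1/K+\rho^K\N{\y}_2<\epsilon$ for all $K\ge K_0$. Then for $K\ge K_0$,
\[
\bbP\big(\N{f^{N_K}_{\theta_K}(\x)-\y}_2>\epsilon\big)\le 2\delta_K\le 2/K\to0,
\]
which is convergence in probability. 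Since $\y$ is the global minimizer with zero loss, this is the claim.

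The main obstacle is bookkeeping rather than analysis. We have to verify that every constant ($C$, $C_\gamma$, and the threshold in \eqref{eq:thm:asm:Nlargeenough}) depends only on $K$ and fixed problem data, and not on $N$ or the randomness. This is what makes $g$ well defined. We also have to handle the dependence of $C_\gamma$ on $\gamma$ carefully. I would take this dependence from the a priori bounds \eqref{eq:proof:thm:NTK_limit:aprioribounds} as stated, with $\gamma$ fixed, so the condition $\gamma\ge C_\gamma N^{-(1-\beta)}$ is simply a lower bound on $N$.
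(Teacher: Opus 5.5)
Your proof is correct and follows the same route as the paper: Theorem~\ref{thm:convergence_finite} splits the error into the geometric term $\rho^K\|\y\|_2$, which vanishes as $K\to\infty$, and the width term, which is made small by taking $N$ large relative to $K$. Your diagonal choice $\delta_K=\min\{1/2,1/K\}$ with width-error target $1/K$ is more careful than the paper's argument, whose threshold $\widetilde N(\varepsilon,\delta,K)$ still depends on $\varepsilon$ and $\delta$ and so does not by itself give the single function $g$ that the statement requires.
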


\section{Conclusions and limitations}
\label{sec:conclusions}

This paper developed a convergence analysis for the regularized Newton method for training NNs in the overparameterized limit.
We proved that NNs trained with Newton's method as in \eqref{eq:regularizedNewton}--\eqref{eq:regularizedNewtonStep} converge to a deterministic limit equation as the number of hidden units $N \rightarrow \infty$ and, as the number of training steps $k \rightarrow \infty$, the NN converges to a global minimizer.

\paragraph{Limitations.}
The focus of our analysis was the classical regularized Newton method due to its role as a cornerstone of classical optimization.
While the developed techniques may be extendable to other second-order methods such as quasi-Newton methods, we leave this to future research.
However, let us mention that our results immediately extend to the regularized Gauss-Newton method by setting $S_{\theta_k}^N=\mathbf{0}$ in \eqref{eq:D}.
In fact, in this case, the analysis simplifies substantially,
highlighting once more that the implicit parameter update of the Newton method with a potentially indefinite Hessian matrix is a central cause for the mathematical challenges.

In the overparameterized limit, we observe linear convergence of the NN to the target data with a rate that is uniform across the frequency spectrum due to better conditioning of the NNTK compared to its gradient descent counterpart.
Our results, however, do not show the quadratic convergence that one might expect from Newton's method.
This is due to the quadratic rate typically being achieved when the Hessian is not constant, allowing the optimizer to adapt to changing curvature as it approaches the local minimum.
In the overparameterized limit, however, the model behaves as a linearization around the initialization, rendering the learning problem quadratic (hence, the Hessian becoming constant).
We thus instead observe convergence in one training step as $\gamma\rightarrow0$ (for a step size~$\alpha=1$).

\appendix

\section*{Supplementary material}

This supplementary material is organized into the following appendices.

\startcontents[main]
\printcontents[main]{}{1}{}

\paragraph{Further Notation.}
By $C$ we denote generic constants, which may vary throughout the different proofs.
Dependence of such constants on hyperparameters, when applicable, is indicated by notation such as $C(\alpha, \gamma, \mu_0, \CD, \sigma)$, where $\CD$ denotes the dependency on the training data (which includes a dependency on $d$ but not $M$) and $\sigma$ denotes the dependency on properties of the NN nonlinearity $\sigma$.

\section{Proof details for the main results}
\label{app:proofs_main}

In this appendix,
we present auxiliary technical results and proof details for our main theoretical results, Theorems~\ref{thm:convergence} and~\ref{thm:NTK_limit}.
In particular, we provide a detailed proof of Theorem~\ref{thm:NTK_limit} in Section~\ref{sec:app:proof_NTK_limit}.
Before doing so, we collect and prove several auxiliary statements in Section~\ref{sec:app:aux_res}, and give a proof of Lemma~\ref{prop:NTK_limit_bounds_z} in Section~\ref{sec:app:NTK_limit_bounds_z}.

\subsection{Auxiliary technical results}
\label{sec:app:aux_res}

In Section~\ref{subsec:proof:boundsinitialization}, we provide bounds in expectation for the NN function output~\eqref{eq:NN}, the residual, and the loss~\eqref{eq:loss} at initialization,
while Section~\ref{subsec:proof:boundsgeneric} gives such bounds for generic NN parameters.
Section~\ref{subsec:proof:gradientHessian} then computes the gradient and Hessian of the loss~$L^N$, and introduce several notations.
Auxiliary technical estimates, which will be useful later on, are provided in Section~\ref{subsec:proof:auxtechnicalest}.
Thereafter, in Section~\ref{subsec:proof:matrixbounds}, we establish operator norm bounds for several involved matrices.
In Section~\ref{subsec:proof:zN_reformulation}, we derive Formula~\eqref{eq:zN_reformulation} for the Newton update $z_k^N$ from \eqref{eq:regularizedNewtonStep}.
Section~\ref{subsec:proof:spectra} then investigates the spectral properties of the limit NNTK $\CB^*_0$ as well as the spectral properties of the limit standard NTK $B^*_0$.

\subsubsection{Bounds on the NN function and the loss at initialization}
\label{subsec:proof:boundsinitialization}

Using that the NN parameters associated with different neurons are independent of each other at initialization,
we can obtain the following bounds for the NN function and the loss with NN parameters~$\theta_0$.

\begin{lemma}
    \label{lem:estimate_NNfunction_E}
    At initialization, it holds for the NN function $f_{\theta_0}^N$, as defined in \eqref{eq:NN}, that
    \begin{equation}
        \bbE\abs{f_{\theta_0}^N(x)}^2 \leq \frac{C}{N^{2\beta-1}}
    \end{equation}
    for a constant $C=C(\sigma,\mu_0)$.
\end{lemma}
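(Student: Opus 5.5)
We plan a direct second-moment computation that uses the independence of the neurons at initialization together with the mean-zero property of the output weights. Writing $f_{\theta_0}^N(x) = N^{-\beta}\sum_{i=1}^N X_i$ with $X_i \coloneqq c^i_0\,\sigma(w^i_0\cdot x+\eta^i_0)$, we first check that the $X_i$ are i.i.d.\ by Assumption~\ref{asm:NN_mu0}. We then show they are square-integrable and mean-zero. Square-integrability is immediate from the compact support of $\mu_{0,c}$ (say $\abs{c^i_0}\leq C_{c,0}$) and the boundedness $\abs{\sigma}\leq C_\sigma$ from Assumption~\ref{asm:NN_sigma}, which give $\abs{X_i}\leq C_{c,0}C_\sigma$. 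For the mean, the independence of $c^i_0$ from $(w^i_0,\eta^i_0)$ in Assumption~\ref{asm:NN_mu0i} lets the expectation factorize: $\bbE X_i = \bbE c^i_0\cdot \bbE\,\sigma(w^i_0\cdot x+\eta^i_0) = 0$, since $\mu_{0,c}$ is mean-zero by Assumption~\ref{asm:NN_mu0ii}.

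Next we expand the square:
\begin{equation*}
    \bbE\abs{f_{\theta_0}^N(x)}^2 = \frac{1}{N^{2\beta}}\sum_{i,j=1}^N \bbE[X_iX_j].
\end{equation*}
For $i\neq j$, independence gives $\bbE[X_iX_j]=\bbE X_i\,\bbE X_j=0$, so only the $N$ diagonal terms survive. Each diagonal term satisfies $\bbE X_i^2 = \bbE (c^i_0)^2\,\bbE\,\sigma(w^i_0\cdot x+\eta^i_0)^2\leq C_{c,0}^2C_\sigma^2$. It follows that
\begin{equation*}
    \bbE\abs{f_{\theta_0}^N(x)}^2\leq \frac{N C_{c,0}^2C_\sigma^2}{N^{2\beta}} = \frac{C}{N^{2\beta-1}},
\end{equation*}
with $C=C(\sigma,\mu_0)=C_{c,0}^2C_\sigma^2$. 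This constant is uniform in $x$, so no bound on $x$ is required here.

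We do not expect a genuine obstacle. The only point requiring care is justifying that the cross terms vanish, which uses both the independence across neurons and the independence of $c$ from $(w,\eta)$ within a neuron. The latter is needed so that $\bbE X_i=0$ regardless of the law of $(w,\eta)$.
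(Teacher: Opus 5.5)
Your proof is correct and follows essentially the same route as the paper. You expand the square and kill the cross terms using independence across neurons, the independence of $c_0^i$ from $(w_0^i,\eta_0^i)$, and the mean-zero property of $c_0^i$; you then bound the $N$ diagonal terms by $C_{c,0}^2C_\sigma^2$, which gives the stated $C/N^{2\beta-1}$ with a constant uniform in $x$.
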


\begin{proof}
    Using the independence of the parameters $c_0^i$ from $w_0^i$ and $\eta_0^i$, and when associated with different neurons as of Assumptions~\ref{asm:NN_mu0}\ref{asm:NN_mu0i} and \ref{asm:NN_mu0}, and that the $c_0^i$'s are mean-zero as of Assumption~\ref{asm:NN_mu0}\ref{asm:NN_mu0ii},
    we have
    \begin{equation}
        \bbE\abs{f^N_{\theta_0}(x)}^2
        = \bbE \abs{\frac{1}{N^\beta} \sum_{i=1}^N c_0^i \sigma(w_0^i \cdot x + \eta_0^i)}^2
        = \frac{1}{N^{2\beta}} \bbE\sum_{i=1}^N \abs{c_0^i \sigma(w_0^i \cdot x + \eta_0^i)}^2.
    \end{equation}
    Hence, with Assumptions~\ref{asm:NN_nonlinearity}\ref{asm:NN_sigma} and \ref{asm:NN_mu0}\ref{asm:NN_mu0ii} we conclude
    \begin{equation}
        \bbE\abs{f^N_{\theta_0}(x)}^2
        \leq \frac{1}{N^{2\beta}} N C^2C_\sigma^2
        = \frac{C^2C_\sigma^2}{N^{2\beta-1}}
    \end{equation}
    for a constant $C=C(\mu_0)$.
\end{proof}

\begin{lemma}
    \label{lem:estimate_residual_E}
    At initialization, it holds for the loss~$L^N(\theta_0)$ and the residual $\y-f^N_{\theta_0}(\x)$ that
    \begin{equation}
        \bbE \left[L^N(\theta_0)\right]
        \leq C\left(1+\frac{1}{N^{2\beta-1}}\right) \leq C
    \end{equation}
    and
    \begin{equation}
        \bbE\N{\y-f_{\theta_0}^N(\x)}_2
        \leq C\sqrt{M}\left(1+\frac{1}{N^{\beta-1/2}}\right)
        \leq C\sqrt{M}
    \end{equation}
    for a constant $C=C(\sigma,\mu_0,\CD)$.
\end{lemma}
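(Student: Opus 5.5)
The bound on the loss follows from the one on the residual, so I will prove the residual statements directly. The loss is $L^N(\theta_0)=\frac{1}{2M}\N{\y-f^N_{\theta_0}(\x)}_2^2$. Writing out the squared norm entrywise gives
\begin{equation*}
    \bbE\left[L^N(\theta_0)\right]
    = \frac{1}{2M}\sum_{m=1}^M \bbE\abs{y^m - f^N_{\theta_0}(x^m)}^2
    \leq \frac{1}{M}\sum_{m=1}^M \left(\abs{y^m}^2 + \bbE\abs{f^N_{\theta_0}(x^m)}^2\right),
\end{equation*}
using $(a-b)^2\leq 2a^2+2b^2$.

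I then bound the two terms in each summand separately. Assumption~\ref{asm:data} gives $\abs{y^m}\leq C_y$. Lemma~\ref{lem:estimate_NNfunction_E}, applied at each data point $x^m$, gives $\bbE\abs{f^N_{\theta_0}(x^m)}^2\leq C(\sigma,\mu_0)/N^{2\beta-1}$. The constant there is uniform in $x$, since its proof only used $\abs{\sigma}\leq C_\sigma$ and the compact support of $\mu_{0,c}$. Averaging over $m$ then yields
\begin{equation*}
    \bbE\left[L^N(\theta_0)\right]\leq C\left(1+\frac{1}{N^{2\beta-1}}\right),
\end{equation*}
with $C$ depending only on $C_y$, $\sigma$ and $\mu_0$, and in particular not on $M$. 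Since $\beta>1/2$, we have $N^{2\beta-1}\geq 1$, which gives the final bound $\leq C$.

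For the residual, Jensen's inequality gives
\begin{equation*}
    \bbE\N{\y-f^N_{\theta_0}(\x)}_2 \leq \left(\bbE\N{\y-f^N_{\theta_0}(\x)}_2^2\right)^{1/2} = \left(2M\,\bbE\left[L^N(\theta_0)\right]\right)^{1/2}.
\end{equation*}
Combining this with the loss bound and the elementary inequality $\sqrt{1+a}\leq 1+\sqrt a$, applied with $a=N^{-(2\beta-1)}$, yields $C\sqrt M\left(1+N^{-(\beta-1/2)}\right)\leq C\sqrt M$.

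There is no real obstacle in this argument. The one point requiring care is tracking the dependence on $M$, so that the loss constant is $M$-free and the residual picks up exactly a factor $\sqrt M$. This works because of the $1/M$ normalization in the loss and because the bound of Lemma~\ref{lem:estimate_NNfunction_E} is uniform over data points.
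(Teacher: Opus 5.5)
Your proof is correct and follows the paper's route: you bound the second moment of the residual entrywise using $|y^m|\le C_y$ and Lemma~\ref{lem:estimate_NNfunction_E}, then pass to $\bbE\N{\y-f^N_{\theta_0}(\x)}_2$ via Jensen. The only difference is in the cross term: you bound it with $(a-b)^2\le 2a^2+2b^2$, whereas the paper cancels it exactly using $\bbE f^N_{\theta_0}(x^m)=0$ (mean-zero $c_0^i$ independent of $(w_0^i,\eta_0^i)$), which changes only the constant, by a factor of $2$.
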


\begin{proof}
    Since $\bbE f_{\theta_0}^N(x^m)=0$ due to
    the independence of the parameters $c_0^i$ from $w_0^i$ and $\eta_0^i$ as of Assumption~\ref{asm:NN_mu0}\ref{asm:NN_mu0i}
    and
    the neurons~$c_0^i$ being mean-zero as of Assumption~\ref{asm:NN_mu0}\ref{asm:NN_mu0ii}, we compute with Lemma~\ref{lem:estimate_NNfunction_E} that
    \begin{equation}
    \begin{split}
        \bbE\N{\y-f_{\theta_0}^N(\x)}_2^2
        &= \bbE \left[\sum_{m=1}^M \left(y^m-f^N_{\theta_0}(x^m)\right)^2\right]
        = \sum_{m=1}^M (y^m)^2 + 2 y^m \bbE f^N_{\theta_0}(x^m) +\bbE\abs{f^N_{\theta_0}(x^m)}^2\\
        &= \sum_{m=1}^M (y^m)^2+\bbE\abs{f^N_{\theta_0}(x^m)}^2
        \leq M\left(C_y^2 + \frac{C}{N^{2\beta-1}}\right).
    \end{split}
    \end{equation}
    Hence, $\bbE \left[L^N(\theta_0)\right] \leq \frac{1}{2}\left(C_y^2+\frac{C}{N^{2\beta-1}}\right)\leq C$,
    where the last inequality is simply due to $N\geq1$ and $\beta\in(1/2,1)$.
    The second statement follows from Jensen's inequality.
\end{proof}

\subsubsection{Bounds on the NN function and the loss}
\label{subsec:proof:boundsgeneric}

Generically, due to the lack of independence once training has started,
we can only obtain the following bounds for the NN function and the loss.
Throughout this and the following sections,
we denote by $\theta$ a generic vector of NN parameters.
It is \textit{assumed} to satisfy $\absnormal{c^i}\leq \widetilde{C}$ for all $i=1,\dots,N$.
As of Assumption~\ref{asm:NN_mu0}\ref{asm:NN_mu0ii},
this condition holds for $\theta_0$ at initialization with $\widetilde{C}=C(\mu_0)$, since $\absnormal{c_0^i}\leq C(\mu_0)$.
For the iterates~$\theta_k$ emerging during training, such a priori bound needs to be first established.

\begin{lemma}
    \label{lem:estimate_NNfunction}
    Let $\theta=(\theta^i)_{i=1}^N$ with $\theta^i=(c^i,w^i,\eta^i)$ and assume that $\absnormal{c^i}\leq \widetilde{C}$ for all $i=1,\dots,N$.
    Then it holds
    \begin{equation}
        \abs{f_{\theta}^N(x)}^2 \leq CN^{2-2\beta}
    \end{equation}
    for a constant $C=C(\sigma,\widetilde{C})$.
\end{lemma}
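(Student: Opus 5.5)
The bound is a deterministic worst-case estimate: we discard the cancellations coming from independence that were used in Lemma~\ref{lem:estimate_NNfunction_E}, and instead bound every summand in \eqref{eq:NN} uniformly. Only two facts enter: the boundedness of the activation function from Assumption~\ref{asm:NN_nonlinearity}\ref{asm:NN_sigma}, and the assumed a priori bound $\absnormal{c^i}\leq\widetilde{C}$ on the outer weights. No assumption on the inner weights $w^i$ or biases $\eta^i$ is needed, because $\sigma$ is bounded regardless of its argument.

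Concretely, I would first apply the triangle inequality to the sum in \eqref{eq:NN}:
\begin{equation*}
    \abs{f_\theta^N(x)} \leq \frac{1}{N^\beta}\sum_{i=1}^N \abs{c^i}\,\abs{\sigma(w^i\cdot x+\eta^i)}.
\end{equation*}
Next, I would bound each summand by $\widetilde{C}C_\sigma$, using $\abs{\sigma}\leq C_\sigma$ and $\absnormal{c^i}\leq\widetilde{C}$. This yields
\begin{equation*}
    \abs{f_\theta^N(x)} \leq \frac{N\widetilde{C}C_\sigma}{N^\beta} = \widetilde{C}C_\sigma N^{1-\beta}.
\end{equation*}
Squaring gives $\abs{f_\theta^N(x)}^2\leq \widetilde{C}^2C_\sigma^2N^{2-2\beta}$, so the claim holds with $C=\widetilde{C}^2C_\sigma^2$, which depends only on $\sigma$ and $\widetilde{C}$.

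There is no genuine obstacle. The only point to note is that this bound grows with $N$, since $2-2\beta>0$, and is therefore much weaker than the $N^{1-2\beta}$ rate of Lemma~\ref{lem:estimate_NNfunction_E}. That loss is the unavoidable price of not having independence between neurons along the training trajectory.
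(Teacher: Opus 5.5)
Your proof is correct and is essentially the paper's argument. The paper bounds the square directly via Jensen's inequality, $\abs{f_\theta^N(x)}^2\leq \frac{1}{N^{2\beta-1}}\sum_{i=1}^N\abs{c^i\sigma(w^i\cdot x+\eta^i)}^2$, where you apply the triangle inequality first and then square; both routes give the same constant $\widetilde{C}^2C_\sigma^2$.
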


\begin{proof}
    With Jensen's inequality it holds
    \begin{equation}
        \abs{f^N_{\theta}(x)}^2
        \!= \abs{\frac{1}{N^\beta} \sum_{i=1}^N c^i \sigma(w^i \cdot x + \eta^i)}^2
        \leq \frac{1}{N^{2\beta-1}} \sum_{i=1}^N \abs{c^i \sigma(w^i \cdot x + \eta^i)}^2
        \!\leq \frac{1}{N^{2\beta-1}} \sum_{i=1}^N \widetilde{C}^2 C_\sigma^2,
    \end{equation}
    which concludes the computation.
\end{proof}

\begin{lemma}
    \label{lem:estimate_residual}
    Let $\theta=(\theta^i)_{i=1}^N$ with $\theta^i=(c^i,w^i,\eta^i)$ and assume that $\absnormal{c^i}\leq \widetilde{C}$ for all $i=1,\dots,N$.
    For the loss~$L^N(\theta)$ and the residual $\y-f_\theta^N(\x)$ it then holds
    \begin{equation}
        L^N(\theta)\leq C(1+N^{2-2\beta})
    \end{equation}
    and
    \begin{equation}
        \N{\y-f_\theta^N(\x)}_2
        \leq C\sqrt{M}(1+N^{1-\beta})
    \end{equation}
    for a constant $C=C(C_{\sigma},\CD,\widetilde{C})$.
\end{lemma}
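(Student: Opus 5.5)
The claim is a deterministic consequence of Lemma~\ref{lem:estimate_NNfunction} and the boundedness of the labels in Assumption~\ref{asm:data}. No probabilistic argument is needed, since the hypothesis $\absnormal{c^i}\leq\widetilde{C}$ holds for every $i$ and the bound is pointwise in $\theta$. The plan is to bound the residual first, coordinate by coordinate, and then read off the loss bound from the residual bound.

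\textbf{Residual.} For each data sample $x^m$, the triangle inequality gives $\abs{y^m-f^N_\theta(x^m)}\leq C_y+\abs{f^N_\theta(x^m)}$. Lemma~\ref{lem:estimate_NNfunction} applied at $x=x^m$ yields $\abs{f^N_\theta(x^m)}\leq \sqrt{C(\sigma,\widetilde{C})}\,N^{1-\beta}$. This uses only $\abs{\sigma}\leq C_\sigma$, so the bound is uniform in $x^m$ and needs no input from $C_x$. Combining, each coordinate of the residual is at most $C(1+N^{1-\beta})$ with $C=\max\{C_y,\sqrt{C(\sigma,\widetilde{C})}\}$. Summing the squares of the $M$ coordinates and taking the square root gives
\[
\N{\y-f^N_\theta(\x)}_2\leq C\sqrt{M}\,(1+N^{1-\beta}).
\]

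\textbf{Loss.} Since $L^N(\theta)=\frac{1}{2M}\N{\y-f^N_\theta(\x)}_2^2$, squaring the residual bound cancels the factor $M$ and leaves $\tfrac{1}{2}C^2(1+N^{1-\beta})^2$. The elementary inequality $(a+b)^2\leq 2a^2+2b^2$ then gives $L^N(\theta)\leq C^2(1+N^{2-2\beta})$, which is the stated loss bound after renaming the constant.

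The only point requiring care is tracking constants: the final constant must depend only on $C_\sigma$, the data bound $C_y$ (through $\CD$) and $\widetilde{C}$, and not on $M$ or $N$. This holds because the factor $M$ enters only through the explicit $\sqrt{M}$ in the residual bound and cancels in the normalization of the loss. There is no real obstacle beyond this bookkeeping.
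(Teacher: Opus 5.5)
Your proposal is correct and follows the same route as the paper: you bound each residual entry by $C_y+\abs{f^N_\theta(x^m)}$ using Lemma~\ref{lem:estimate_NNfunction} and Assumption~\ref{asm:data}, sum over the $M$ samples, and divide by $2M$ to get the loss bound. You also spell out the step $(a+b)^2\leq 2a^2+2b^2$ that turns $\tfrac12(C_y+CN^{1-\beta})^2$ into the stated form $C(1+N^{2-2\beta})$, which the paper leaves implicit.
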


\begin{proof}
    We compute with Lemma~\ref{lem:estimate_NNfunction} and using Assumption~\ref{asm:data} that
    \begin{equation}
        \N{\y-f^N_\theta(\x)}_2^2
        = \sum_{m=1}^M \left(y^m-f^N_\theta(x^m)\right)^2
        \leq M\left(C_y + CN^{1-\beta}\right)^2.
    \end{equation}
    Hence, $L^N(\theta) \leq \frac{1}{2}(C_y+CN^{1-\beta})^2$, which concludes the proof.
\end{proof}

\subsubsection[Gradient and Hessian of the loss]{Gradient and Hessian of the Loss~\protect{$L^N(\theta)$}}
\label{subsec:proof:gradientHessian}

We now derive expressions for the gradient and Hessian of the loss~$L^N$.
Throughout this section,
we denote by $\theta$ a generic vector of NN parameters.

\paragraph{Gradient of the loss.}
The gradient~$\nabla_\theta L^N(\theta)$ of the loss~$L^N(\theta)$ from \eqref{eq:loss} is given by
\begin{equation}
    \label{eq:Gradient_loss}
\begin{split}
    \nabla_\theta L^N(\theta)
    &= - \frac{1}{M}\sum_{m=1}^M \left( y^m - f_\theta^N(x^m) \right) \left(J_\theta^N(x^m)\right)^\top \\
    &= - \frac{1}{M} \left(J_\theta^N\right)^\top \left(\y - f_\theta^N(\x)\right),
\end{split}
\end{equation}
where
the Jacobian~$J_\theta^N\in\bbR^{M\times N(d+2)}$ of the NN function~$f_\theta^N$ is given by
\begin{equation}
    \label{eq:Jacobian_f}
    J_\theta^N = \left(J_\theta^N(x^1)^\top,\dots,J_\theta^N(x^m)^\top\right)^\top
\end{equation}
with $J_\theta^N(x)\in\bbR^{1\times N(d+2)}$ defined as
\begin{equation}
    \label{eq:Jacobian_fx}
\begin{split}
    J_\theta^N(x)
    &= \left(\nabla_\theta f_\theta^N(x)\right)^\top
    = \left(J_{\theta^1}^N(x), \dots, J_{\theta^N}^N(x)\right),
\end{split}
\end{equation}
where $J_{\theta^i}^N(x)\in\bbR^{1\times (d+2)}$ denotes the part of the Jacobian~$J_\theta^N(x)$ corresponding to the $i$-th neuron,
i.e.,
\begin{equation}
    \label{eq:Jacobian_fxi}
    J_{\theta^i}^N(x)
    =\frac{1}{N^\beta}\left(\sigma(w^i\cdot x+\eta^i),c^i\sigma'(w^i\cdot x+\eta^i)x^\top,c^i\sigma'(w^i\cdot x+\eta^i)\right).
\end{equation}
Following this notation, we further denote by $J_{\theta^i}^N\in\bbR^{M\times (d+2)}$ the part of the Jacobian~$J_\theta^N$ corresponding to the $i$-th neuron, i.e., $J_{\theta^i}^N = \big(J_{\theta^i}^N(x^1)^\top,\dots,J_{\theta^i}^N(x^m)^\top\big)^\top$.
A graphical illustration of these definitions is provided in Figure~\ref{fig:jacobian_cuts}.
\begin{figure}[ht!]
\centering
\begin{tikzpicture}[font=\small]
  \def\M{4}
  \def\N{7}
  \def\blockW{1.3}
  \def\blockH{0.5}

  \def\jrow{2}   
  \def\icol{2}   

  \foreach \r in {0,...,\numexpr\M-1\relax} {
    \fill[OXROYAL!60] ({(\icol-1)*\blockW},{-\r*\blockH})
      rectangle ++(\blockW,-\blockH);
  }

  \foreach \c in {0,...,\numexpr\N-1\relax} {
    \fill[OXGREEN!60] (\c*\blockW,{-(\jrow-1)*\blockH})
      rectangle ++(\blockW,-\blockH);
  }

  \fill[OXMIX!60]
    ({(\icol-1)*\blockW},{-(\jrow-1)*\blockH})
    rectangle ++(\blockW,-\blockH);

  \foreach \r in {0,...,\numexpr\M-1\relax} {
    \foreach \c in {0,...,\numexpr\N-1\relax} {
      \draw[thin] (\c*\blockW,-\r*\blockH)
        rectangle ++(\blockW,-\blockH);
    }
  }
  \draw[thick] (0,0) rectangle ++(\N*\blockW,-\M*\blockH);

  \draw[decorate,decoration={brace,amplitude=5pt,mirror},font=\tiny]
(\N*\blockW+0.1,-\M*\blockH) -- ++(0,\M*\blockH)
node[midway,right=4pt] {$M$ rows};

  \draw[decorate,decoration={brace,amplitude=5pt},font=\tiny]
(\N*\blockW,-\M*\blockH-0.65) -- ++(-\N*\blockW,0)
node[midway,below=4pt] {$N(d\!+\!2)$ columns};

  \draw[decorate,decoration={brace,amplitude=4pt,mirror},font=\tiny]
({(\icol-1)*\blockW}, {- \M*\blockH - 0.1})
-- ++(\blockW,0)
node[midway,below=4pt] {$(d\!+\!2)$ columns};

  \node[OXGREEN,anchor=east,font=\small]
at (-0.07,{-(\jrow-1)*\blockH - 0.5*\blockH})
{$J_\theta^N(x^m)$};

  \node[OXROYAL,anchor=south,font=\small]
    at ({(\icol-1)*\blockW + 0.5*\blockW},{0.02})
    {$J_{\theta^i}^N$};

  \node[font=\small]
at ({(\icol-1)*\blockW + 0.5*\blockW},
    {-(\jrow-1)*\blockH - 0.5*\blockH})
{$J_{\theta^i}^N(x^m)$};

\end{tikzpicture}
\caption{The Jacobian $J_\theta^N\in\bbR^{M\times N(d+2)}$ and its different slicings.}
\label{fig:jacobian_cuts}
\end{figure}
Recalling the notations from the notations paragraph in the introduction,
we note that it hold
$J_{\theta}^N = \frac{1}{N^{\beta}}s_{\theta}^N$,
$J_{\theta}^N(x) = \frac{1}{N^{\beta}}s_{\theta}^N(x)$,
$J_{\theta^i}^N = \frac{1}{N^{\beta}}s_{\theta^i}^N$, and
$J_{\theta^i}^N(x) = \frac{1}{N^{\beta}}s_{\theta^i}^N(x)$.

\paragraph{Hessian of the loss.}
The Hessian~$\nabla^2_\theta L^N(\theta)$ of the loss~$L^N(\theta)$ from \eqref{eq:loss} is given by
\begin{equation}
    \label{eq:Hessian_loss}
\begin{split}
    \nabla^2_\theta L^N(\theta)
    &= \frac{1}{M}\sum_{m=1}^M \left(J_\theta^N(x^m)\right)^\top J_\theta^N(x^m) - \frac{1}{M}\sum_{m=1}^M \left(y^m - f_\theta^N(x^m)\right) H_\theta^N(x^m) \\
    &= \underbrace{\frac{1}{M}\left(J_\theta^N\right)^\top J_\theta^N}_{=:G_\theta^N} \ \underbrace{- \ \frac{1}{M} \sum_{m=1}^M \left(y^m - f_\theta^N(x^m)\right) H_\theta^N(x^m)}_{=:S_\theta^N},
\end{split}
\end{equation}
where the Hessian~$H^N_\theta\in\bbR^{M\times N(d+2)\times N(d+2)}$ of the NN function~$f^N_\theta$ is given by
\begin{equation}
    \label{eq:Hessian_f}
    H^N_\theta=\left(H^N_\theta(x^1),\dots,H^N_\theta(x^m)\right),
\end{equation}
with $H^N_\theta(x)\in\bbR^{N(d+2)\times N(d+2)}$ defined as
\begin{equation}
    \label{eq:Hessian_fx}
\begin{split}
    H^N_\theta(x)
    &= \nabla_\theta^2 f^N_\theta(x)
    =
    \begin{pmatrix}
        H^N_{\theta^1}(x) & \mathbf{0} & \cdots & \mathbf{0} \\
        \mathbf{0} & H^N_{\theta^2}(x) & \cdots & \mathbf{0} \\
        \vdots & \vdots & \ddots & \vdots \\
        \mathbf{0} & \mathbf{0} & \cdots & H^N_{\theta^N}(x)
    \end{pmatrix},
\end{split}
\end{equation}
where $H^N_{\theta^i}(x)\in\bbR^{(d+2)\times (d+2)}$ denotes the part of the Hessian~$H^N_\theta(x)$ corresponding to the $i$-th neuron,
i.e.,
\begin{equation}
    \label{eq:Hessian_fxi}
    H^N_{\theta^i}(x) = \frac{1}{N^\beta}\begin{pmatrix}0 & \sigma'(w^i \cdot x + \eta^i)x^\top & \sigma'(w^i \cdot x + \eta^i) \\
    \sigma'(w^i \cdot x + \eta^i)x & c^i\sigma''(w^i \cdot x + \eta^i)xx^\top & c^i\sigma''(w^i \cdot x + \eta^i)x \\
    \sigma'(w^i \cdot x + \eta^i) & c^i\sigma''(w^i \cdot x + \eta^i)x^\top & c^i\sigma''(w^i \cdot x + \eta^i) \end{pmatrix}.
\end{equation}
Recalling the notations from the notations paragraph in the introduction,
we note that it holds
$H_{\theta^i}^N(x) = \frac{1}{N^{\beta}}h_{\theta^i}^N(x)$.

\paragraph{Decomposition of the Hessian.}
To simplify notations throughout the paper,
let us introduce the Gram-Gauss-Newton matrix~$G_\theta^N$ as
\begin{equation}
    \label{eq:GaussNewton}
    G_\theta^N = \frac{1}{M}\left(J_\theta^N\right)^\top J_\theta^N,
\end{equation}
and the non-convex contribution~$S_\theta^N$ of the Hessian as
\begin{equation}
    \label{eq:secondOrderPartHessian}
    S_\theta^N = - \frac{1}{M} \sum_{m=1}^M (y^m - f^N_\theta(x^m)) H_\theta^N(x^m).
\end{equation}
Hence, the Hessian in \eqref{eq:Hessian_loss} decomposes as $\nabla^2_\theta L^N(\theta) = G_\theta^N + S_\theta^N$.

\subsubsection{Auxiliary technical estimates}
\label{subsec:proof:auxtechnicalest}

In this section, we prove several auxiliary estimates, which are useful throughout later sections.

\begin{lemma}
    \label{lem:bound_s}
    Let $\theta^i=(c^i,w^i,\eta^i)$ and assume that $\absnormal{c^i}\leq \widetilde{C}$.
    Then it holds
    \begin{equation}
        \N{s^N_{\theta^i}}_{\mathrm{op}} \leq C\sqrt{M}
    \end{equation}
    for a constant $C=C(\sigma,\CD,\widetilde{C})$.
\end{lemma}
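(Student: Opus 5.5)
The plan is to bound the operator norm by the Frobenius norm and then control each entry of $s^N_{\theta^i}$ using the boundedness assumptions. Recall that $s^N_{\theta^i}\in\bbR^{M\times(d+2)}$ has $m$-th row
\[
s^N_{\theta^i}(x^m)=\left(\sigma(w^i\cdot x^m+\eta^i),\,c^i\sigma'(w^i\cdot x^m+\eta^i)(x^m)^\top,\,c^i\sigma'(w^i\cdot x^m+\eta^i)\right).
\]
We have $\N{s^N_{\theta^i}}_{\mathrm{op}}\leq \N{s^N_{\theta^i}}_{F}$, and
\[
\N{s^N_{\theta^i}}_{F}^2=\sum_{m=1}^M \N{s^N_{\theta^i}(x^m)}_2^2 .
\]
It therefore suffices to bound each row norm uniformly in $m$ by a constant depending only on $\sigma$, $\CD$ and $\widetilde C$.

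For a fixed row, Assumption~\ref{asm:NN_nonlinearity}\ref{asm:NN_sigma} gives $\abs{\sigma}\leq C_\sigma$, and Assumption~\ref{asm:NN_nonlinearity}\ref{asm:NN_sigma'} gives $\abs{\sigma'}\leq C_{\sigma'}$. The hypothesis gives $\abs{c^i}\leq\widetilde C$, and Assumption~\ref{asm:data} gives $\N{x^m}_2\leq C_x$. Combining these,
\[
\N{s^N_{\theta^i}(x^m)}_2^2\leq C_\sigma^2+\widetilde C^2C_{\sigma'}^2\left(C_x^2+1\right)=:C^2 .
\]
Summing over the $M$ rows yields $\N{s^N_{\theta^i}}_F^2\leq MC^2$, so $\N{s^N_{\theta^i}}_{\mathrm{op}}\leq C\sqrt M$ with $C=C(\sigma,\CD,\widetilde C)$.

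There is no real obstacle. The only points to check are that the bound does not depend on $w^i,\eta^i$, which holds because $\sigma$ and $\sigma'$ are globally bounded, and that the dependence on $M$ enters only through the number of rows, giving the factor $\sqrt M$. This is consistent with the paper's convention that $\CD$-dependence excludes $M$.
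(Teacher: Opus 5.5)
Your proposal is correct and follows the paper's proof essentially verbatim. Both bound the operator norm by the Frobenius norm, sum the squared row norms, and control each row by $C_\sigma^2+\widetilde{C}^2C_{\sigma'}^2(C_x^2+1)$ using the boundedness of $\sigma$, $\sigma'$, $|c^i|$ and $\|x^m\|_2$.
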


\begin{proof}
    With the operator norm being bounded by the Frobenius norm,
    we can estimate
        \begin{equation}
        \begin{split}
            \N{s^N_{\theta^i}}_{\mathrm{op}}^2 &\leq \N{s^N_{\theta^i}}_{\mathrm{F}}^2 = \sum_{m=1}^M \N{s^N_{\theta^i}(x^m)}_{2}^2 \\
            &= \sum_{m=1}^M \N{\left(\sigma(w^i \cdot x^m+\eta^i), c^i\sigma'(w^i \cdot x^m+\eta^i)(x^m)^\top,c^i\sigma'(w^i \cdot x^m+\eta^i)\right)}_{2}^2 \\
            &\leq M \left(C_\sigma^2 + \widetilde{C}^2C_{\sigma'}^2(C_x^2+1)\right),
        \end{split}
        \end{equation}
        where we used Assumptions~\ref{asm:NN_nonlinearity}\ref{asm:NN_sigma} and \ref{asm:NN_nonlinearity}\ref{asm:NN_sigma'} together with Assumption~\ref{asm:data} to obtain the last step.
    \end{proof}

\begin{lemma}
    \label{lem:bound_s_Lipschitz}
    Let $\theta^i=(c^i,w^i,\eta^i)$ and $\tilde\theta^i=(\tilde{c}^i,\widetilde{w}^i,\tilde{\eta}^i)$, and assume that $\absnormal{c^i}\leq \widetilde{C}$ and $\absnormal{\tilde{c}^i}\leq \widetilde{C}$.
    Then it holds
    \begin{equation}
        \N{s^N_{\theta^i}-s^N_{\tilde\theta^i}}_{\mathrm{op}}
        \leq C\sqrt{M}\Nbig{\theta^i-\widetilde{\theta}^i}_2
    \end{equation}
    for a constant $C=C(\sigma,\CD,\widetilde{C})$.
\end{lemma}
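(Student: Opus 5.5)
Following the proof of Lemma~\ref{lem:bound_s}, I will bound the operator norm by the Frobenius norm and reduce to a row-wise estimate:
\begin{equation*}
\N{s^N_{\theta^i}-s^N_{\tilde\theta^i}}_{\mathrm{op}}^2
\leq \sum_{m=1}^M \N{s^N_{\theta^i}(x^m)-s^N_{\tilde\theta^i}(x^m)}_2^2 .
\end{equation*}
It then suffices to show that, for each data sample $x=x^m$, the row difference satisfies $\Nnormal{s^N_{\theta^i}(x)-s^N_{\tilde\theta^i}(x)}_2\leq C\Nnormal{\theta^i-\tilde\theta^i}_2$ with $C=C(\sigma,\CD,\widetilde{C})$. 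Summing over $m$ gives the factor $M$, and taking square roots yields the claimed $\sqrt{M}$.

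For the row estimate, abbreviate $u=w^i\cdot x+\eta^i$ and $\tilde u=\widetilde{w}^i\cdot x+\tilde\eta^i$. By Cauchy--Schwarz and Assumption~\ref{asm:data},
\begin{equation*}
\abs{u-\tilde u}\leq \sqrt{C_x^2+1}\,\N{(w^i,\eta^i)-(\widetilde{w}^i,\tilde\eta^i)}_2 .
\end{equation*}
The first component is then handled by the $L_\sigma$-Lipschitz continuity of $\sigma$ (Assumption~\ref{asm:NN_nonlinearity}\ref{asm:NN_sigma}), which gives $\abs{\sigma(u)-\sigma(\tilde u)}\leq L_\sigma\abs{u-\tilde u}$.

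The remaining components have the form $c^i\sigma'(u)v$ with $v\in\{x^\top,1\}$, so their norms are bounded by $\sqrt{C_x^2+1}$. For these I will use the add-and-subtract decomposition
\begin{equation*}
c^i\sigma'(u)-\tilde c^i\sigma'(\tilde u)=(c^i-\tilde c^i)\sigma'(u)+\tilde c^i\bigl(\sigma'(u)-\sigma'(\tilde u)\bigr).
\end{equation*}
The first term is bounded by $C_{\sigma'}\abs{c^i-\tilde c^i}$ using Assumption~\ref{asm:NN_nonlinearity}\ref{asm:NN_sigma'}. The second term is bounded by $\widetilde{C}L_{\sigma'}\abs{u-\tilde u}$, using the hypothesis $\absnormal{\tilde c^i}\leq\widetilde{C}$ together with the Lipschitz continuity of $\sigma'$.

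Collecting these bounds, each component of the row difference is at most a constant depending on $C_{\sigma'},L_\sigma,L_{\sigma'},C_x,\widetilde{C}$ times $\Nnormal{\theta^i-\tilde\theta^i}_2$, which gives the row estimate. There is no real obstacle in this argument. The one point needing care is the product term $c^i\sigma'(\cdot)$: the a priori bound on the $c$'s is exactly what keeps the product Lipschitz, and without it the constant would not be uniform.
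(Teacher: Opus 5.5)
Your proof is correct and follows essentially the same route as the paper: bound the operator norm by the Frobenius norm and reduce to a row-wise estimate. Then use the Lipschitz continuity of $\sigma$ for the first entry, and split $c^i\sigma'(\cdot)-\tilde c^i\sigma'(\cdot)$ by add-and-subtract, using the boundedness and Lipschitz continuity of $\sigma'$, the a priori bound on the $c$'s, and $\|x^m\|_2\leq C_x$.
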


\begin{proof}
    With the operator norm being bounded by the Frobenius norm,
    we can estimate
    \begin{equation}
    \begin{split}
        \N{s^N_{\theta^i}-s^N_{\tilde\theta^i}}_{\mathrm{op}}^2
        &\leq \N{s^N_{\theta^i}-s^N_{\tilde\theta^i}}_{\mathrm{F}}^2
        = \sum_{m=1}^M \N{s^N_{\theta^i}(x^m)-s^N_{\tilde\theta^i}(x^m)}_{2}^2 \\
        &= \sum_{m=1}^M \left(\sigma(w^i \cdot x^m+\eta^i)-\sigma(\widetilde{w}^i \cdot x^m+\tilde{\eta}^i)\right)^2 \\
        &\qquad\quad\,+ \left(c^i\sigma'(w^i \cdot x^m+\eta^i)-\tilde{c}^i\sigma'(\widetilde{w}^i \cdot x^m+\tilde{\eta}^i)\right)^2\left(\N{x^m}_2^2+1\right) \\
        &\leq 2\sum_{m=1}^M L_\sigma^2 \left(\N{w^i-\widetilde{w}^i}_2^2 \N{x^m}_2^2 + \abs{\eta^i-\tilde{\eta}^i}^2\right) \\
        &\qquad\quad\,+ \left(C_{\sigma'}^2\abs{c^i-\tilde{c}^i}^2 + \widetilde{C}^2L_{\sigma'}^2\left(\N{w^i-\widetilde{w}^i}_2^2 \N{x^m}_2^2 + \abs{\eta^i-\tilde{\eta}^i}^2\right)\right)\cdot\\
        &\qquad\quad\,\qquad\,\cdot\left(\N{x^m}_2^2+1\right) \\
        &\leq CM \left(\abs{c^i-\tilde{c}^i}^2 + \N{w^i-\widetilde{w}^i}_2^2 + \abs{\eta^i-\tilde{\eta}^i}^2\right) \\
        &= CM \Nbig{\theta^i-\widetilde{\theta}^i}_2^2,
    \end{split}
    \end{equation}
    where we used Assumptions~\ref{asm:NN_nonlinearity}\ref{asm:NN_sigma} and \ref{asm:NN_nonlinearity}\ref{asm:NN_sigma'}  to obtain the next-to-last step together with Assumption~\ref{asm:data} in the last step.
\end{proof}

\begin{lemma}
    \label{lem:bound_h}
    Let $\theta^i=(c^i,w^i,\eta^i)$ and assume that $\absnormal{c^i}\leq \widetilde{C}$.
    Then, for $h^N_{\theta^i}(x^m)$, as defined in \eqref{eq:h}, it holds
    \begin{equation}
        \N{h^N_{\theta^i}(x^m)}_{\mathrm{op}} \leq C
    \end{equation}
    for a constant $C=C(\sigma,\CD,\widetilde{C})$.
\end{lemma}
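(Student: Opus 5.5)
We bound the operator norm of $h^N_{\theta^i}(x^m)$ by its Frobenius norm, as in the proofs of Lemmas~\ref{lem:bound_s} and~\ref{lem:bound_s_Lipschitz}. This reduces the claim to bounding each entry of the $(d+2)\times(d+2)$ matrix in \eqref{eq:h} separately. The matrix has three kinds of nonzero blocks: the off-diagonal first row and column, which involve $\sigma'$; the lower-right $(d+1)\times(d+1)$ block, which involves $c^i\sigma''$; and the zero $(1,1)$ entry.

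Concretely, we will write
\begin{equation*}
\N{h^N_{\theta^i}(x^m)}_{\mathrm{F}}^2 = 2\,\sigma'(w^i\cdot x^m+\eta^i)^2\left(\N{x^m}_2^2+1\right) + (c^i)^2\,\sigma''(w^i\cdot x^m+\eta^i)^2\left(\N{x^m}_2^2+1\right)^2.
\end{equation*}
The second term comes from observing that the lower-right block is the rank-one matrix $c^i\sigma''(\cdot)\,(x^m,1)^\top(x^m,1)$. Its Frobenius norm is therefore $\absnormal{c^i}\,\absnormal{\sigma''(\cdot)}\,(\N{x^m}_2^2+1)$.

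We then apply the bound $\absnormal{\sigma'}\leq C_{\sigma'}$ from Assumption~\ref{asm:NN_nonlinearity}\ref{asm:NN_sigma'} and the bound $\absnormal{\sigma''}\leq C_{\sigma''}$ from Assumption~\ref{asm:NN_nonlinearity}\ref{asm:NN_sigma''}. We also use the hypothesis $\absnormal{c^i}\leq\widetilde C$ and the data bound $\N{x^m}_2\leq C_x$ from Assumption~\ref{asm:data}. Together these give
\begin{equation*}
\N{h^N_{\theta^i}(x^m)}_{\mathrm{op}}^2\leq 2C_{\sigma'}^2(C_x^2+1)+\widetilde C^2C_{\sigma''}^2(C_x^2+1)^2 .
\end{equation*}
This is a constant depending only on $\sigma$, $\CD$ and $\widetilde C$, and it is independent of $N$, $i$, $m$ and of $w^i,\eta^i$, which is the claim.

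There is no genuine obstacle; the argument is elementary bookkeeping. The only point needing care is that the bound must be uniform in $w^i$ and $\eta^i$, which are not assumed bounded. This uniformity holds because $w^i$ and $\eta^i$ enter $h^N_{\theta^i}(x^m)$ only through the arguments of $\sigma'$ and $\sigma''$, and those functions are globally bounded.
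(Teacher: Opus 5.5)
Your proposal is correct and follows essentially the same route as the paper: bound the operator norm by the Frobenius norm, then use $\absnormal{\sigma'}\leq C_{\sigma'}$, $\absnormal{\sigma''}\leq C_{\sigma''}$, $\absnormal{c^i}\leq\widetilde C$ and $\|x^m\|_2\leq C_x$. Your rank-one observation for the lower-right block simply packages the paper's separate terms $\|x^m(x^m)^\top\|_F^2+2\|x^m\|_2^2+1$ into $(\|x^m\|_2^2+1)^2$, so the resulting bound is the same.
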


\begin{proof}
    With the operator norm being bounded by the Frobenius norm,
    we can estimate
    \begin{equation}
    \begin{split}
        \N{h^N_{\theta^i}(x^m)}_{\mathrm{op}}^2
        &= \N{\begin{pmatrix}0 \!\!&\!\! \sigma'(w^i \!\cdot\! x^m \!\!+\! \eta^i)(x^m)^\top \!\!&\!\! \sigma'(w^i \!\cdot\! x^m \!\!+\! \eta^i) \\
        \sigma'(w^i \!\cdot\! x^m \!\!+\! \eta^i)x^m \!\!\!&\!\! c^i\sigma''(w^i \!\cdot\! x^m \!\!+\! \eta^i)x^m(x^m)^\top \!\!\!&\!\! c^i\sigma''(w^i \!\cdot\! x^m \!\!+\! \eta^i)x^m \\
        \sigma'(w^i \!\cdot\! x^m \!\!+\! \eta^i) \!\!&\!\! c^i\sigma''(w^i \!\cdot\! x^m \!\!+\! \eta^i)(x^m)^\top \!\!\!&\!\! c^i\sigma''(w^i \!\cdot\! x^m \!\!+\! \eta^i) \end{pmatrix}}_{\mathrm{op}}^2 \\
        &\leq \N{\begin{pmatrix}0 \!\!&\!\! \sigma'(w^i \!\cdot\! x^m \!\!+\! \eta^i)(x^m)^\top \!\!&\!\! \sigma'(w^i \!\cdot\! x^m \!\!+\! \eta^i) \\
        \sigma'(w^i \!\cdot\! x^m \!\!+\! \eta^i)x^m \!\!\!&\!\! c^i\sigma''(w^i \!\cdot\! x^m \!\!+\! \eta^i)x^m(x^m)^\top \!\!\!&\!\! c^i\sigma''(w^i \!\cdot\! x^m \!\!+\! \eta^i)x^m \\
        \sigma'(w^i \!\cdot\! x^m \!\!+\! \eta^i) \!\!&\!\! c^i\sigma''(w^i \!\cdot\! x^m \!\!+\! \eta^i)(x^m)^\top \!\!\!&\!\! c^i\sigma''(w^i \!\cdot\! x^m \!\!+\! \eta^i) \end{pmatrix}}_{\mathrm{F}}^2 \\
        &\leq 2C_{\sigma'}^2\N{x^m}_2^2 + 2C_{\sigma'}^2 + \widetilde{C}^2C_{\sigma''}^2\N{x^m(x^m)^\top}_F^2 + 2\widetilde{C}^2C_{\sigma''}^2\N{x^m}_2^2 + \widetilde{C}^2C_{\sigma''}^2 \\
        &\leq 2C_{\sigma'}^2C_x^2 + 2C_{\sigma'}^2 + \widetilde{C}^2C_{\sigma''}^2C_x^4 + 2\widetilde{C}^2C_{\sigma''}^2C_x^2 + \widetilde{C}^2C_{\sigma''}^2,
    \end{split}
    \end{equation}
    where we used Assumptions~\ref{asm:NN_nonlinearity}\ref{asm:NN_sigma'} and \ref{asm:NN_nonlinearity}\ref{asm:NN_sigma''} to obtain the next-to-last step together with Assumption~\ref{asm:data} in the last step.
\end{proof}

\begin{lemma}
    \label{lem:estimate_S_operatornorm_i}
    Let $\theta^i=(c^i,w^i,\eta^i)$ and assume that $\absnormal{c^i}\leq \widetilde{C}$.
    Then it holds
    \begin{equation}
        \N{\frac{1}{M} \sum_{m=1}^M (y^m - f_{\theta}^N(x^m)) h_{\theta^i}^N(x^m)}_{\mathrm{op}}
        \leq C_S \sqrt{2L^N(\theta)}
    \end{equation}
    for a constant $C_S=C_S(\sigma,\CD,\widetilde{C})$.
\end{lemma}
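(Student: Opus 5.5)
The plan is to combine the triangle inequality for the operator norm with the uniform bound on $h^N_{\theta^i}(x^m)$ from Lemma~\ref{lem:bound_h}, and then convert the resulting $\ell^1$-type sum of residual magnitudes into the loss via Cauchy--Schwarz.

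First, by subadditivity and absolute homogeneity of $\N{\cdot}_{\mathrm{op}}$,
\begin{equation*}
    \N{\frac{1}{M} \sum_{m=1}^M (y^m - f_{\theta}^N(x^m)) h_{\theta^i}^N(x^m)}_{\mathrm{op}}
    \leq \frac{1}{M}\sum_{m=1}^M \abs{y^m - f_{\theta}^N(x^m)}\,\N{h_{\theta^i}^N(x^m)}_{\mathrm{op}}.
\end{equation*}
Since $\absnormal{c^i}\leq\widetilde{C}$, Lemma~\ref{lem:bound_h} applies to each $m$ and gives $\N{h_{\theta^i}^N(x^m)}_{\mathrm{op}}\leq C(\sigma,\CD,\widetilde{C})$, uniformly in $m$ and $i$. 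Hence the right-hand side is at most $\frac{C}{M}\sum_{m=1}^M\abs{y^m - f_{\theta}^N(x^m)}$.

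Second, by Cauchy--Schwarz in $\bbR^M$,
\begin{equation*}
    \frac{1}{M}\sum_{m=1}^M\abs{y^m - f_{\theta}^N(x^m)} \leq \frac{1}{\sqrt{M}}\N{\y - f_\theta^N(\x)}_2 = \sqrt{2L^N(\theta)},
\end{equation*}
using the definition \eqref{eq:loss}, which gives $\N{\y-f_\theta^N(\x)}_2^2 = 2M L^N(\theta)$. Setting $C_S$ equal to the constant from Lemma~\ref{lem:bound_h} gives the claim, with $C_S$ independent of $M$ and $N$.

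No step here is a real obstacle. The only point needing care is that the dependence on $M$ cancels exactly: the $1/M$ prefactor combined with Cauchy--Schwarz produces $1/\sqrt{M}$, which matches the normalization of $L^N$. The dependence on $N$ is absent because the bound is stated in terms of the rescaled $h^N_{\theta^i}$ rather than $H^N_{\theta^i}$.
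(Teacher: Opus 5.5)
Your proof is correct and follows the paper's argument: the triangle inequality, the uniform bound from Lemma~\ref{lem:bound_h}, and Cauchy--Schwarz to reach $\sqrt{2L^N(\theta)}$. The only difference is cosmetic. The paper applies Cauchy--Schwarz to the products $\abs{y^m-f^N_\theta(x^m)}\,\N{h^N_{\theta^i}(x^m)}_{\mathrm{op}}$ and uses the uniform bound afterwards, whereas you use the uniform bound on $\N{h^N_{\theta^i}(x^m)}_{\mathrm{op}}$ first.
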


\begin{proof}
   We compute with triangle inequality in the first and Cauchy-Schwarz inequality in the second step that
    \begin{equation}
    \begin{split}
        &\N{\frac{1}{M} \sum_{m=1}^M (y^m - f_{\theta}^N(x^m)) h_{\theta^i}^N(x^m)}_{\mathrm{op}}
        \leq \frac{1}{M} \sum_{m=1}^M \abs{y^m - f_\theta^N(x^m)}\N{h_{\theta^i}^N(x^m)}_{\mathrm{op}} \\
        &\qquad\quad\, \leq \left(\frac{1}{M} \sum_{m=1}^M \left(y^m - f_\theta^N(x^m)\right)^2\right)^{1/2} \left(\frac{1}{M} \sum_{m=1}^M\N{h_{\theta^i}^N(x^m)}_{\mathrm{op}}^2\right)^{1/2} \\
        &\qquad\quad\, \leq C \sqrt{2L^N(\theta)},
    \end{split}
    \end{equation}
    where we used Lemma~\ref{lem:bound_h} to obtain the last inequality.
\end{proof}

Let us further denote by $d_{\theta^i}^N\in\bbR^{(d+2)\times(d+2)}$ the matrix
    \begin{equation}
        \label{eq:def_d}
    \begin{split}
        d_{\theta^i}^N
        &= \gamma \Id_{d+2} - \frac{1}{N^{1-\beta}} \frac{1}{M} \sum_{m=1}^M (y^m - f_{\theta}^N(x^m)) h_{\theta^i}^N(x^m).
    \end{split}
    \end{equation}

\begin{lemma}
    \label{lem:estimate_Dinv_operatornorm_i}
    Let $\theta^i=(c^i,w^i,\eta^i)$ and assume that $\absnormal{c^i}\leq \widetilde{C}$.
    Let $\gamma > \frac{C_S}{N^{1-\beta}} \sqrt{L^N(\theta)}$ with $C_S=C_S(\sigma,\CD,\widetilde{C})$ as in Lemma~\ref{lem:estimate_S_operatornorm_i}.
    Then the matrix $d_{\theta^i}^N$ is positive definite (hence invertible) and it holds
    \begin{equation}
        \N{\left(d_{\theta^i}^N\right)^{-1}}_{\mathrm{op}}
        \leq \frac{1}{\gamma - \frac{C_S}{N^{1-\beta}} \sqrt{L^N(\theta)}}.
    \end{equation}
\end{lemma}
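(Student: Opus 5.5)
The plan is to treat $d_{\theta^i}^N$ as a symmetric perturbation of $\gamma\Id_{d+2}$ and control the perturbation with Lemma~\ref{lem:estimate_S_operatornorm_i}. Write
\[
d_{\theta^i}^N=\gamma\Id_{d+2}-E,\qquad E\coloneqq\frac{1}{N^{1-\beta}}\frac{1}{M}\sum_{m=1}^M\bigl(y^m-f_\theta^N(x^m)\bigr)h_{\theta^i}^N(x^m).
\]
The first thing to record is that each $h_{\theta^i}^N(x^m)$ in \eqref{eq:h} is symmetric; indeed it is, up to the factor $N^\beta$, the Hessian \eqref{eq:Hessian_fxi} of a scalar function. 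Hence $E$ and $d_{\theta^i}^N$ are symmetric and have real eigenvalues.

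Next, Lemma~\ref{lem:estimate_S_operatornorm_i} applies because $\absnormal{c^i}\leq\widetilde{C}$. It gives
\[
\N{E}_{\mathrm{op}}\leq \frac{C_S}{N^{1-\beta}}\sqrt{2L^N(\theta)}.
\]
There is a possible factor $\sqrt2$ discrepancy with the threshold in the statement, which is stated with $\sqrt{L^N(\theta)}$. I would resolve it by absorbing $\sqrt2$ into the constant $C_S$. This is allowed since $C_S$ is only specified as depending on $(\sigma,\CD,\widetilde{C})$. Equivalently, one can redo the Cauchy--Schwarz step with the constant renamed. From here on, write $\varepsilon\coloneqq\frac{C_S}{N^{1-\beta}}\sqrt{L^N(\theta)}$, so that $\N{E}_{\mathrm{op}}\leq\varepsilon$.

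For the conclusion, take any unit vector $v$. By the Rayleigh quotient,
\[
v^\top d_{\theta^i}^N v=\gamma-v^\top E v\geq\gamma-\N{E}_{\mathrm{op}}\geq\gamma-\varepsilon>0 .
\]
This uses the hypothesis $\gamma>\varepsilon$. So $\lambda_{\min}(d_{\theta^i}^N)\geq\gamma-\varepsilon>0$, which means $d_{\theta^i}^N$ is positive definite and therefore invertible. Because $d_{\theta^i}^N$ is symmetric positive definite, $\N{(d_{\theta^i}^N)^{-1}}_{\mathrm{op}}=1/\lambda_{\min}(d_{\theta^i}^N)\leq 1/(\gamma-\varepsilon)$, which is exactly the claimed bound.

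I expect no real obstacle here. The only points needing care are the symmetry of $h_{\theta^i}^N$, which is what lets us use eigenvalues rather than a Neumann series, and the bookkeeping of the $\sqrt2$ constant. A Neumann series argument would also bound the inverse, but it would not give positive definiteness directly.
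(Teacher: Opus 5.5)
Your proof is correct and is essentially the paper's argument: bound $\lambda_{\min}(d_{\theta^i}^N)$ from below by $\gamma-\N{E}_{\mathrm{op}}$, control $\N{E}_{\mathrm{op}}$ with Lemma~\ref{lem:estimate_S_operatornorm_i}, and invert using symmetric positive definiteness. Your Rayleigh-quotient step is slightly cleaner than the paper's, which writes $\gamma-\lambda_{\min}(E)$ where $\gamma-\lambda_{\max}(E)$ is meant (harmless, since both are $\geq\gamma-\N{E}_{\mathrm{op}}$), and the paper silently absorbs into $C_S$ the same factor $\sqrt{2}$ that you absorb explicitly.
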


\begin{proof}
    For the minimal eigenvalue of $d_{\theta^i}^N=\gamma\Id_{d+2}  - \frac{1}{N^{1-\beta}} \frac{1}{M} \sum_{m=1}^M (y^m - f_{\theta}^N(x^m)) h_{\theta^i}^N(x^m)$, we can estimate that
    \begin{equation}
        \label{eq:proof:lem:estimate_Dinv_operatornorm_i:10}
    \begin{split}
        \lambda_{\min}\left(d_{\theta^i}^N\right)
        &=\lambda_{\min}\left(\gamma\Id_{d+2}  - \frac{1}{N^{1-\beta}} \frac{1}{M} \sum_{m=1}^M (y^m - f_{\theta}^N(x^m)) h_{\theta^i}^N(x^m)\right) \\
        &=\gamma - \lambda_{\min}\left( \frac{1}{N^{1-\beta}} \frac{1}{M} \sum_{m=1}^M (y^m - f_{\theta}^N(x^m)) h_{\theta^i}^N(x^m)\right) \\
        &\geq \gamma - \frac{1}{N^{1-\beta}} \N{\frac{1}{M} \sum_{m=1}^M (y^m - f_{\theta}^N(x^m)) h_{\theta^i}^N(x^m)}_{\mathrm{op}} \\
        &\geq \gamma - \frac{C_S}{N^{1-\beta}} \sqrt{L^N(\theta)}
        = \gamma - \frac{C_S}{N^{1-\beta}} \sqrt{L^N(\theta)},
    \end{split}
    \end{equation}
    where the last inequality is due to Lemma~\ref{lem:estimate_S_operatornorm_i}.
    Since by assumption $\gamma > \frac{C_S}{N^{1-\beta}} \sqrt{L^N(\theta)}$, the right-hand side of \eqref{eq:proof:lem:estimate_Dinv_operatornorm_i:10} is strictly positive and thus
    the matrix $D_\theta^N$ is positive definite and hence invertible.
    The operator norm of its inverse can be bounded as
    \begin{equation}
    \begin{split}
        \N{\left(d_{\theta^i}^N\right)^{-1}}_{\mathrm{op}}
        = \abs{\lambda_{\max}\left(\left(d_{\theta^i}^N\right)^{-1}\right)}
        = \abs{\lambda_{\min}\left(d_{\theta^i}^N\right)}^{-1}
        \leq \frac{1}{\gamma - \frac{C_S}{N^{1-\beta}} \sqrt{L^N(\theta)}},
    \end{split}
    \end{equation}
    where we used \eqref{eq:proof:lem:estimate_Dinv_operatornorm_i:10} to obtain the last step.
\end{proof}

\subsubsection[Preliminary estimates for operator norms]{Preliminary estimates for operator norms of $G_\theta^N$, $S_\theta^N$, and $D_\theta^N$}
\label{subsec:proof:matrixbounds}

We now establish preliminary estimates for the operator norms of the matrices $G_\theta^N$, $S_\theta^N$, and $D_\theta^N$.

The entries of the Jacobian~$J_\theta^N$ of the NN function $f_\theta^N$ can be bounded as follows.
\begin{lemma}
    \label{lem:estimate_J_entries}
    Let $\theta=(\theta^i)_{i=1}^N$ with $\theta^i=(c^i,w^i,\eta^i)$ and assume that $\absnormal{c^i}\leq \widetilde{C}$ for all $i=1,\dots,N$.
    Then, for the Jacobian~$J_\theta^N$ of the NN function $f_\theta^N$, as defined in \eqref{eq:Jacobian_f},
    it holds
    \begin{equation}
        \abs{(J_\theta^N)_{mj}} \leq \frac{C}{N^{\beta}}
    \end{equation}
    for a constant $C=C(\sigma,\CD,\widetilde{C})$.
\end{lemma}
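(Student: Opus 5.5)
The bound follows directly from the explicit form of the Jacobian entries, so the plan is a short case analysis over the columns of $J_\theta^N$.

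By \eqref{eq:Jacobian_f}--\eqref{eq:Jacobian_fxi}, every entry $(J_\theta^N)_{mj}$ is one component of the row vector $J_{\theta^i}^N(x^m)=\frac{1}{N^\beta}s^N_{\theta^i}(x^m)$. Here the column index $j$ picks out a neuron $i$ and one of its $d+2$ parameters. It therefore suffices to bound each component of $s^N_{\theta^i}(x^m)$ by a constant independent of $i$, $m$, $j$ and $N$.

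There are three types of columns, depending on which parameter of neuron $i$ the index $j$ refers to.
\begin{itemize}
\item For the $c^i$-coordinate, the entry is $\frac{1}{N^\beta}\sigma(w^i\cdot x^m+\eta^i)$. Assumption~\ref{asm:NN_nonlinearity}\ref{asm:NN_sigma} bounds it by $C_\sigma/N^\beta$.
\item For a $w^i$-coordinate, the entry is $\frac{1}{N^\beta}c^i\sigma'(w^i\cdot x^m+\eta^i)x^m_\ell$ for some component $\ell$ of $x^m$. Using the hypothesis $\absnormal{c^i}\leq\widetilde{C}$, Assumption~\ref{asm:NN_nonlinearity}\ref{asm:NN_sigma'}, and $\absnormal{x^m_\ell}\leq\N{x^m}_2\leq C_x$ from Assumption~\ref{asm:data}, it is bounded by $\widetilde{C}C_{\sigma'}C_x/N^\beta$.
\item For the $\eta^i$-coordinate, the entry is $\frac{1}{N^\beta}c^i\sigma'(w^i\cdot x^m+\eta^i)$. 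By the same two bounds it is at most $\widetilde{C}C_{\sigma'}/N^\beta$.
\end{itemize}

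Taking $C=\max\{C_\sigma,\widetilde{C}C_{\sigma'}\max(C_x,1)\}$ gives the claim, with $C$ depending only on $\sigma$, $\CD$ and $\widetilde{C}$.

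There is no real obstacle in this argument. The only points needing care are the indexing convention, i.e.\ that each column $j$ belongs to exactly one neuron and one of its parameters, and the fact that no bound on $w^i$ or $\eta^i$ is required, because $\sigma$ and $\sigma'$ are globally bounded.
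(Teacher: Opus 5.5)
Your proof is correct and matches the paper's argument. The paper likewise bounds the three types of Jacobian entries (the $c^i$-, $w^i$-, and $\eta^i$-columns) by $C_\sigma/N^\beta$, $\widetilde{C}C_{\sigma'}C_x/N^\beta$, and $\widetilde{C}C_{\sigma'}/N^\beta$, using only the global boundedness of $\sigma$ and $\sigma'$, the hypothesis $|c^i|\le\widetilde{C}$, and $\|x^m\|_2\le C_x$.
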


\begin{proof}
    Using Assumptions~\ref{asm:NN_nonlinearity}\ref{asm:NN_sigma} and \ref{asm:NN_nonlinearity}\ref{asm:NN_sigma'}, Assumption~\ref{asm:data}, as well as the assumption that $\absnormal{c^i}\leq \widetilde{C}$ for all $i=1,\dots,N$,
    we can upper bound the individual entries of the Jacobian~$J^N_\theta$ as
    \begin{equation}
        \abs{\frac{1}{N^\beta}\sigma(w^i \cdot x + \eta^i)} \leq \frac{C_\sigma}{N^{\beta}},
        \qquad
        \abs{\frac{1}{N^\beta}c^i\sigma'(w^i \cdot x + \eta^i)x_j} \leq \frac{\widetilde{C}C_{\sigma'}C_x}{N^{\beta}}
    \end{equation}
    or
    \begin{equation}
        \abs{\frac{1}{N^\beta}c^i\sigma'(w^i \cdot x + \eta^i)} \leq \frac{\widetilde{C}C_{\sigma'}}{N^{\beta}},
    \end{equation}
    respectively, which concludes the proof.
\end{proof}

The operator norm of the Gauss-Newton matrix $G_\theta^N=\frac{1}{M}(J_\theta^N)^\top J_\theta^N$ can be bounded as follows.
\begin{lemma}
    \label{lem:op_norm:GaussNewton}
    Let $\theta=(\theta^i)_{i=1}^N$ with $\theta^i=(c^i,w^i,\eta^i)$ and assume that $\absnormal{c^i}\leq \widetilde{C}$ for all $i=1,\dots,N$.
    Then, for the Gauss-Newton matrix~$G_\theta^N$ of the NN function $f_\theta^N$, as defined in \eqref{eq:GaussNewton}, it holds
    \begin{equation}
        \N{G_\theta^N}_{\mathrm{op}}
        \leq \frac{C}{N^{2\beta-1}}
    \end{equation}
    for a constant $C=C(\sigma,\CD,\widetilde{C})$.
\end{lemma}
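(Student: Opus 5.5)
The plan is to exploit the fact that $G_\theta^N=\frac1M (J_\theta^N)^\top J_\theta^N$ is positive semidefinite and has the same nonzero spectrum as the $M\times M$ matrix $\frac1M J_\theta^N (J_\theta^N)^\top$. This lets us avoid working in the $N(d+2)$-dimensional parameter space, where a crude entrywise bound would produce a factor growing with $N$. Concretely, we will use
\begin{equation*}
\N{G_\theta^N}_{\mathrm{op}}=\frac1M\N{J_\theta^N}_{\mathrm{op}}^2=\frac1M\N{J_\theta^N (J_\theta^N)^\top}_{\mathrm{op}},
\end{equation*}
and then bound the small Gram matrix $J_\theta^N (J_\theta^N)^\top\in\bbR^{M\times M}$.

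The first step is to bound the entries of the Gram matrix. Its $(m,n)$ entry is $\sum_{j=1}^{N(d+2)} (J_\theta^N)_{mj}(J_\theta^N)_{nj}$. By Lemma~\ref{lem:estimate_J_entries}, each summand is at most $C^2/N^{2\beta}$ in absolute value. Since there are $N(d+2)$ summands, every entry is at most $C(d+2)N^{1-2\beta}$.

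Equivalently, and perhaps more cleanly, we can argue through the Frobenius norm. By the identity $J^N_\theta=\frac{1}{N^\beta}s^N_\theta$ and Lemma~\ref{lem:bound_s}, we have $\N{J_\theta^N}_{\mathrm{F}}^2=\frac{1}{N^{2\beta}}\sum_{i=1}^N\N{s^N_{\theta^i}}_{\mathrm{F}}^2$. The Frobenius bound established inside the proof of Lemma~\ref{lem:bound_s} gives $\N{s^N_{\theta^i}}_{\mathrm{F}}^2\le CM$, so
\begin{equation*}
\N{J_\theta^N}_{\mathrm{F}}^2\le \frac{CMN}{N^{2\beta}}.
\end{equation*}
Dividing by $M$ then yields $\N{G_\theta^N}_{\mathrm{op}}\le \frac1M\N{J_\theta^N}_{\mathrm{F}}^2\le C N^{1-2\beta}$, with $C$ depending only on $\sigma$, $\CD$ (through $C_x$ and $d$) and $\widetilde C$, and in particular independent of $M$ and $N$. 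This is the claimed bound $C/N^{2\beta-1}$.

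The only point requiring care is to avoid a naive bound on the full $N(d+2)\times N(d+2)$ matrix, such as bounding every entry by $C/N^{2\beta}$ and multiplying by the dimension. That approach would pick up an extra factor of $N$. Routing the estimate through $\N{J}_{\mathrm{op}}^2\le\N{J}_{\mathrm{F}}^2$, which involves only $MN(d+2)$ bounded terms, is what keeps the rate sharp and also cancels the $1/M$ prefactor. The remaining checks are routine: the assumption $\absnormal{c^i}\le\widetilde C$ must hold for every neuron so that the bounds used above apply uniformly in $i$.
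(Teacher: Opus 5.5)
Your proof is correct and is essentially the paper's argument: both reduce $\N{G_\theta^N}_{\mathrm{op}}$ to $\frac{1}{M}\N{J_\theta^N}_{\mathrm{F}}^2$ and then use the uniform $\CO(N^{-\beta})$ bound on the Jacobian entries. The paper gets there via the triangle and Cauchy--Schwarz inequalities over the data points, while you use $\N{J_\theta^N}_{\mathrm{op}}\le\N{J_\theta^N}_{\mathrm{F}}$. One small correction to your closing remark: the entrywise route on the full $N(d+2)\times N(d+2)$ matrix does not actually lose a factor of $N$, since $\N{G_\theta^N}_{\mathrm{op}}\le N(d+2)\max_{j,k}\abs{(G_\theta^N)_{jk}}\le C(d+2)N^{1-2\beta}$ already gives the same rate.
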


\begin{proof}
    With the definition of the operator norm in the first line, triangle inequality in the third, and Cauchy-Schwarz inequality in the fifth line,
    we estimate
    \begin{equation}
    \begin{split}
        \N{G_\theta^N}_{\mathrm{op}}
        =\N{\frac{1}{M}\left(J_\theta^N\right)^\top J_\theta^N}_{\mathrm{op}}
        &= \sup_{v\in\bbR^{N(d+2)}:\N{v}_2=1} \N{\frac{1}{M} \left(J_\theta^N\right)^\top J_\theta^Nv}_2 \\
        &=\sup_{v\in\bbR^{N(d+2)}:\N{v}_2=1} \N{\frac{1}{M} \sum_{m=1}^M \left(J_\theta^N(x^m)\right)^\top\left(J_\theta^N(x^m)v\right)}_2 \\
        &\leq \sup_{v\in\bbR^{N(d+2)}:\N{v}_2=1} \frac{1}{M} \sum_{m=1}^M \N{\left(J_\theta^N(x^m)\right)^\top(J_\theta^N(x^m)v)}_2 \\
        &= \sup_{v\in\bbR^{N(d+2)}:\N{v}_2=1} \frac{1}{M} \sum_{m=1}^M \abs{J_\theta^N(x^m)v} \N{J_\theta^N(x^m)}_2\\
        &\leq \sup_{v\in\bbR^{N(d+2)}:\N{v}_2=1} \frac{1}{M} \sum_{m=1}^M \N{J_\theta^N(x^m)}_2\N{v}_2 \N{J_\theta^N(x^m)}_2\\
        &= \frac{1}{M} \sum_{m=1}^M \N{J_\theta^N(x^m)}_2^2
        = \frac{1}{M} \sum_{m=1}^M \sum_{j=1}^{N(d+2)} \abs{(J_\theta^N)_{mj}}^2 \\
        &\leq CN(d+2) \frac{1}{N^{2\beta}}
        = C(d+2) \frac{1}{N^{2\beta-1}},
    \end{split}
    \end{equation}
    where we employed Lemma~\ref{lem:estimate_J_entries} to obtain the bound in the last line.
\end{proof}

The operator norm of the Hessian~$H^N_\theta$ of the NN function $f^N_\theta$ can be bounded as follows.
\begin{lemma}
    \label{lem:op_norm:Hessian}
    Let $\theta=(\theta^i)_{i=1}^N$ with $\theta^i=(c^i,w^i,\eta^i)$ and assume that $\absnormal{c^i}\leq \widetilde{C}$ for all $i=1,\dots,N$.
    Then, for the Hessian~$H^N_\theta$ of the NN function $f^N_\theta$, as defined in \eqref{eq:Hessian_f} and \eqref{eq:Hessian_fx}, it holds
    \begin{equation}
        \N{H^N_\theta(x^m)}_{\mathrm{op}}
        \leq \frac{C}{N^{\beta}}
    \end{equation}
    for a constant $C=C(\sigma,\CD,\widetilde{C})$.
    Moreover, $\N{H^N_{\theta^i}(x^m)}_{\mathrm{op}}
        \leq \frac{C}{N^{\beta}}$ for all $i=1,\dots,N$.
\end{lemma}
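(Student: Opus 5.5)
The statement to prove is Lemma~\ref{lem:op_norm:Hessian}. The plan is to exploit the block-diagonal structure \eqref{eq:Hessian_fx} of $H^N_\theta(x^m)$ and reduce everything to the single-neuron bound of Lemma~\ref{lem:bound_h}.

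First, I handle the individual blocks. By \eqref{eq:Hessian_fxi} and the notation $H^N_{\theta^i}(x)=\frac{1}{N^\beta}h^N_{\theta^i}(x)$, homogeneity of the operator norm gives
\begin{equation*}
\N{H^N_{\theta^i}(x^m)}_{\mathrm{op}} = \frac{1}{N^\beta}\N{h^N_{\theta^i}(x^m)}_{\mathrm{op}}.
\end{equation*}
Since $\absnormal{c^i}\leq\widetilde{C}$ is assumed for every $i$, Lemma~\ref{lem:bound_h} applies to each neuron. This bounds the right-hand side by $C/N^\beta$ with $C=C(\sigma,\CD,\widetilde{C})$, uniformly in $i$ and $m$, which is the second claim.

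Second, I pass to the full Hessian. For a block-diagonal matrix, the operator norm equals the maximum of the operator norms of its diagonal blocks. To see this, split $v=(v^1,\dots,v^N)$ according to the neuron blocks. Then
\begin{equation*}
\N{H^N_\theta(x^m)v}_2^2=\sum_{i=1}^N\N{H^N_{\theta^i}(x^m)v^i}_2^2\leq \max_i\N{H^N_{\theta^i}(x^m)}_{\mathrm{op}}^2\sum_{i=1}^N\N{v^i}_2^2 .
\end{equation*}
Combining this with the first step gives $\N{H^N_\theta(x^m)}_{\mathrm{op}}\leq C/N^\beta$ with the same constant.

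There is no real obstacle here. The only point that needs care is the uniformity of the constant in $i$. It holds because Lemma~\ref{lem:bound_h} depends only on $\sigma$, $\CD$ and $\widetilde{C}$, and not on the neuron index or on $w^i,\eta^i$: boundedness of $\sigma'$ and $\sigma''$ removes any dependence on the inner weights.
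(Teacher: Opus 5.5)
Your proposal is correct and follows essentially the same route as the paper: you use the block-diagonal structure \eqref{eq:Hessian_fx} to reduce to the individual blocks, pull out the factor $1/N^\beta$, and apply Lemma~\ref{lem:bound_h}, whose constant is uniform in $i$. The only difference is cosmetic: the paper phrases the block-diagonal step through the largest absolute eigenvalues of the symmetric blocks, whereas your direct splitting $v=(v^1,\dots,v^N)$ gives the needed inequality without appealing to symmetry.
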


\begin{proof}
    Noticing that the Hessian $H^N_\theta(x)$ is a block-diagonal matrix with $N$ blocks of size $(d+2)\times(d+2)$
    and recalling that the largest eigenvalue of a block-diagonal matrix is the maximum of the largest eigenvalues of the individual block,
    we have
    \begin{equation}
    \begin{split}
        \N{H^N_\theta(x^m)}_{\mathrm{op}}
        &= \abs{\lambda_{\max}\left(H^N_\theta(x^m)\right)} \\
        &= \max_{i=1,\dots,N} \abs{\lambda_{\max}\left(
        H^N_{\theta^i}(x^m)\right)}
        = \max_{i=1,\dots,N} \abs{\lambda_{\max}\left(
        \frac{1}{N^\beta} h^N_{\theta^i}(x^m)\right)} \\
        &= \max_{i=1,\dots,N} \N{\frac{1}{N^\beta} h^N_{\theta^i}(x^m)}_{\mathrm{op}} \\
        &\leq \frac{C}{N^{\beta}},
    \end{split}
    \end{equation}
    where we used in the third line that the absolute value of the largest eigenvalue of the $i$-th block is equal to the operator norm.
    The inequality in the last line follows from Lemma~\ref{lem:bound_h}.
\end{proof}

The operator norm of the non-convex contribution~$S_\theta^N$ of the Hessian $\nabla^2_\theta L^N(\theta)$ can be bounded as follows.

\begin{lemma}
    \label{lem:estimate_S_operatornorm}
    Let $\theta=(\theta^i)_{i=1}^N$ with $\theta^i=(c^i,w^i,\eta^i)$ and assume that $\absnormal{c^i}\leq \widetilde{C}$ for all $i=1,\dots,N$.
    Then, for the matrix~$S_\theta^N$, as defined in \eqref{eq:secondOrderPartHessian}, it holds
    \begin{equation}
        \N{S_\theta^N}_{\mathrm{op}}
        \leq \frac{C_S}{N^{\beta}} \sqrt{L^N(\theta)}
    \end{equation}
    for the same constant $C_S=C_S(\sigma,\CD,\widetilde{C})$ as in Lemma~\ref{lem:estimate_S_operatornorm_i}.
\end{lemma}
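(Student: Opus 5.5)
The plan is to exploit the block-diagonal structure of $S_\theta^N$ and reduce the operator norm to a maximum over the $N$ diagonal blocks. By \eqref{eq:secondOrderPartHessian} and \eqref{eq:Hessian_fx}, each $H_\theta^N(x^m)$ is block-diagonal with $(d+2)\times(d+2)$ blocks $H^N_{\theta^i}(x^m)=\frac{1}{N^\beta}h^N_{\theta^i}(x^m)$. Since a linear combination of block-diagonal matrices with a common block pattern is again block-diagonal, $S_\theta^N$ is block-diagonal with $i$-th block
\begin{equation*}
    S_{\theta^i}^N = -\frac{1}{N^\beta}\,\frac{1}{M}\sum_{m=1}^M \left(y^m-f_\theta^N(x^m)\right) h^N_{\theta^i}(x^m).
\end{equation*}

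Next I will use that the operator norm of a block-diagonal matrix equals the maximum of the operator norms of its blocks. This holds because, for $v=(v^1,\dots,v^N)$, we have $\N{S_\theta^N v}_2^2=\sum_i \N{S^N_{\theta^i}v^i}_2^2\le \max_i\N{S^N_{\theta^i}}_{\mathrm{op}}^2\N{v}_2^2$; alternatively, since all blocks are symmetric, one can argue via eigenvalues exactly as in the proof of Lemma~\ref{lem:op_norm:Hessian}. This yields
\begin{equation*}
    \N{S_\theta^N}_{\mathrm{op}} = \frac{1}{N^\beta}\max_{i=1,\dots,N}\N{\frac{1}{M}\sum_{m=1}^M \left(y^m-f_\theta^N(x^m)\right) h^N_{\theta^i}(x^m)}_{\mathrm{op}}.
\end{equation*}

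Each term inside the maximum is bounded by $C_S\sqrt{2L^N(\theta)}$ by Lemma~\ref{lem:estimate_S_operatornorm_i}, which applies uniformly in $i$ because $\absnormal{c^i}\le\widetilde C$ for all $i$ and the constant $C_S$ depends only on $(\sigma,\CD,\widetilde C)$. This gives $\N{S_\theta^N}_{\mathrm{op}}\le \frac{\sqrt2 C_S}{N^\beta}\sqrt{L^N(\theta)}$.

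The only real obstacle is the constant bookkeeping: the statement claims the bound with the same $C_S$ and without the factor $\sqrt2$. I would handle this by observing that the Cauchy--Schwarz step in Lemma~\ref{lem:estimate_S_operatornorm_i} actually yields $\big(\frac1M\sum_m (y^m-f^N_\theta(x^m))^2\big)^{1/2}=\sqrt{2L^N(\theta)}$ times a bound $C$ on $\N{h^N_{\theta^i}(x^m)}_{\mathrm{op}}$. The constant $C_S$ there can therefore be understood as absorbing the $\sqrt2$ (i.e., $C_S$ is defined as $\sqrt2$ times the bound from Lemma~\ref{lem:bound_h}), making both statements consistent. Otherwise the proof is a direct application of earlier results.
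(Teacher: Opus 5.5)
Your proof is correct and is the alternative the paper itself mentions in parentheses. You reduce $\Nnormal{S_\theta^N}_{\mathrm{op}}$ to a maximum over the diagonal blocks and then invoke Lemma~\ref{lem:estimate_S_operatornorm_i}; the paper's main argument instead applies the triangle and Cauchy--Schwarz inequalities to $S_\theta^N$ directly and then uses the blockwise bound $\Nnormal{H_\theta^N(x^m)}_{\mathrm{op}}\le C/N^\beta$ from Lemma~\ref{lem:op_norm:Hessian}, so both use the same ingredients in a different order. Your handling of the $\sqrt2$ is more careful than the paper's, which silently absorbs the factor into a generic constant: taking $C_S$ to be $\sqrt2$ times the bound from Lemma~\ref{lem:bound_h} keeps Lemma~\ref{lem:estimate_S_operatornorm_i} true and lets the present bound hold with the same constant.
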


\begin{proof}
   We compute with triangle inequality in the second and Cauchy-Schwarz inequality in the third line that
    \begin{equation}
    \begin{split}
        \N{S_\theta^N}_{\mathrm{op}}
        &= \N{\frac{1}{M} \sum_{m=1}^M \left(y^m - f_\theta^N(x^m)\right) H_\theta^N(x^m)}_{\mathrm{op}} \\
        &\leq \frac{1}{M} \sum_{m=1}^M \abs{y^m - f_\theta^N(x^m)}\N{H_\theta^N(x^m)}_{\mathrm{op}} \\
        &\leq \left(\frac{1}{M} \sum_{m=1}^M \left(y^m - f_\theta^N(x^m)\right)^2\right)^{1/2} \left(\frac{1}{M} \sum_{m=1}^M\N{H_\theta^N(x^m)}_{\mathrm{op}}^2\right)^{1/2} \\
        & \leq \sqrt{2L^N(\theta)} \frac{C}{N^{\beta}} = \frac{C}{N^{\beta}} \sqrt{L^N(\theta)},
    \end{split}
    \end{equation}
    where we used Lemma~\ref{lem:op_norm:Hessian} to obtain the last inequality.
    (Alternatively, we could have used Lemma~\ref{lem:estimate_S_operatornorm_i} for the proof.)
\end{proof}

Let us recall from \eqref{eq:D} the definition $D_{\theta}^N = \gamma^N \Id_{N(d+2)} + S_{\theta}^N$.
Since $S_{\theta}^N$ is a sum of $M$ block-diagonal (of the same shape) matrices $H^N_\theta(x^m)$, $m=1,\dots,M$,
$S_{\theta}^N$ and hence $D_{\theta}^N$ are block-diagonal of the same form as \eqref{eq:Hessian_fx}.
We denote the individual blocks by $D_{\theta^i}^N$, i.e.,
$D_{\theta^i}^N\in\bbR^{(d+2)\times (d+2)}$ denotes the part of the matrix~$D_{\theta}^N$ corresponding to the $i$-th neuron.

\begin{lemma}
    \label{lem:estimate_Dinv_operatornorm}
    Let $\theta=(\theta^i)_{i=1}^N$ with $\theta^i=(c^i,w^i,\eta^i)$ and assume that $\absnormal{c^i}\leq \widetilde{C}$ for all $i=1,\dots,N$.
    Let $\gamma > \frac{C_S}{N^{1-\beta}} \sqrt{L^N(\theta)}$ with $C_S=C_S(\sigma,\CD,\widetilde{C})$ as in Lemma~\ref{lem:estimate_S_operatornorm_i}.
    Then the matrix $D_\theta^N$ is positive definite (hence invertible) and it holds
    \begin{equation}
        \N{\left(D_\theta^N\right)^{-1}}_{\mathrm{op}}
        \leq \frac{N^{\beta}}{\gamma N^{1-\beta} - C_S \sqrt{L^N(\theta)}}.
    \end{equation}
\end{lemma}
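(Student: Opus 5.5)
The plan is to mirror the argument of Lemma~\ref{lem:estimate_Dinv_operatornorm_i}, now at the level of the full block-diagonal matrix $D_\theta^N=\gamma^N\Id_{N(d+2)}+S_\theta^N$. Since $S_\theta^N$ is symmetric (a weighted sum of the symmetric Hessians $H_\theta^N(x^m)$), so is $D_\theta^N$. We can therefore work with eigenvalues: the goal is to bound $\lambda_{\min}(D_\theta^N)$ from below by a strictly positive quantity and then invert.

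First, Weyl's inequality gives
\[
\lambda_{\min}(D_\theta^N)\geq \gamma^N-\N{S_\theta^N}_{\mathrm{op}}.
\]
Invoking Lemma~\ref{lem:estimate_S_operatornorm}, which applies under the standing assumption $\absnormal{c^i}\leq\widetilde{C}$, we get $\N{S_\theta^N}_{\mathrm{op}}\leq \frac{C_S}{N^\beta}\sqrt{L^N(\theta)}$. Substituting $\gamma^N=\gamma/N^{2\beta-1}$ and pulling out a factor $1/N^\beta$ yields
\[
\lambda_{\min}(D_\theta^N)\geq \frac{\gamma}{N^{2\beta-1}}-\frac{C_S}{N^\beta}\sqrt{L^N(\theta)}=\frac{\gamma N^{1-\beta}-C_S\sqrt{L^N(\theta)}}{N^{\beta}}.
\]
The hypothesis $\gamma>\frac{C_S}{N^{1-\beta}}\sqrt{L^N(\theta)}$ is exactly the statement that the numerator is positive. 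Hence $D_\theta^N$ is positive definite and in particular invertible.

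For a symmetric positive definite matrix, $\N{(D_\theta^N)^{-1}}_{\mathrm{op}}=1/\lambda_{\min}(D_\theta^N)$, which gives the claimed bound $N^\beta/(\gamma N^{1-\beta}-C_S\sqrt{L^N(\theta)})$.

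Alternatively, one can use the block structure directly: $D_{\theta^i}^N=\frac{1}{N^{2\beta-1}}d_{\theta^i}^N$, since $\frac{1}{N^\beta}=\frac{1}{N^{2\beta-1}}\cdot\frac{1}{N^{1-\beta}}$. Combining this with Lemma~\ref{lem:estimate_Dinv_operatornorm_i} and the fact that the spectrum of a block-diagonal matrix is the union of the block spectra reproduces the same estimate, up to the factor $\sqrt2$ ambiguity in $C_S$.

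The only real subtlety is the constant bookkeeping. Lemma~\ref{lem:estimate_S_operatornorm} is stated with $\sqrt{2L^N}$ absorbed into $C_S$, and we must make sure the same $C_S$ as in the hypothesis appears in both places. If needed, I would redefine $C_S$ to include the $\sqrt2$ consistently in Lemmas~\ref{lem:estimate_S_operatornorm_i}, \ref{lem:estimate_Dinv_operatornorm_i} and~\ref{lem:estimate_S_operatornorm}. Beyond that, the argument is routine.
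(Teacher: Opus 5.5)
Your proof is correct and is essentially the paper's argument: bound $\lambda_{\min}(D_\theta^N)\geq \gamma^N-\|S_\theta^N\|_{\mathrm{op}}$ using Lemma~\ref{lem:estimate_S_operatornorm}, get positivity from the hypothesis on $\gamma$, and invert. The paper also mentions your block-wise alternative via Lemma~\ref{lem:estimate_Dinv_operatornorm_i}, and your remark about the stray factor $\sqrt{2}$ in $C_S$ is right --- the paper silently absorbs it when passing from $\sqrt{2L^N(\theta)}$ to $\sqrt{L^N(\theta)}$ --- and it is harmless.
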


\begin{proof}
    We can estimate for the minimal eigenvalue of $D_\theta^N=\gamma^N\Id_{N(d+2)} + S_\theta^N$ that
    \begin{equation}
        \label{eq:proof:lem:estimate_Dinv_operatornorm:10}
    \begin{split}
        \lambda_{\min}\left(D_\theta^N\right)
        =\lambda_{\min}\left(\gamma^N\Id_{N(d+2)} + S_\theta^N\right)
        &=\gamma^N + \lambda_{\min}\left(S_\theta^N\right) \\
        &\geq \gamma^N - \N{S_\theta^N}_{\mathrm{op}} \\
        &\geq \gamma^N - \frac{C_S}{N^{\beta}} \sqrt{L^N(\theta)}
        = \frac{\gamma}{N^{2\beta-1}} - \frac{C_S}{N^{\beta}} \sqrt{L^N(\theta)},
    \end{split}
    \end{equation}
    where the last inequality is due to Lemma~\ref{lem:estimate_S_operatornorm}.
    Since by assumption $\gamma > \frac{C_S}{N^{1-\beta}} \sqrt{L^N(\theta)}$, the right-hand side of \eqref{eq:proof:lem:estimate_Dinv_operatornorm:10} is strictly positive and thus
    the matrix $D_\theta^N$ is positive definite and hence invertible.
    The operator norm of its inverse can be bounded as
    \begin{equation}
    \begin{split}
        \N{\left(D_\theta^N\right)^{-1}}_{\mathrm{op}}
        = \abs{\lambda_{\max}\left(\left(D_\theta^N\right)^{-1}\right)}
        = \abs{\lambda_{\min}\left(D_\theta^N\right)}^{-1}
        \leq \frac{N^{\beta}}{\gamma N^{1-\beta} - C_S \sqrt{L^N(\theta)}},
    \end{split}
    \end{equation}
    where we used \eqref{eq:proof:lem:estimate_Dinv_operatornorm:10} to obtain the last step.
    (Alternatively, we could have used Lemma~\ref{lem:estimate_Dinv_operatornorm_i} for the proof.)
\end{proof}

\subsubsection{Proof of Formula~(\ref{eq:zN_reformulation})}
\label{sec:proof:eq:zN_reformulation}
\label{subsec:proof:zN_reformulation}

We now have available all necessary tools to derive Formula~\eqref{eq:zN_reformulation} for the Newton update $z_k^N$ from \eqref{eq:regularizedNewtonStep}.
Throughout this section, we denote by $\theta$ a generic vector of NN parameters.

\begin{proof}
    With the definition of the matrix $D_{\theta}^N$, see, e.g., \eqref{eq:D},
    we can write \eqref{eq:regularizedNewtonStep} as
    \begin{equation}
        \left(D_{\theta}^N + G_{\theta}^N\right) z^N
        = \frac{1}{M} \left(J_\theta^N\right)^\top \left(\y - f_\theta^N(\x)\right),
    \end{equation}
    where we used \eqref{eq:Hessian_loss} for the left-hand side and \eqref{eq:Gradient_loss} for the right-hand side.
    Noticing that the matrix on the left-hand side is the sum of the block-diagonal matrix~$D_{\theta}^N$ and the low-rank matrix~$G_{\theta}^N$,
    we can use the Woodbury matrix identity (push-through identity) in the third line to obtain the representation
    \begin{equation}
        \label{proof:eq:zN_representatio}
    \begin{split}
        z^N
        &= \frac{1}{M}\left(D_{\theta}^N + G_{\theta}^N\right)^{-1}  \left(J_\theta^N\right)^\top \left(\y - f_\theta^N(\x)\right) \\
        &= \frac{1}{M}\left(D_{\theta}^N + \frac{1}{M}\left(J_\theta^N\right)^\top J_\theta^N\right)^{-1} \left(J_\theta^N\right)^\top \left(\y - f_\theta^N(\x)\right) \\
        &= \frac{1}{M}\left(D_{\theta}^N\right)^{-1} \left(J_\theta^N\right)^\top \left(\Id_M + \frac{1}{M}J_\theta^N\left(D_{\theta}^N\right)^{-1}\left(J_\theta^N\right)^\top \right)^{-1}\left(\y - f_\theta^N(\x)\right) \\
        &= \frac{1}{M}\left(D_{\theta}^N\right)^{-1}\! \left(J_\theta^N\right)^\top\! \zeta^N
        \quad\text{ with }\quad
        \zeta^N\! \coloneqq \left(\Id_M + \frac{1}{M}J_\theta^N\left(D_{\theta}^N\right)^{-1}\!\left(J_\theta^N\right)^\top \right)^{-1}\!\!\left(\y - f_\theta^N(\x)\right)\!,
    \end{split}
    \end{equation}
    which concludes the proof by giving \eqref{eq:zN_reformulation}.
    To convince the reader of the matrix identity
    \begin{equation}
        \label{proof:eq:matrixIdentity}
    \begin{split}
        \left(D_{\theta}^N + \frac{1}{M}\left(J_\theta^N\right)^\top J_\theta^N\right)^{-1} \left(J_\theta^N\right)^\top
        = \left(D_{\theta}^N\right)^{-1} \left(J_\theta^N\right)^\top \left(\Id_M + \frac{1}{M}J_\theta^N\left(D_{\theta}^N\right)^{-1}\left(J_\theta^N\right)^\top \right)^{-1}
    \end{split}
    \end{equation}
    used in the third line of \eqref{proof:eq:zN_representatio},
    first note that the expressions are well-defined in the sense that the involved matrices are indeed invertible:
    The matrix $D_{\theta}^N + \frac{1}{M}(J_\theta^N)^\top J_\theta^N$ is positive definite due to $D_\theta^N$ being positive definite by assumption and due to $(J_\theta^N)^\top J_\theta^N$ being positive semi-definite.
    For the matrix $\Id_M + \frac{1}{M}J_\theta^N(D_\theta^N)^{-1}(J_\theta^N)^\top$ simply notice that $D_\theta^N$ being positive definite implies $(D_\theta^N)^{-1}$ being positive definite and hence $J_\theta^N(D_\theta^N)^{-1}(J_\theta^N)^\top$ being positive semi-definite.
    To validate \eqref{proof:eq:matrixIdentity},
    let us multiply the identity \eqref{proof:eq:matrixIdentity} from the left with $D_\theta^N + \frac{1}{M}(J_\theta^N)^\top J_\theta^N$ and from the right with $\Id_M + \frac{1}{M}J_\theta^N(D_\theta^N)^{-1}(J_\theta^N)^\top$.
    Hence, \eqref{proof:eq:matrixIdentity} is equivalent to
    \begin{equation}
    \begin{split}
        \left(J_\theta^N\right)^\top \left(\Id_M + \frac{1}{M}J_\theta^N\left(D_\theta^N\right)^{-1}\left(J_\theta^N\right)^\top \right)
        = \left(D_\theta^N + \frac{1}{M}\left(J_\theta^N\right)^\top J_\theta^N\right)\left(D_\theta^N\right)^{-1} \left(J_\theta^N\right)^\top,
    \end{split}
    \end{equation}
    which holds trivially once multiplying out the terms.
\end{proof}

\subsubsection[Spectral properties of the standard NTK and NNTK]{Spectral properties of the standard NTK $B^*_0$ and NNTK $\CB^*_0$}
\label{subsec:proof:spectra}

Let us first recall that the eigenvalues of the limit standard NTK $B^*_0$ are strictly positive, see, e.g., \cite{jacot2018neural}, \cite[Corollary~1.4]{sirignano2019scaling}, or \cite[Corollary~19.6]{spiliopoulos2025mathematical}.
For the reader's convenience, we repeat the argument.

\begin{lemma}
    \label{lem:standardNTKB:spectrum}
    The limit standard NTK $B^*_0$, as defined in \eqref{eq:standardNTK}, is positive definite,
    i.e., its eigenvalues $\{\lambda_{m}(B^*_0)\}_{m=1}^M$ are strictly positive.
\end{lemma}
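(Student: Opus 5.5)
To prove Lemma~\ref{lem:standardNTKB:spectrum}, I would work with the quadratic form. $B^*_0$ is symmetric, so it suffices to show $v^\top B^*_0 v>0$ for every $v\in\bbR^M\setminus\{0\}$. Write $u=(w,\eta)$ and expand the integrand in \eqref{eq:standardNTK}, using $x^m\cdot x^n+1=\sum_{j=1}^{d+1}\tilde x^m_j\tilde x^n_j$ with $\tilde x^m=(x^m,1)$. This gives
\begin{equation*}
    v^\top B^*_0 v = \frac{1}{M}\int \bigg(\sum_{m=1}^M v_m\sigma(w\cdot x^m+\eta)\bigg)^2 + c^2\sum_{j=1}^{d+1}\bigg(\sum_{m=1}^M v_m \tilde x^m_j\sigma'(w\cdot x^m+\eta)\bigg)^2 d\mu_0(c,w,\eta)\geq 0.
\end{equation*}
So $B^*_0$ is positive semi-definite, and it remains to rule out $v^\top B^*_0 v=0$ for $v\neq0$.

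Suppose $v^\top B^*_0 v=0$. Then the first integrand, which is nonnegative and depends only on $(w,\eta)$, vanishes $\mu_{0,(w,\eta)}$-a.e. The function $g(w,\eta)=\sum_m v_m\sigma(w\cdot x^m+\eta)$ is continuous. By Assumption~\ref{asm:NN_mu0}\ref{asm:NN_mu0iv}, every nonempty open set has positive Lebesgue measure and hence positive $\mu_{0,(w,\eta)}$-mass. If $g$ were nonzero at some point, it would be nonzero on an open neighbourhood of that point, which would carry positive mass, a contradiction. Hence $g\equiv0$ on all of $\bbR^{d+1}$. Using only the first term suffices, so the $c$-part and Assumption~\ref{asm:NN_mu0}\ref{asm:NN_mu0i} are not needed.

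The main obstacle is the last step: showing that $\sum_m v_m\sigma(w\cdot x^m+\eta)=0$ for all $(w,\eta)$ forces $v=0$ under Assumption~\ref{asm:data_distinct}. This is a linear-independence statement for ridge functions. I would argue as follows.

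1. Because the lines $\ell_{x^m}$ meet only at the origin, the vectors $x^m$ are pairwise non-collinear. Hence there is a direction $\omega$ with all values $\omega\cdot x^m$ nonzero and $\abs{\omega\cdot x^m}$ pairwise distinct. This works because the bad set is a finite union of hyperplanes $\{\omega\cdot(x^m\mp x^n)=0\}$ and $\{\omega\cdot x^m=0\}$, each proper since $x^m\neq\pm x^n$ and $x^m\neq0$.

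2. Set $w=t\omega$ and $a_m=\omega\cdot x^m$. This gives $\sum_m v_m\sigma(ta_m+\eta)=0$ for all $t,\eta\in\bbR$.

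3. Differentiate $p$ times in $t$ and evaluate at $t=0$, noting that $\sigma$ is only $C^2$. To avoid needing higher smoothness, use the Fourier/distributional argument: for Schwartz $\phi$, the identity $\sum_m v_m\sigma(a_m\cdot{}+\eta)=0$ in $\eta$ for all $t$ yields, after Fourier transform in $\eta$, $\sum_m v_m e^{i t a_m \xi}\hat\sigma(\xi)=0$ as tempered distributions.

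4. Since $\sigma$ is non-polynomial, $\hat\sigma$ is supported at some $\xi_0\neq0$. Then $\sum_m v_m e^{i t a_m\xi_0}=0$ for all $t$.

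5. The $a_m$ are distinct, so the exponentials are linearly independent, and $v=0$.

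Alternatively, one could cite the discriminatory/ridge-function results of \cite{ito1996nonlinearity} or \cite[Corollary~19.6]{spiliopoulos2025mathematical} for this step. I would note that distinctness of the $a_m$ (not merely of $\abs{a_m}$) is what is needed here.
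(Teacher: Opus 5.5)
Your proof is correct. Up to the reduction to $\sum_m v_m\sigma(w\cdot x^m+\eta)=0$ for all $(w,\eta)$, it is the paper's argument, using continuity plus Assumption~\ref{asm:NN_mu0}\ref{asm:NN_mu0iv}. Your rewriting $x^m\cdot x^n+1=\tilde x^m\cdot\tilde x^n$ is in fact slightly more careful, since it justifies the nonnegativity of the $c^2\sigma'\sigma'$ term, which the paper drops with a bare ``$\geq$''.

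The two proofs differ at the final linear-independence step. The paper invokes \cite[Remark~3.1]{ito1996nonlinearity}, applied to a non-polynomial tempered $\sigma$ and inputs in distinct directions as in Assumption~\ref{asm:data_distinct}. You instead prove it from scratch: restrict to a line $w=t\omega$, Fourier transform in the bias, and combine $\mathrm{supp}\,\hat\sigma\not\subseteq\{0\}$ with the linear independence of exponentials with distinct frequencies.

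The citation is shorter. Your argument is self-contained and shows that, thanks to the bias $\eta$, only distinctness of the projections $a_m=\omega\cdot x^m$ is used. So the requirements $a_m\neq0$ and $\abs{a_m}$ pairwise distinct in your Step~1 are unnecessary, and pairwise distinct inputs $x^m$ would already suffice for this lemma.

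Two things to tighten:
\begin{enumerate}
\item Drop the abandoned ``differentiate $p$ times'' sentence in Step~3.
\item Spell out the passage from $\psi_t\hat\sigma=0$, with $\psi_t(\xi)=\sum_m v_m e^{ita_m\xi}$, to $\psi_t(\xi_0)=0$. If $\psi_t(\xi_0)\neq0$, then $\psi_t$ is nonzero on a neighbourhood of $\xi_0$. For test functions $\phi$ supported there, $\langle\hat\sigma,\phi\rangle=\langle\psi_t\hat\sigma,\phi/\psi_t\rangle=0$, so $\hat\sigma$ vanishes near $\xi_0$, contradicting $\xi_0\in\mathrm{supp}\,\hat\sigma$.
\end{enumerate}
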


\begin{proof}
    Using the definition~\eqref{eq:standardNTK} of the limit standard NTK $B^*_0$,
    we can compute for any vector $v\in\bbR^M$ that
    \begin{equation}
        \label{eq:proof:lem:Ginfty_pos_def:1}
    \begin{split}
        v^\top B^*_0 v
        &= \sum_{m=1}^M v_m \sum_{n=1}^M \frac{1}{M}\int \sigma(w \cdot x^m+\eta)\sigma(w \cdot x^n+\eta) \, d\mu_0(c,w,\eta) v_n \\
        &\quad\,+\sum_{m=1}^M v_m \sum_{n=1}^M \frac{1}{M}\int c^2\sigma'(w \cdot x^m+\eta)\sigma'(w \cdot x^n+\eta) \left(x^m\cdot x^n+1\right) \, d\mu_0(c,w,\eta) v_n \\
        &\geq \frac{1}{M}\int \left(\sum_{m=1}^M \sigma(w \cdot x^m+\eta)v_m \right)^2 d\mu_0(c,w,\eta) \\
        &\geq0.
    \end{split}
    \end{equation}
    This verifies the non-negativity of the eigenvalues~$\lambda_{m}(B^*_0)$ of the limit standard NTK $B^*_0$.\\
    Let us further show that $v^\top B^*_0 v=0$ if and only if $v=0$.
    The ``if'' direction is immediate.
    For the ``only if'' direction, assume by contradiction that there was a vector $v\not=0$ with $v^\top B^*_0 v=0$.
    This would imply due to \eqref{eq:proof:lem:Ginfty_pos_def:1} that
    \begin{equation}
        \label{eq:proof:lem:Ginfty_pos_def:2}
        \sum_{m=1}^M \sigma(w \cdot x^m+\eta) v_m=0
        \quad\text{for all } w\in \bbR^d, \eta\in \bbR,
    \end{equation}
    since the distribution $\mu_0$ assigns positive probability to every set with positive Lebesgue measure as of Assumption~\ref{asm:NN_mu0}\ref{asm:NN_mu0iv} and continuity of the integrand w.r.t.\@ the NN parameters $w$ and $\eta$.
    Since the activation function $\sigma$ is non-polynomial and bounded (thus tempered as a distribution) as of Assumption~\ref{asm:NN_nonlinearity}\ref{asm:NN_sigma},
    and since the data samples~$x^m$ are in distinct directions for $m=1,\dots,M$ as of Assumption~\ref{asm:data_distinct},
    the functions $w\mapsto\sigma(w\cdot x^m+\eta)$, $m=1,\dots,M$, are linearly independent
    according to \cite[Remark~3.1]{ito1996nonlinearity}.
    Therefore, by the definition of linear independence, \eqref{eq:proof:lem:Ginfty_pos_def:2} implies $v_m=0$ for all $m=1,\dots,M$.
    Since this is a contradiction,
    $v^\top B^*_0 v>0$ for all $v\in\bbR^M$, ensuring that the eigenvalues~$\lambda_{m}(B^*_0)$ are indeed strictly positive.
\end{proof}

We further show that the operator norm of the limit standard NTK $B^*_0$ is bounded.

\begin{lemma}
    \label{lem:standardNTKB:operatornorm}
    The operator norm of the limit standard NTK $B^*_0$, as defined in \eqref{eq:standardNTK}, is bounded, i.e., it holds $\N{B^*_0}_{\mathrm{op}}\leq C$
    for a constant $C=C(\sigma,\CD,\mu_0)$.
\end{lemma}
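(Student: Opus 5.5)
The plan is to bound the entries of $B^*_0$ uniformly and then pass from entrywise bounds to an operator norm bound, using only that $M$ is fixed by the data. Following the convention in the Further Notation paragraph, a constant depending on $\CD$ may not depend on $M$. I will therefore aim for a bound with no $M$-dependence at all, which the $\frac{1}{M}$ normalization in \eqref{eq:standardNTK} makes possible.

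\textbf{Step 1: entrywise bound.} For each pair $m,n$ I will bound the integrand in \eqref{eq:standardNTK} pointwise. Assumption~\ref{asm:NN_nonlinearity}\ref{asm:NN_sigma} gives $\abs{\sigma(w\cdot x^m+\eta)\sigma(w\cdot x^n+\eta)}\leq C_\sigma^2$. Assumption~\ref{asm:NN_nonlinearity}\ref{asm:NN_sigma'} and Assumption~\ref{asm:data}, via Cauchy--Schwarz on $x^m\cdot x^n$, give
\[
\abs{c^2\sigma'(w\cdot x^m+\eta)\sigma'(w\cdot x^n+\eta)(x^m\cdot x^n+1)}\leq c^2C_{\sigma'}^2(C_x^2+1).
\]
Since $\mu_{0,c}$ is compactly supported (Assumption~\ref{asm:NN_mu0}\ref{asm:NN_mu0ii}), we have $\int c^2\,d\mu_0\leq C(\mu_0)$. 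Hence
\[
\abs{(B^*_0)_{mn}}\leq \frac{1}{M}\left(C_\sigma^2+C(\mu_0)C_{\sigma'}^2(C_x^2+1)\right)=:\frac{C_0}{M},
\]
with $C_0=C_0(\sigma,\CD,\mu_0)$.

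\textbf{Step 2: from entries to operator norm.} I will bound the operator norm by the Frobenius norm:
\[
\N{B^*_0}_{\mathrm{op}}\leq\N{B^*_0}_{\mathrm{F}}\leq\left(M^2\cdot\frac{C_0^2}{M^2}\right)^{1/2}=C_0.
\]
This gives the claim with a constant independent of $M$. An alternative route, since $B^*_0$ is symmetric positive semi-definite by Lemma~\ref{lem:standardNTKB:spectrum}, is to bound the operator norm by the trace, $\sum_m (B^*_0)_{mm}\leq C_0$, which gives the same result.

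There is no real obstacle here. The only point needing care is making sure the constant does not pick up a factor of $M$, and the $1/M$ prefactor combined with the Frobenius norm (or trace) estimate handles exactly that. The measurability and integrability of the integrand are immediate from the boundedness and continuity of $\sigma$ and $\sigma'$.
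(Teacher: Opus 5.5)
Your proof is correct, and the route you list as the alternative is the paper's proof. The paper uses positive (semi-)definiteness of $B^*_0$ (Lemma~\ref{lem:standardNTKB:spectrum}) to bound $\N{B^*_0}_{\mathrm{op}}$ by $\mathrm{tr}(B^*_0)$. It then bounds only the $M$ diagonal entries, each of size at most $C(\sigma,\CD,\mu_0)/M$.

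Your primary Frobenius-norm route is a mild variation. It needs the off-diagonal entries as well, hence your Cauchy--Schwarz step on $x^m\cdot x^n$. In exchange it uses no positivity at all, so it is self-contained and works for any matrix whose entries are uniformly of size $C/M$. The paper's trace route needs only the diagonal, where $x^m\cdot x^m+1=\N{x^m}_2^2+1$ is bounded directly, but it does rely on positive semi-definiteness. That dependence is light, since positive semi-definiteness follows immediately from the Gram/integral structure of $B^*_0$. Both routes give the same $M$-independent constant, consistent with the paper's convention that $\CD$-dependence excludes $M$.
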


\begin{proof}
    First recall from Lemma~\ref{lem:standardNTKB:spectrum} that the limit standard NTK $B^*_0$ is positive definite.
    Since its operator norm (being the largest eigenvalue) is bounded by the trace (being the sum of all eigenvalues),
    it holds
    \begin{equation}
        \label{eq:proof:lem:standardNTKB:operatornorm:1}
    \begin{split}
        \N{B^*_0}_{\mathrm{op}}
        \leq \mathrm{tr}\left(B^*_0\right)
        &=\sum_{m=1}^M \left(B^*_0\right)_{mm} \\
        &= \sum_{m=1}^M \frac{1}{M} \int \left(\sigma(w \!\cdot\! x^m\!+\!\eta)\right)^2 + c^2\left(\sigma'(w \!\cdot\! x^m\!+\!\eta)\right)^2 \left(\N{x^m}_2^2\!+\!1\right) d\mu_0(c,w,\eta) \\
        &\leq \left(C_\sigma^2 + C_{\mu_0}^2C_{\sigma'}^2(C_x^2+1)\right),
    \end{split}
    \end{equation}
    where we used Assumptions~\ref{asm:NN_nonlinearity}\ref{asm:NN_sigma}, \ref{asm:NN_nonlinearity}\ref{asm:NN_sigma'}, and \ref{asm:NN_mu0},\ref{asm:NN_mu0ii} together with Assumption~\ref{asm:data} in the last step.
\end{proof}

From the eigenvalues of the limit standard NTK $B^*_0$,
we can directly infer the eigenvalues of the limit NNTK $\CB^*_0 = B^*_0\left(\gamma\Id_M + B^*_0 \right)^{-1}$, as defined in \eqref{eq:NNTK*}.

\begin{lemma}
    \label{lem:secondorderNTKB:spectrum}
    Let $\{\lambda_{m}(B^*_0)\}_{m=1}^M$ denote the eigenvalues of the limit standard NTK $B^*_0$.
    Then the eigenvalues of the limit NNTK $\CB^*_0$, as defined in \eqref{eq:NNTK*}, are given by $\{\lambda_{m}(\CB^*_0)\}_{m=1}^M$ with
    \begin{equation}
        \lambda_{m}(\CB^*_0) = \frac{\lambda_{m}(B^*_0)}{\gamma +\lambda_{m}(B^*_0)}
    \end{equation}
    for $m=1,\dots,M$.
\end{lemma}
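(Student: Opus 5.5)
The plan is to diagonalize $B^*_0$ and observe that $\CB^*_0$ is a rational function of $B^*_0$, so both matrices share the same eigenvectors.

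First, I record that $B^*_0$ is symmetric. This is immediate from \eqref{eq:standardNTK}, since the integrand is symmetric in $(m,n)$. By Lemma~\ref{lem:standardNTKB:spectrum}, $B^*_0$ is also positive definite. The spectral theorem then gives an orthogonal matrix $U$ and $\Lambda=\mathrm{diag}(\lambda_1(B^*_0),\dots,\lambda_M(B^*_0))$ with all $\lambda_m(B^*_0)>0$, such that $B^*_0=U\Lambda U^\top$.

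Next, I verify that $\gamma\Id_M+B^*_0=U(\gamma\Id_M+\Lambda)U^\top$ is invertible. Its diagonal entries are $\gamma+\lambda_m(B^*_0)\ge\gamma>0$, using $\gamma>0$. Its inverse is therefore
\[
(\gamma\Id_M+B^*_0)^{-1}=U(\gamma\Id_M+\Lambda)^{-1}U^\top .
\]
This also confirms that the two expressions in \eqref{eq:NNTK*} agree, because $\frac1\gamma B^*_0(\Id_M+\frac1\gamma B^*_0)^{-1}=B^*_0(\gamma\Id_M+B^*_0)^{-1}$ after pulling out the factor $\gamma$.

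Multiplying the two decompositions and using $U^\top U=\Id_M$ gives
\[
\CB^*_0=U\Lambda(\gamma\Id_M+\Lambda)^{-1}U^\top .
\]
Here $\Lambda(\gamma\Id_M+\Lambda)^{-1}$ is diagonal with entries $\lambda_m(B^*_0)/(\gamma+\lambda_m(B^*_0))$. Since this is an orthogonal similarity, these entries are exactly the eigenvalues of $\CB^*_0$, which proves the claim.

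As a by-product, $\CB^*_0$ is symmetric positive definite with spectrum contained in $(0,1)$. This is the fact used in Theorem~\ref{thm:convergence} and Remark~\ref{rem:NNTKB_conv}.

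None of these steps presents a real obstacle. The only point requiring care is that $B^*_0$ and $(\gamma\Id_M+B^*_0)^{-1}$ commute, so the product is diagonalized in the same basis. Simultaneous diagonalization settles this, and I would state it explicitly rather than leave it implicit.
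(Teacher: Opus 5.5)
Your proof is correct and takes essentially the same route as the paper, which picks an eigenpair $(\lambda_m(B^*_0),v_m)$, notes that $(\gamma\Id_M+B^*_0)^{-1}v_m=\frac{1}{\gamma+\lambda_m(B^*_0)}v_m$, and concludes that $\CB^*_0v_m=\frac{\lambda_m(B^*_0)}{\gamma+\lambda_m(B^*_0)}v_m$. Your orthogonal diagonalization $B^*_0=U\Lambda U^\top$ is a slightly more careful version of that argument, since it makes explicit (through the symmetry of $B^*_0$ and its full eigenbasis) that these values are all the eigenvalues of $\CB^*_0$, with the correct multiplicities.
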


\begin{proof}
    Let $(\lambda_{m}(B^*_0),v_m)$ denote an eigenpair of the standard NTK $B^*_0$.
    Noticing that $\left(\gamma \Id_M + B^*_0 \right)v_m = (\gamma +\lambda_{m}(B^*_0))v_m$ and hence $\left(\gamma \Id_M + B^*_0 \right)^{-1}v_m = \frac{1}{\gamma +\lambda_{m}(B^*_0)}v_m$, it holds
    \begin{equation}
    \begin{split}
        \CB^*_0 v_m
        &= B^*_0\left(\gamma \Id_M + B^*_0 \right)^{-1}v_m
        = \frac{1}{\gamma +\lambda_{m}(B^*_0)}B^*_0 v_m = \frac{\lambda_{m}(B^*_0)}{\gamma +\lambda_{m}(B^*_0)} v_m,
    \end{split}
    \end{equation}
    proving that $\big(\frac{\lambda_{m}(B^*_0)}{\gamma +\lambda_{m}(B^*_0)},v_m\big)$ denotes the corresponding eigenpair of $\CB^*_0$.
\end{proof}

\begin{remark}
    \label{rem:secondorderNTKB:spectrum}
    Since
    \begin{equation}
        \lambda_{m}(\CB^*_0) = \frac{\lambda_{m}(B^*_0)}{\gamma +\lambda_{m}(B^*_0)} = 1-\frac{\gamma}{\gamma +\lambda_{m}(B^*_0)}
    \end{equation}
    is monotonously increasing in $\lambda_{m}(B^*_0)$,
    we have that $\lambda_{\max}(\CB^*_0) = \lambda_{\max}(B^*_0)/(\gamma+\lambda_{\max}(B^*_0))$ and analogously $\lambda_{\min}(\CB^*_0) = \lambda_{\min}(B^*_0)/(\gamma +\lambda_{\min}(B^*_0))$.
\end{remark}

\subsection{Proof of Theorem~\ref{thm:convergence}}
\label{sec:app:thm:convergence}

We first provide a detailed proof of the convergence in the infinite-width limit in Theorem~\ref{thm:convergence}.

\begin{proof}[Proof of Theorem~\ref{thm:convergence}]
    Let $r_k^*(\x) = \y - f_{k}^*(\x)\in\bbR^M$ be the residual at iteration $k$. We can obtain from \eqref{eq:NNfunction_evolution_limit} the recursion
    \begin{equation}
    r_{k+1}^*(\x)
        =\y \!-\! f_{k+1}^*(\x)
        = \left(\Id_M \!-\! \alpha \CB^*_0\right) \left(\y \!-\! f_k^*(\x)\right)
        = \left(\Id_M \!-\! \alpha \CB^*_0\right) r_{k}^*(\x)
    \end{equation}
    and hence $r_{k}^*(\x) = \left(\Id_M - \alpha \CB^*_0\right)^{k} r_{0}^*(\x)$.
    Using
    Lemma~\ref{lem:secondorderNTKB:spectrum} and Remark~\ref{rem:secondorderNTKB:spectrum}, we can bound for a step size $\alpha\in\left(0,2/\!\left(\lambda_{\max}(\CB^*_0)+\lambda_{\min}(\CB^*_0)\right)\right)$ the norm of the residual as
    \begin{equation*}
    \begin{split}
        \N{r_{k}^*(\x)}_2
        &\leq \N{\Id_M \!-\! \alpha \CB^*_0}_{\mathrm{op}}^{k}\! \N{r_{0}^*(\x)}_2 \\
        &\leq \abs{\lambda_{\max}\!\left(\Id_M \!-\! \alpha \CB^*_0\right)}^k \!\N{r_{0}^*(\x)}_2 \\
        &= 
        \max\{\abs{1 - \alpha \lambda_{\min}\!\left(\CB^*_0\right)}, \abs{1 - \alpha \lambda_{\max}\left(\CB^*_0\right)}\}^k \N{r_{0}^*(\x)}_2 \\
        &\leq \abs{1 - \alpha \lambda_{\min}\left(\CB^*_0\right)}^k \N{r_{0}^*(\x)}_2,
    \end{split}
    \end{equation*}
    where the last step uses the condition on $\alpha$.
    This concludes the proof since $r_{0}^*(\x) = \y$.
    To see the last inequality in the proof, notice that
    $1 - \alpha \lambda_{\max} \leq 1 - \alpha \lambda_{\min}$ since $\lambda_{\max}\geq\lambda_{\min}$.
    Due to the condition on $\alpha$ it holds
    $\alpha \lambda_{\max} + \alpha \lambda_{\min} < 2$ and
    hence $1 - \alpha \lambda_{\max} > -(1 - \alpha \lambda_{\min})$.
    Combining those inequalities shows
    $-(1 - \alpha \lambda_{\min}) < 1 - \alpha \lambda_{\max} < 1 - \alpha \lambda_{\min}$.
    Multiplying it with $-1$ and rearranging it further shows $-(1 - \alpha \lambda_{\min}) < -(1 - \alpha \lambda_{\max}) < 1 - \alpha \lambda_{\min}$.
    Hence,
    $\abs{1 - \alpha \lambda_{\max}} < \abs{1 - \alpha \lambda_{\min}}$.
\end{proof}

\subsection{Proof of Lemma~\ref{prop:NTK_limit_bounds_z}}
\label{sec:app:NTK_limit_bounds_z}

We can now give a detailed proof of the central auxiliary result, Lemma~\ref{prop:NTK_limit_bounds_z}.

\begin{proof}[Proof of Lemma~\ref{prop:NTK_limit_bounds_z}]
    Using an inductive argument,
    we derive, for a finite number of training steps $k=0,\dots,K$, a priori bounds for the Newton updates~\eqref{eq:regularizedNewtonStep}, the NN function~\eqref{eq:NN}, the residual, and the loss~\eqref{eq:loss}, which are uniform in the number of neurons~$N$ and hold with high probability.
    We further prove that, with high probability, the Newton updates $z^N_k$ in \eqref{eq:regularizedNewtonStep} vanish as $N\rightarrow\infty$.\\
    \textit{Step 1: Definition of the high-probability set.} 
    Since, as of Assumption~\ref{asm:NN_mu0}\ref{asm:NN_mu0ii}, $\abs{c^i_0}\leq C_{c,0}(\mu_0)$ for all $i=1,\dots,N$,
    Lemma~\ref{lem:estimate_NNfunction_E} shows that $\bbE\N{f_{\theta_0}^N(\x)}_2^2 = \sum_{m=1}^M \bbE\abs{f_{\theta_0}^N(x^m)}^2 \leq \frac{MC(\sigma,\mu_0)}{N^{2\beta-1}}$.
    By Markov's inequality,
    we thus have that for any $C_{f,0}>0$ it holds
    \begin{equation}
        \label{eq:proof:thm:NTK_limit:whp1}
        \bbP\left(\N{f_{\theta_0}^N(\x)}_2 \geq \sqrt{M}C_{f,0} \right)
        \leq \frac{\bbE \N{f_{\theta_0}^N(\x)}_2^2}{MC_{f,0}^2}
        = \frac{C(\sigma,\mu_0)}{C_{f,0}^2}\frac{1}{N^{2\beta-1}}
    \end{equation}
    and hence, since $\N{\y-f_{\theta_0}^N(\x)}_2 \leq \N{\y}_2 + \N{f_{\theta_0}^N(\x)}_2 \leq \sqrt{M}C_y + \N{f_{\theta_0}^N(\x)}_2$ with Assumption~\ref{asm:data} used in the last step, also for any $C_{R,0}>C_y$ that
    \begin{equation}
        \label{eq:proof:thm:NTK_limit:whp2}
    \begin{split}
        \bbP\left(L^N(\theta_0)
        \leq \frac{C_{R,0}^2}{2} \right)
        &= \bbP\left(\N{\y-f_{\theta_0}^N(\x)}_2 \leq \sqrt{M}C_{R,0} \right) \\
        &\geq \bbP\left(\N{\y}_2 + \N{f_{\theta_0}^N(\x)}_2 \leq \sqrt{M}C_{R,0} \right) \\
        &\geq \bbP\left(\N{f_{\theta_0}^N(\x)}_2 \leq \sqrt{M}\left(C_{R,0}-C_y\right) \right) \\
        &\geq 1- \frac{C(\sigma,\mu_0)}{\left(C_{R,0}-C_y\right)^2}\frac{1}{N^{2\beta-1}} \geq 1-\delta,
    \end{split}
    \end{equation}
    where the last step is by choice of $N$ (since by assumption, $N^{2\beta-1}\geq \frac{C(\sigma,\mu_0)}{(C_{R,0}-C_y)^2}\frac1\delta$).
    Let us now denote by $\Omega_R$ the event where it holds $\N{\y-f_{\theta_0}^N(\x)}_2 \leq \sqrt{M} C_{R,0}$.
    Conditionally on this set, all subsequent computations can be performed.
    Due to \eqref{eq:proof:thm:NTK_limit:whp2}, $\bbP\left(\Omega_R\right) \geq 1-\delta$. \\
    \textit{Step 2: Initialization (induction start $k=0$).}
    By the choice of $N$ (since by assumption,  $\gamma \geq C \frac{1}{N^{1-\beta}}$) and using that on the set $\Omega_R$ it holds $\sqrt{2L^N(\theta_0)} = \frac{1}{\sqrt{M}} \N{\y-f_{\theta_0}^N(\x)}_2 \leq C_{R,0}$,
    we have
    \begin{equation}
        \label{eq:proof:thm:NTK_limit:condition_gamma0}
    \begin{split}
        \gamma
        &> \sqrt{2}C_{R,0}C_S(\sigma,\CD,\mu_0)\frac{1}{N^{1-\beta}} \\
        &\geq \frac{2C_S(\sigma,\CD,\mu_0)}{N^{1-\beta}} \sqrt{L^N(\theta_0)} \\
        &\geq \frac{C_S(\sigma,\CD,\mu_0)}{N^{1-\beta}} \sqrt{L^N(\theta_0)}.
    \end{split}
    \end{equation}
    Therefore, Lemma~\ref{lem:estimate_Dinv_operatornorm} applies and ensures that $D_{\theta_0}^N$ is positive definite (hence invertible)
    with
    \begin{equation}
        \label{eq:proof:thm:NTK_limit:bound_D0inv}
    \begin{split}
        \N{\left(D_{\theta_0}^N\right)^{-1}}_{\mathrm{op}}
        &\leq \frac{N^{\beta}}{\gamma N^{1-\beta} - C_S(\sigma,\CD,\mu_0) \sqrt{L^N(\theta_0)}} \\
        &= \frac{N^{\beta}}{\frac12\gamma N^{1-\beta} + \left(\frac12\gamma N^{1-\beta} - C_S(\sigma,\CD,\mu_0) \sqrt{L^N(\theta_0)}\right)} \\
        &\leq \frac{2N^{\beta}}{\gamma N^{1-\beta}} = \frac{2}{\gamma}N^{2\beta-1},
    \end{split}
    \end{equation}
    where we used in the next-to-last step the next-to-last bound of \eqref{eq:proof:thm:NTK_limit:condition_gamma0}.
    Leveraging this, we can establish neuron-wise bounds on the norm of the Newton update $z^N_0$, i.e., bounds on $(z^N_0)^i$ for all $i=1,\dots,N$.
    Using Formula~\eqref{eq:zN_reformulation} together with the observation that the neuron-wise updates separate due to the block-diagonal structure of the matrix $D_{\theta_0}^N$ (and hence its inverse) in the first step, we compute
    \begin{allowdisplaybreaks}
    \begin{align}
        \N{(z^N_0)^i}_2^2
        &=  \N{\frac{1}{M}\left(D_{\theta_0^i}^N\right)^{-1} \left(J_{\theta_0^i}^N\right)^\top \zeta_0^N}_2^2 \notag\\
        &\leq \frac{1}{M^2}\N{\left(D_{\theta_0^i}^N\right)^{-1}}_{\mathrm{op}}^2\N{ \left(J_{\theta_0^i}^N\right)^\top \zeta_0^N}_2^2
        = \frac{1}{M^2}\N{\left(D_{\theta^i_0}^N\right)^{-1}}_{\mathrm{op}}^2\sum_{j=1}^{d+2} \abs{\left(J^N_{\theta^i_0}\right)_{:,j}^\top\zeta^N_0}_2^2 \notag\\
        &\leq \frac{1}{M^2}\N{\left(D_{\theta^i_0}^N\right)^{-1}}_{\mathrm{op}}^2\sum_{j=1}^{d+2} \N{\left(J^N_{\theta^i_0}\right)_{:,j}}_2^2\N{\zeta^N_0}_2^2 \notag\\
        &= \frac{1}{M^2}\N{\left(D_{\theta_0^i}^N\right)^{-1}}_{\mathrm{op}}^2\sum_{j=1}^{d+2} \N{\left(J^N_{\theta^i_0}\right)_{:,j}}_2^2\cdot \notag\\
        &\qquad\quad\, \cdot\N{\left(\Id_M + \frac{1}{M}J_{\theta_0}^N\left(D_{\theta_0}^N\right)^{-1}\left(J_{\theta_0}^N\right)^\top \right)^{-1}\left(\y - f_{\theta_0}^N(\x)\right)}_2^2 \notag\\
        &\leq \frac{1}{M^2} \N{\left(D_{\theta_0^i}^N\right)^{-1}}_{\mathrm{op}}^2\sum_{j=1}^{d+2} \N{\left(J^N_{\theta^i_0}\right)_{:,j}}_2^2\N{\y - f_{\theta_0}^N(\x)}_2^2 \notag\\
        &\leq \frac{C(\sigma,\CD,\mu_0)(d+2)}{M} \N{\left(D_{\theta_0^i}^N\right)^{-1}}_{\mathrm{op}}^2\N{\y - f_{\theta_0}^N(\x)}_2^2 \frac{1}{N^{2\beta}} \notag\\
        &\leq \frac{C(\sigma,\CD,\mu_0)(d+2)}{M\!\left(\gamma N^{1-\beta}\right)^2} \N{\y - f_{\theta_0}^N(\x)}_2^2 \notag\\
        &\leq \frac{C(\sigma,\CD,\mu_0,C_{R,0})}{\left(\gamma N^{1-\beta}\right)^2},
        \label{eq:proof:thm:NTK_limit:bound_z0^2}
    \end{align}
    \end{allowdisplaybreaks}%
    where the individual steps hold as explained in what follows.
    The inequality in the second line uses the definition of the operator norm.
    The step thereafter merely writes out the $\ell_2$-norm of the $(d+2)$-dimensional vector $(J_{\theta_0^i}^N)^\top \zeta_0^N$ before using Cauchy-Schwarz inequality for each of its entries in the third line.
    In the fourth line (split across two lines), we use the definition~\eqref{eq:zN_reformulation} of $\zeta^N_0$.
    The estimate thereafter follows noting that $J_{\theta_0}^N\left(D_{\theta_0}^N\right)^{-1}\left(J_{\theta_0}^N\right)^\top$ is positive semi-definite due to $\left(D_{\theta_0}^N\right)^{-1}$ being positive definite as of Lemma~\ref{lem:estimate_Dinv_operatornorm}.
    The third-to-last line uses that, as of Lemma~\ref{lem:estimate_J_entries}, each entry of $J^N_{\theta_0^i}\in\bbR^{M\times (d+2)}$ is of order $\mathcal{O}(\frac{1}{N^{\beta}})$ and hence $\Nbig{\big(J^N_{\theta^i_0}\big)_{:,j}}_2^2 = MC(\sigma,\CD,\mu_0)\frac{1}{N^{2\beta}}$, since $\big(J^N_{\theta^i_0}\big)_{:,j}$ is an $M$-dimensional vector.
    Eventually, the penultimate line follows from \eqref{eq:proof:thm:NTK_limit:bound_D0inv}, before using in the last inequality that $\N{\y-f_{\theta_0}^N(\x)}_2 \leq \sqrt{M} C_{R,0}$ holds.
    From \eqref{eq:proof:thm:NTK_limit:bound_z0^2}, we conclude that for all $i=1,\dots,N$ it holds
    \begin{equation}
        \label{eq:proof:thm:NTK_limit:bound_z0}
        \N{(z^N_0)^i}_2
        \leq \frac{C_{z,0}(\sigma,\CD,\mu_0,C_{R,0})}{\gamma N^{1-\beta}}.
    \end{equation}
    This proves the boundedness of the Newton updates for the initialization step $k=0$.

    \textit{Step 3: Iteration step (induction $k\mapsto k+1$).}
    The remainder of the proof goes by induction.

    Induction assumptions.
    Let us assume that it holds $\abs{c^i_k}\leq C_{c,k}(\alpha,\gamma,\sigma,\CD,\mu_0,C_{c,k-1},C_{z,k-1})$ for all $i=1,\dots,N$,
    that $\Nbig{\y-f_{\theta_{k}}^N(\x)}_2\leq \sqrt{M}C_{R,k}(\alpha,\gamma,\sigma,\CD,C_{c,k-1},C_{R,k-1},C_{z,k-1})$,
    and that it holds
    \begin{equation}
        \label{eq:proof:thm:NTK_limit:bound_zk_inductionassumption}
        \N{(z^N_{k})^i}_2
        \leq \frac{C_{z,k}(\sigma,\CD,\mu_0,C_{c,k},C_{R,k})}{\gamma N^{1-\beta}}
    \end{equation}
    for all $i=1,\dots,N$.
    As we have proven in Step~2,
    this is certainly true for the induction start~$k=0$ as we have proven in the previous step with $C_{c,0}=C_{c,0}(\mu_0)$, $C_{R,0}$, and $C_{z,0}$ as in \eqref{eq:proof:thm:NTK_limit:bound_z0}.

    Induction step. We first show that the NN parameters~$c^i$ remain bounded.
    With triangle inequality, the NN parameter update~\eqref{eq:regularizedNewton}, and the induction assumption~\eqref{eq:proof:thm:NTK_limit:bound_zk_inductionassumption} in the next-to-last step,
    we can bound
    \begin{equation}
    \begin{split}
        \abs{c^i_{k+1}}
        \leq \abs{c^i_k} \!+\! \abs{c^i_{k+1}-c^i_k}
        &\leq \abs{c^i_k} \!+\! \alpha\N{(z^N_k)^i}_2 \\
        &\leq C_{c,k}(\alpha,\gamma,\sigma,\CD,\mu_0,C_{c,k-1},C_{z,k-1}) \!+\! \frac{\alpha C_{z,k}(\sigma,\CD,\mu_0,C_{c,k},C_{R,k})}{\gamma N^{1-\beta}} \\
        &= C_{c,k+1}(\alpha,\gamma,\sigma,\CD,\mu_0,C_{c,k},C_{z,k}),
    \end{split}
    \end{equation}
    where $C_{c,k+1}$ is defined implicitly in the last step after using that $N\geq1$ and $\beta\in(1/2,1)$.
    Next, we show the boundedness of the NN function.
    With triangle inequality in the second line and using the boundedness and Lipschitz continuity of the activation function~$\sigma$ as of Assumptions~\ref{asm:NN_nonlinearity}\ref{asm:NN_sigma} together with Cauchy-Schwarz inequality in the third line,
    it is straightforward to establish by using that the training data is bounded due to Assumption~\ref{asm:data} the estimate
    \begin{allowdisplaybreaks}
    \begin{align}
        &\abs{f_{\theta_{k+1}}^N(x^m) - f_{\theta_{k}}^N(x^m)}
        = \abs{\frac{1}{N^\beta} \sum_{i=1}^N c_{k+1}^i \sigma(w_{k+1}^i \cdot x^m + \eta_{k+1}^i) - \frac{1}{N^\beta} \sum_{i=1}^N c_{k}^i \sigma(w_{k}^i \cdot x^m + \eta_{k}^i)} \notag\\
        &\qquad\quad\,\leq \frac{1}{N^\beta} \!\sum_{i=1}^N \abs{c_{k+1}^i\!-\!c_{k}^i} \!\abs{\sigma(w_{k+1}^i \!\cdot\! x^m \!+\! \eta_{k+1}^i)} \!+\! \abs{c_{k}^i} \!\abs{\sigma(w_{k+1}^i \!\cdot\! x^m \!+\! \eta_{k+1}^i)\!-\!\sigma(w_{k}^i \!\cdot\! x^m \!+\!\eta_{k}^i)} \notag\\
        &\qquad\quad\,\leq \frac{1}{N^\beta} \sum_{i=1}^N C_\sigma \abs{c_{k+1}^i-c_{k}^i}  + L_\sigma \abs{c_{k}^i} \left( \N{w_{k+1}^i-w_{k}^i}_2 \N{x^m}_2 + \abs{\eta_{k+1}^i - \eta_{k}^i}\right) \notag\\
        &\qquad\quad\,\leq \frac{\alpha C(\sigma,\CD)}{N^\beta} \sum_{i=1}^N \left(1+\abs{c_{k}^i}\right)\N{(z^N_{k})^i}_2 \notag\\
        &\qquad\quad\,\leq \frac{\alpha C(\sigma,\CD,C_{c,k})}{N^\beta} \sum_{i=1}^N \N{(z^N_{k})^i}_2 \leq \frac{\alpha C(\sigma,\CD,C_{c,k})}{N^\beta} \sum_{i=1}^N \frac{C_{z,k}(\sigma,\CD,\mu_0,C_{c,k},C_{R,k})}{\gamma N^{1-\beta}} \notag\\
        &\qquad\quad\,= \frac{\alpha C(\sigma,\CD,C_{c,k},C_{z,k})}{\gamma},
    \end{align}
    \end{allowdisplaybreaks}%
    where we used the induction assumptions to obtain the bounds in the next-to-last line.
    Hence,
    \begin{equation}
        \label{eq:proof:thm:NTK_limit:bound_res_inductionstep}
    \begin{split}
        \Nbig{\y-f_{\theta_{k+1}}^N(\x)}_2
        &\leq \Nbig{\y-f_{\theta_{k}}^N(\x)}_2 + \Nbig{f_{\theta_{k+1}}^N(\x)-f_{\theta_{k}}^N(\x)}_2 \\
        &\leq \sqrt{M}C_{R,k}(\alpha,\gamma,\sigma,\CD,C_{c,k-1},C_{R,k-1},C_{z,k-1}) \\
        &\quad\,+ \sqrt{M}\frac{\alpha C(\sigma,\CD,C_{c,k},C_{z,k})}{\gamma} \\
        &= \sqrt{M}C_{R,k+1}(\alpha,\gamma,\sigma,\CD,C_{c,k},C_{R,k},C_{z,k}),
    \end{split}
    \end{equation}
    where the second step is once more due to the induction assumption and
    where we define $C_{R,k+1}$ implicitly in the last step.
    It remains to bound the Newton update $z^N_{k+1}$.
    By the choice of $N$ and using that with \eqref{eq:proof:thm:NTK_limit:bound_res_inductionstep} it holds $\sqrt{2L^N(\theta_{k+1})} = \frac{1}{\sqrt{M}} \Nbig{\y-f_{\theta_{k+1}}^N(\x)}_2 \leq C_{R,k+1}(\alpha,\gamma,\sigma,\CD,C_{c,k},C_{R,k},C_{z,k})$, we have
    \begin{equation}\label{eq:Gamma_bound}
    \begin{split}
        \gamma
        &\geq \sqrt{2} C_{R,k+1}(\alpha,\gamma,\sigma,\CD,C_{c,k},C_{R,k},C_{z,k})C_S(\sigma,\CD,C_{c,k+1})\frac{1}{N^{1-\beta}} \\
        &\geq \frac{2C_S(\sigma,\CD,C_{c,k+1})}{N^{1-\beta}}\sqrt{L^N(\theta_{k+1})} \\
        &\geq \frac{C_S(\sigma,\CD,C_{c,k+1})}{N^{1-\beta}} \sqrt{L^N(\theta_{k+1})}.
    \end{split}
    \end{equation}
    Therefore, Lemma~\ref{lem:estimate_Dinv_operatornorm} applies and ensures that $D_{\theta_{k+1}}^N$ is positive definite (hence invertible)
    with
    \begin{equation}
        \label{eq:proof:thm:NTK_limit:bound_Dkinv}
    \begin{split}
        \N{\left(D_{\theta_{k+1}}^N\right)^{-1}}_{\mathrm{op}}
        &\leq \frac{N^{\beta}}{\gamma N^{1-\beta} - C_S(\sigma,\CD,C_{c,k+1}) \sqrt{L^N(\theta_{k+1})}} \\
        &= \frac{N^{\beta}}{\frac12\gamma N^{1-\beta} + \left(\frac12\gamma N^{1-\beta} - C_S(\sigma,\CD,C_{c,k+1})  \sqrt{L^N(\theta_{k+1})}\right)} \\
        &\leq \frac{2N^{\beta}}{\gamma N^{1-\beta}}
        = \frac{2}{\gamma}N^{2\beta-1}.
    \end{split}
    \end{equation}
    Leveraging this, we can now establish neuron-wise bounds on the norm of the Newton update $z^N_{k+1}$, i.e., bounds on $(z^N_{k+1})^i$ for all $i=1,\dots,N$.
    Using the same arguments as in \eqref{eq:proof:thm:NTK_limit:bound_z0^2} and noticing in particular that $J_{\theta_{k+1}}^N\big(D_{\theta_{k+1}}^N\big)^{-1}\big(J_{\theta_{k+1}}^N\big)^\top$is positive semi-definite due to $\big(D_{\theta_{k+1}}^N\big)^{-1}$ being positive definite as of Lemma~\ref{lem:estimate_Dinv_operatornorm},
    we compute
    \begin{allowdisplaybreaks}
    \begin{align}
        \N{(z^N_{k+1})^i}_2^2
        &\leq \frac{1}{M^2}\N{\left(D_{\theta_{k+1}^i}^N\right)^{-1}}_{\mathrm{op}}^2\sum_{j=1}^{d+2} \N{\left(J^N_{\theta^i_{k+1}}\right)_{:,j}}_2^2\N{\zeta_{k+1}^N}_2^2 \notag\\
        &= \frac{1}{M^2}\N{\left(D_{\theta_{k+1}^i}^N\right)^{-1}}_{\mathrm{op}}^2\sum_{j=1}^{d+2} \N{\left(J^N_{\theta^i_{k+1}}\right)_{:,j}}_2^2\cdot \notag\\
        &\qquad\quad\, \cdot\N{\left(\Id_M + \frac{1}{M}J_{\theta_{k+1}}^N\left(D_{\theta_{k+1}}^N\right)^{-1}\left(J_{\theta_{k+1}}^N\right)^\top \right)^{-1}\left(\y - f_{\theta_{k+1}}^N(\x)\right)}_2^2 \notag\\
        &\leq \frac{1}{M^2} \N{\left(D_{\theta_{k+1}^i}^N\right)^{-1}}_{\mathrm{op}}^2\sum_{j=1}^{d+2} \N{\left(J^N_{\theta^i_{k+1}}\right)_{:,j}}_2^2\N{\y - f_{\theta_{k+1}}^N(\x)}_2^2 \notag\\
        &\leq \frac{C(\sigma,\CD,C_{c,k+1})(d+2)}{M} \N{\left(D_{\theta_{k+1}^i}^N\right)^{-1}}_{\mathrm{op}}^2\N{\y - f_{\theta_{k+1}}^N(\x)}_2^2 \frac{1}{N^{2\beta}} \notag\\
        &\leq \frac{C(\sigma,\CD,C_{c,k+1})}{M\!\left(\gamma N^{1-\beta}\right)^2}\N{\y - f_{\theta_{k+1}}^N(\x)}_2^2 \notag\\
        &\leq \frac{C(\sigma,\CD,C_{c,k+1},C_{R,k+1})}{\left(\gamma N^{1-\beta}\right)^2}.\label{eq:proof:thm:NTK_limit:bound_zk^2}
    \end{align}
    \end{allowdisplaybreaks}%
    From \eqref{eq:proof:thm:NTK_limit:bound_zk^2}, we have that for all $i=1,\dots,N$ it holds
    \begin{equation}
        \label{eq:proof:thm:NTK_limit:bound_zk}
        \N{(z^N_{k+1})^i}_2
        \leq \frac{C_{z,k+1}(\sigma,\CD,C_{c,k+1},C_{R,k+1})}{\gamma N^{1-\beta}},
    \end{equation}
    where $C_{z,k+1}$ is defined implicitly in the last step.
    This concludes the induction proof.

    \textit{Step 4: Conclusions.}
    \textit{(a) A priori bounds.}
    Let us now define in retrospect the quantities
    \begin{equation}
        \label{eq:proof:thm:NTK_limit:aprioribounds}
        C_c \coloneqq \sup_{k=0,\dots,K} C_{c,k},
        \quad
        C_R \coloneqq \sup_{k=0,\dots,K} C_{R,k},
        \quad\text{and}\quad
        C_z \coloneqq \sup_{k=0,\dots,K} C_{z,k}.
    \end{equation}
    Clearly, be tracing back these quantities through the iterations, we obtain the parameter dependencies
    $C_c=C_c(\alpha,\gamma,\sigma,M,\CD,\mu_0,C_{R,0},K)$,
    $C_R=C_R(\alpha,\gamma,\sigma,M,\CD,\mu_0,C_{R,0},K)$, as well as
    $C_z=C_z(\alpha,\gamma,\sigma,M,\CD,\mu_0,C_{R,0},K)$.
    With the definition of these constants,
    it holds conditionally on the set $\Omega_R$ that
    \begin{equation}
        \label{eq:proof:thm:NTK_limit:aprioribounds_c}
    \begin{split}
        \abs{c^i_k}\leq C_c \quad\text{ for all }i=1,\dots,N \text{ and all } k=0,\dots,K
    \end{split}
    \end{equation}
    as well as
    \begin{equation}
        \label{eq:proof:thm:NTK_limit:aprioribounds_res}
    \begin{split}
        \Nbig{\y-f_{\theta_{k}}^N(\x)}_2 \leq \sqrt{M}C_R \quad\text{ for all }k=0,\dots,K.
    \end{split}
    \end{equation}

    \textit{(b) Vanishing Newton updates.}
    Moreover, since \eqref{eq:proof:thm:NTK_limit:bound_z0} and \eqref{eq:proof:thm:NTK_limit:bound_zk} hold conditionally on the event $\Omega_R$, we further have on this set that
    \begin{equation}
        \label{eq:proof:thm:NTK_limit:vanishingz}
    \begin{split}
        &\N{(z^N_k)^i}_2 \leq \frac{C_z(\alpha,\gamma,\sigma,M,\CD,\mu_0,C_{R,0},K)}{\gamma N^{1-\beta}} \quad\text{ for all }i=1,\dots,N \text{ and all } k=0,\dots,K.
    \end{split}
    \end{equation}

    \textit{(c) Summary.}
    Denoting by $\Omega_B$ the set where
    \begin{equation}
        \max_{\substack{i=1,\dots,N \\ k=0,\dots,K}}\abs{c^i_k}\leq C_c, \;\max_{k=0,\dots,K}\Nbig{\y-f_{\theta_{k}}^N(\x)}_2 \leq \sqrt{M}C_R, \;\text{ and} \; \max_{\substack{i=1,\dots,N \\ k=0,\dots,K}}\N{(z^N_k)^i}_2 \leq \frac{C_z}{\gamma N^{1-\beta}}
    \end{equation}
    hold, we have proven that
    \begin{equation}
    \begin{split}
        \bbP\left(\Omega_B \big|\Omega_R\right)
        =1,
    \end{split}
    \end{equation}
    since \eqref{eq:proof:thm:NTK_limit:aprioribounds_c}, \eqref{eq:proof:thm:NTK_limit:aprioribounds_res}, and \eqref{eq:proof:thm:NTK_limit:vanishingz} hold conditionally on the set $\Omega_R$.
    Recalling the definition of the set $\Omega_R$,
    it holds $\bbP\left(\Omega_R\right) \geq 1-\delta$ due to \eqref{eq:proof:thm:NTK_limit:whp2}.
    Hence, the statement follows from the law of total probability since
    \begin{equation}
    \begin{split}
        \bbP\left(\Omega_B\right)
        &= \bbP\left(\Omega_B \big|\Omega_R\right) \bbP\left(\Omega_R\right) + \bbP\left(\Omega_B \big|\Omega_R^c\right) \bbP\left(\Omega_R^c\right) \\
        &\geq 1-\delta,
    \end{split}
    \end{equation}
    concluding the proof.
\end{proof}

\subsection{Proof of Theorem~\ref{thm:NTK_limit}}
\label{sec:app:proof_NTK_limit}

Let us first prove the following auxiliary result.

\begin{lemma}
    \label{lem:differenceNNTK_finite_infinite}
    Let $K<\infty$ be a given number of training steps.
    Assume that the number of neurons~$N$ is large enough such that \eqref{eq:thm:asm:Nlargeenough} is satisfied.
    Then it holds that
    \begin{equation}
    \begin{split}
        \bbE\left[\N{\CB^N_k - \CB^*_0}_{\mathrm{op}}\Big| \,\Omega_R\right]
        \leq \left(1+\frac{C(\sigma,\CD,\mu_0)}{\gamma}\right) \frac{C(\sigma,\CD,\mu_0,C_c,C_R,C_z)K}{\gamma}\left(\frac{1}{N^{1-\beta}}+\frac{\log{M}}{N^{1/2}}\right).
    \end{split}
    \end{equation}
\end{lemma}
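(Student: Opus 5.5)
Write $A^N_k \coloneqq \frac{1}{MN}\sum_{i=1}^N s^N_{\theta_k^i}(d^N_{\theta_k^i})^{-1}(s^N_{\theta_k^i})^\top$, so that $\CB^N_k = A^N_k(\Id_M+A^N_k)^{-1} = \Id_M-(\Id_M+A^N_k)^{-1}$. Similarly, $\CB^*_0=\Id_M-(\Id_M+\frac1\gamma B^*_0)^{-1}$. The plan is to reduce the kernel difference to a difference of the inner kernels, via the resolvent identity
\begin{equation*}
\CB^N_k-\CB^*_0=(\Id_M+A^N_k)^{-1}\Big(A^N_k-\tfrac1\gamma B^*_0\Big)(\Id_M+\tfrac1\gamma B^*_0)^{-1}.
\end{equation*}
On $\Omega_R$, Lemma~\ref{prop:NTK_limit_bounds_z} and Lemma~\ref{lem:estimate_Dinv_operatornorm_i} (with $\gamma\ge C_\gamma/N^{1-\beta}$) show that each $d^N_{\theta_k^i}$ is positive definite. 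Hence $A^N_k$ is positive semidefinite and $\Nnormal{(\Id_M+A^N_k)^{-1}}_{\mathrm{op}}\le1$; the same holds for the limit resolvent. It therefore suffices to bound $\bbE[\Nnormal{A^N_k-\frac1\gamma B^*_0}_{\mathrm{op}}\,|\,\Omega_R]$. The prefactor $(1+C/\gamma)$ in the statement I expect to come from also bounding $\Nnormal{A^N_k}_{\mathrm{op}}\le C/\gamma$ (Lemma~\ref{lem:bound_s}) in an alternative splitting; I will carry it along.

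Next I split $A^N_k-\frac1\gamma B^*_0$ into three terms:
\begin{equation*}
\text{(I)}=\tfrac{1}{MN}\sum_i s^N_{\theta_k^i}\Big[(d^N_{\theta_k^i})^{-1}-\tfrac1\gamma\Id\Big](s^N_{\theta_k^i})^\top,
\end{equation*}
\begin{equation*}
\text{(II)}=\tfrac{1}{\gamma MN}\sum_i\Big(s^N_{\theta_k^i}(s^N_{\theta_k^i})^\top-s^N_{\theta_0^i}(s^N_{\theta_0^i})^\top\Big),
\end{equation*}
\begin{equation*}
\text{(III)}=\tfrac1\gamma\Big(\tfrac{1}{MN}\sum_i s^N_{\theta_0^i}(s^N_{\theta_0^i})^\top-B^*_0\Big).
\end{equation*}
\textbf{Term (I).} Write $(d)^{-1}-\gamma^{-1}\Id=\gamma^{-1}(d)^{-1}(\gamma\Id-d)$. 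Since $\Nnormal{\gamma\Id-d^N_{\theta^i_k}}_{\mathrm{op}}\le C_S C_R/N^{1-\beta}$ by Lemma~\ref{lem:estimate_S_operatornorm_i}, and $\Nnormal{(d^N_{\theta_k^i})^{-1}}_{\mathrm{op}}\le 2/\gamma$, Lemma~\ref{lem:bound_s} gives a bound of order $\frac{C}{\gamma^2}N^{-(1-\beta)}$.

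\textbf{Term (II).} Telescope over the steps $j<k$. Lemma~\ref{lem:bound_s_Lipschitz} combined with $\Nnormal{\theta^i_{j+1}-\theta^i_j}_2\le\alpha C_z/(\gamma N^{1-\beta})$ from Lemma~\ref{prop:NTK_limit_bounds_z}, together with $\Nnormal{s}\le C\sqrt M$, gives $\frac{C K}{\gamma^2 N^{1-\beta}}$. This is the source of the factor $K$.

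\textbf{Term (III).} This is the law-of-large-numbers step, and I expect it to be the main obstacle. One reason is the conditioning on $\Omega_R$, which breaks independence. I would handle this by bounding $\bbE[X\,|\,\Omega_R]\le\bbE[X]/\bbP(\Omega_R)\le 2\bbE[X]$, using $\delta\le1/2$. The other reason is the required $\log M$ rather than polynomial $M$-dependence. The summands $X_i=\frac1M s^N_{\theta^i_0}(s^N_{\theta^i_0})^\top$ are i.i.d.\ and have mean exactly $B^*_0$; for this I check that the $(c,w,\eta)$ entries reproduce~\eqref{eq:standardNTK}. They are also bounded in operator norm by a constant, via Lemma~\ref{lem:bound_s}. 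I would then invoke the expectation form of the matrix Bernstein inequality for $M\times M$ matrices. This yields $\bbE\Nnormal{\frac1N\sum_i(X_i-\bbE X_i)}_{\mathrm{op}}\le C(\sqrt{\log M/N}+\log M/N)\le C\log M/N^{1/2}$, with a factor $\log(2M)$ absorbed into the constant.

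Finally, I would sum the three bounds, multiply by the resolvent factors, and collect the constants, using $1/\gamma^2\le(1+C/\gamma)/\gamma$ up to constants. This gives the claimed rate $\frac{1}{N^{1-\beta}}+\frac{\log M}{N^{1/2}}$.
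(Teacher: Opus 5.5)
Your proposal is correct and tracks the paper's proof essentially step for step. It uses the same three-term split of $A^N_k-\frac1\gamma B^*_0$, the bound $\Nnormal{(d^N_{\theta^i_k})^{-1}}_{\mathrm{op}}\le 2/\gamma$ on $\Omega_R$, the telescoping Lipschitz estimate that produces the factor $K$, and matrix Bernstein combined with $\bbE[X\,|\,\Omega_R]\le 2\,\bbE X$ for nonnegative $X$.

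The only deviation is your two-sided resolvent identity $\CB^N_k-\CB^*_0=(\Id_M+A^N_k)^{-1}(A^N_k-\frac1\gamma B^*_0)(\Id_M+\frac1\gamma B^*_0)^{-1}$, whose outer factors both have norm at most one. The paper instead inserts a mixed term and bounds $\Nnormal{\frac1\gamma B^*_0}_{\mathrm{op}}$ via Lemma~\ref{lem:standardNTKB:operatornorm}, which is where its $(1+C/\gamma)$ prefactor comes from. Your version is slightly sharper, and its unused slack is exactly what absorbs the extra factor $1/\gamma$ that you correctly keep in terms (I) and (II) and that the paper's final display drops.
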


\begin{proof}
    We estimate the discrepancy between the finite-width NNTK and its infinite-width limit, i.e., the difference between $\CB^N_k$ and $\CB^*_0$ as in \eqref{eq:NNTK} and \eqref{eq:NNTK*}, respectively.

    By Lemma~\ref{prop:NTK_limit_bounds_z},
    it hold conditionally on a set $\Omega_R$, which is such that $\bbP(\Omega_R)\geq1-\delta$, the a priori bounds $\abs{c^i_k}\leq C_c$ for all $i=1,\dots,N$ and all $k=0,\dots,K$, $\Nbig{\y-f_{\theta_{k}}^N(\x)}_2 \leq \sqrt{M}C_R$ for all $k=0,\dots,K$, as well as $\N{(z^N_k)^i}_2 \leq \frac{C_z}{\gamma N^{1-\beta}}$ for all $i=1,\dots,N$ and all $k=0,\dots,K$ with constants $C_c, C_R, C_z$ as in \eqref{eq:proof:thm:NTK_limit:aprioribounds}, see also \eqref{eq:proof:thm:NTK_limit:aprioribounds_c}, \eqref{eq:proof:thm:NTK_limit:aprioribounds_res}, as well as \eqref{eq:proof:thm:NTK_limit:vanishingz}.
    Such set $\Omega_R$ may be as defined after \eqref{eq:proof:thm:NTK_limit:whp2} in the proof of Lemma~\ref{prop:NTK_limit_bounds_z}.
    All subsequent computations are performed conditionally on this set.\\
    By inserting a mixed term, we first note the equality
    \begin{equation}
        \label{eq:proof:lem:differenceNNTK_finite_infinite:1}
    \begin{split}
        &\CB^N_k - \CB^*_0\\
        &\quad\,=
        \frac{1}{MN}\!\sum_{i=1}^N s^N_{\theta_k^i} \!\left(d_{\theta^i_k}^N\right)^{-1} \!\!\left(s^N_{\theta_k^i}\right)^{\!\top}\!\left(\Id_M \!+\! \frac{1}{MN}\!\sum_{i=1}^N \!s^N_{\theta_k^i} \!\left(d_{\theta^i_k}^N\right)^{\!\!-1}\!\! \left(s^N_{\theta_k^i}\right)^{\!\top} \right)^{-1} \!\!-\! \frac{1}{\gamma}B^*_0\!\left(\Id_M \!+\! \frac{1}{\gamma}B^*_0 \right)^{\!-1} \\
        &\quad\,=
        \left(\frac{1}{MN}\sum_{i=1}^N s^N_{\theta_k^i} \left(d_{\theta^i_k}^N\right)^{-1}\left(s^N_{\theta_k^i}\right)^\top - \frac{1}{\gamma}B^*_0\right)\left(\Id_M \!+\! \frac{1}{MN}\sum_{i=1}^N \!s^N_{\theta_k^i} \!\left(d_{\theta^i_k}^N\right)^{\!\!-1}\!\! \left(s^N_{\theta_k^i}\right)^{\!\top} \right)^{-1} \\
        &\quad\,\quad\,+ \frac{1}{\gamma}B^*_0 \left(\left(\Id_M \!+\! \frac{1}{MN}\!\sum_{i=1}^N \!s^N_{\theta_k^i} \!\left(d_{\theta^i_k}^N\right)^{\!\!-1}\!\! \left(s^N_{\theta_k^i}\right)^{\!\top} \right)^{-1} \!\!-\!\left(\Id_M \!+\! \frac{1}{\gamma}B^*_0 \right)^{\!-1}\right).
    \end{split}
    \end{equation}
    Let us first recall that the standard limit NTK $B^*_0$ is positive-definite as of Lemma~\ref{lem:standardNTKB:spectrum} and that
    $\frac{1}{MN}\sum_{i=1}^N s^N_{\theta_k^i} \big(d_{\theta^i_k}^N\big)^{-1} \big(s^N_{\theta_k^i}\big)^\top = \frac{1}{M}J_{\theta_k}^N\left(D_{\theta_k}^N\right)^{-1}\left(J_{\theta_k}^N\right)^\top$ is positive semi-definite as elaborated on already in the proof of Lemma~\ref{prop:NTK_limit_bounds_z}, see \eqref{eq:proof:thm:NTK_limit:bound_Dkinv}.
    Using this together with the fact (based on the sub-multiplicativity of the operator norm) that it holds for arbitrary invertible matrices~$A$ and ~$\widetilde{A}$ that $\Nbig{A^{-1}-\widetilde{A}^{-1}}_{\mathrm{op}} \leq \Nbig{\widetilde{A}^{-1}}_{\mathrm{op}}\Nbig{A-\widetilde{A}}_{\mathrm{op}}\N{A^{-1}}_{\mathrm{op}}$ in the second inequality,
    we can derive the bound
    \begin{equation}
        \label{eq:proof:lem:differenceNNTK_finite_infinite:2}
    \begin{split}
        &\N{\CB^N_k - \CB^*_0}_{\mathrm{op}}\\
        &\quad\,\leq
        \N{\frac{1}{MN}\sum_{i=1}^N s^N_{\theta_k^i} \left(d_{\theta^i_k}^N\right)^{-1}\left(s^N_{\theta_k^i}\right)^\top - \frac{1}{\gamma}B^*_0}_{\mathrm{op}}\N{\left(\Id_M \!+\! \frac{1}{MN}\sum_{i=1}^N \!s^N_{\theta_k^i} \!\left(d_{\theta^i_k}^N\right)^{\!\!-1}\!\! \left(s^N_{\theta_k^i}\right)^{\!\top} \right)^{-1}}_{\mathrm{op}} \\
        &\quad\,\quad\,+ \N{\frac{1}{\gamma}B^*_0}_{\mathrm{op}} \N{\left(\Id_M \!+\! \frac{1}{MN}\!\sum_{i=1}^N \!s^N_{\theta_k^i} \!\left(d_{\theta^i_k}^N\right)^{\!\!-1}\!\! \left(s^N_{\theta_k^i}\right)^{\!\top} \right)^{-1} \!\!-\!\left(\Id_M \!+\! \frac{1}{\gamma}B^*_0 \right)^{\!-1}}_{\mathrm{op}} \\
        &\quad\,\leq
        \left(1+\frac{C(\sigma,\CD,\mu_0)}{\gamma}\right)\N{\frac{1}{MN}\sum_{i=1}^N s^N_{\theta_k^i} \left(d_{\theta^i_k}^N\right)^{-1}\left(s^N_{\theta_k^i}\right)^\top - \frac{1}{\gamma}B^*_0}_{\mathrm{op}},
    \end{split}
    \end{equation}
    where the bound in the last step holds due to Lemma~\ref{lem:standardNTKB:operatornorm}.
    It remains to estimate the last term in \eqref{eq:proof:lem:differenceNNTK_finite_infinite:2}, for which we conduct the following auxiliary computations.

    \textit{Auxiliary computations.}
    By inserting mixed terms and using triangle inequality in the first two steps,
    we can bound
    \begin{allowdisplaybreaks}
    \begin{align}
        &\N{\frac{1}{MN}\sum_{i=1}^N s^N_{\theta_k^i} \left(\gamma\left(d_{\theta^i_k}^N\right)^{-1}\right) \left(s^N_{\theta_k^i}\right)^\top - B^*_0}_{\mathrm{op}}\notag \\
        &\qquad\quad\,\leq \N{\frac{1}{MN}\sum_{i=1}^N s^N_{\theta_k^i} \left(\gamma\left(d_{\theta^i_k}^N\right)^{-1}-\Id_{d+2}\right) \left(s^N_{\theta_k^i}\right)^\top}_{\mathrm{op}} + \N{\frac{1}{MN}\sum_{i=1}^N s^N_{\theta_k^i} \left(s^N_{\theta_k^i}\right)^\top - B^*_0}_{\mathrm{op}} \notag\\
        &\qquad\quad\,\leq \N{\frac{1}{MN}\sum_{i=1}^N s^N_{\theta_k^i} \left(\gamma\left(d_{\theta^i_k}^N\right)^{-1}-\Id_{d+2}\right) \left(s^N_{\theta_k^i}\right)^\top}_{\mathrm{op}} \notag\\
        &\qquad\quad\,\quad\,+ \N{\frac{1}{MN}\sum_{i=1}^Ns^N_{\theta_k^i} \left(s^N_{\theta_k^i}\right)^\top - s^N_{\theta_0^i} \left(s^N_{\theta_0^i}\right)^\top}_{\mathrm{op}} + \N{\frac{1}{MN}\sum_{i=1}^N s^N_{\theta_0^i} \left(s^N_{\theta_0^i}\right)^\top - B^*_0}_{\mathrm{op}}.\label{eq:proof:thm:NTK_limit:discrepancy_10:aux:0}
    \end{align}
    \end{allowdisplaybreaks}%
    In what follows, we address and establish bounds for each of the three terms on the right-hand side of \eqref{eq:proof:thm:NTK_limit:discrepancy_10:aux:0} separately.
    For the first of which,
    we bound with Jensen's inequality and using that the operator norm is sub-multiplicative in the first step, the fact (based on the sub-multiplicativity of the operator norm) that $\Nbig{A^{-1}-\widetilde{A}^{-1}}_{\mathrm{op}} = \Nbig{\widetilde{A}^{-1}(\widetilde{A}-A)A^{-1}}_{\mathrm{op}} \leq \Nbig{\widetilde{A}^{-1}}_{\mathrm{op}}\Nbig{A-\widetilde{A}}_{\mathrm{op}}\N{A^{-1}}_{\mathrm{op}}$ in the second step, and using in the equality thereafter the definition of $d_{\theta^i_k}^N$ from \eqref{eq:def_d},
    that
    \begin{allowdisplaybreaks}
    \begin{align}
        &\N{\frac{1}{MN}\sum_{i=1}^N s^N_{\theta_k^i} \left(\gamma\left(d_{\theta^i_k}^N\right)^{-1}-\Id_{d+2}\right) \left(s^N_{\theta_k^i}\right)^\top}_{\mathrm{op}} \notag\\
        &\qquad\quad\,\leq \frac{1}{MN}\sum_{i=1}^N \N{s^N_{\theta_k^i}}_{\mathrm{op}}^2 \N{\gamma\left(d_{\theta^i_k}^N\right)^{-1}-\Id_{d+2}}_{\mathrm{op}} \notag\\
        &\qquad\quad\,\leq \frac{1}{MN}\sum_{i=1}^N \N{s^N_{\theta_k^i}}_{\mathrm{op}}^2 \N{\gamma\left(d_{\theta^i_k}^N\right)^{-1}}_{\mathrm{op}}\N{\frac{1}{\gamma}d_{\theta^i_k}^N-\Id_{d+2}}_{\mathrm{op}} \notag\\
        &\qquad\quad\,= \frac{1}{MN}\sum_{i=1}^N \N{s^N_{\theta_k^i}}_{\mathrm{op}}^2 \N{\gamma\left(d_{\theta^i_k}^N\right)^{-1}}_{\mathrm{op}}\N{\frac{1}{\gamma}\frac{1}{N^{1-\beta}} \frac{1}{M} \sum_{m=1}^M (y^m - f_{\theta_{k}}^N(x^m)) h_{\theta^i_k}^N(x^m)}_{\mathrm{op}} \notag\\
        &\qquad\quad\,\leq \frac{1}{MN}\sum_{i=1}^N \N{s^N_{\theta_k^i}}_{\mathrm{op}}^2 \N{\gamma\left(d_{\theta^i_k}^N\right)^{-1}}_{\mathrm{op}}\frac{1}{\gamma}\frac{1}{N^{1-\beta}} \frac{1}{M} \sum_{m=1}^M \abs{y^m - f_{\theta_{k}}^N(x^m)} \N{h_{\theta^i_k}^N(x^m)}_{\mathrm{op}} \notag\\
        &\qquad\quad\,\leq \frac{1}{MN}\sum_{i=1}^N \N{s^N_{\theta_k^i}}_{\mathrm{op}}^2 \N{\gamma\left(d_{\theta^i_k}^N\right)^{-1}}_{\mathrm{op}}\frac{1}{\gamma}\frac{1}{N^{1-\beta}}  \frac{1}{\sqrt{M}}\Nbig{\y-f_{\theta_{k}}^N(\x)}_2\cdot \notag\\
        &\qquad\quad\,\quad\qquad\qquad\quad\,\cdot\left(\frac{1}{M}\sum_{m=1}^M \N{h_{\theta^i_k}^N(x^m)}_{\mathrm{op}}^2\right)^{1/2} \notag\\
        &\qquad\quad\,\leq \frac{C(\sigma,\CD,C_c,C_R)}{N}\sum_{i=1}^N \N{\gamma\left(d_{\theta^i_k}^N\right)^{-1}}_{\mathrm{op}}\frac{1}{\gamma}\frac{1}{N^{1-\beta}} \notag\\
        &\qquad\quad\,\leq \frac{C(\sigma,\CD,C_c,C_R)}{\gamma}\frac{1}{N^{1-\beta}} \label{eq:proof:thm:NTK_limit:discrepancy_10:aux:1}
    \end{align}
    \end{allowdisplaybreaks}%
    where we used triangle and Cauchy-Schwarz inequality in the fourth and fifth step, respectively,
    before employing Lemmas~\ref{lem:bound_s} and \ref{lem:bound_h} in the sixth step
    while recalling the a priori bounds described at the beginning of the proof and valid due to Lemma~\ref{prop:NTK_limit_bounds_z} together with the computations being performed on the set $\Omega_R$.
    For the last step, note that
    by the choice of $N$ and using that it holds $\sqrt{2L^N(\theta_{k})} = \frac{1}{\sqrt{M}} \N{\y-f_{\theta_k}^N(\x)}_2 \leq C_{R}$ due to the a priori bounds described at the beginning of the proof and valid due to Lemma~\ref{prop:NTK_limit_bounds_z}, we have
    \begin{equation}
    \begin{split}
        \gamma
        &\geq \sqrt{2}C_RC_S(\sigma,\CD,C_c)\frac{1}{N^{1-\beta}} \\
        &\geq 2C_S(\sigma,\CD,C_c)\frac{1}{N^{1-\beta}}\sqrt{L^N(\theta_{k})} \\
        &\geq C_S(\sigma,\CD,C_c)\frac{1}{N^{1-\beta}}\sqrt{L^N(\theta_{k})}.
    \end{split}
    \end{equation}
    Therefore, Lemma~\ref{lem:estimate_Dinv_operatornorm_i} applies and ensures that $d_{\theta^i}^N$ is positive definite (hence invertible)
    with
    \begin{equation}
        \gamma \N{\left(d_{\theta^i}^N\right)^{-1}}_{\mathrm{op}}
        \leq \frac{\gamma}{\gamma - \frac{C_S}{N^{1-\beta}} \sqrt{L^N(\theta)}}
        = \frac{\gamma}{\frac{\gamma}{2} + \left(\frac{\gamma}{2}- \frac{C_S}{N^{1-\beta}} \sqrt{L^N(\theta)}\right)}
        \leq 2.
    \end{equation}
    For the second term on the right-hand side of \eqref{eq:proof:thm:NTK_limit:discrepancy_10:aux:0},
    by inserting a mixed term, using triangle inequality, and again the sub-multiplicativity of the operator norm,
    and by employing Lemmas~\ref{lem:bound_s} and \ref{lem:bound_s_Lipschitz} together with the a priori bounds described at the beginning of the proof and valid due to Lemma~\ref{prop:NTK_limit_bounds_z} to obtain the fifth step,
    we derive the estimate
    \begin{equation}
        \label{eq:proof:thm:NTK_limit:discrepancy_10:aux:2}
    \begin{split}
        &\N{\frac{1}{MN}\sum_{i=1}^N s^N_{\theta_k^i} \left(s^N_{\theta_k^i}\right)^\top - s^N_{\theta_0^i} \left(s^N_{\theta_0^i}\right)^\top}_{\mathrm{op}} \\
        &\qquad\quad\,= \N{\frac{1}{MN}\sum_{i=1}^N \left(s^N_{\theta_k^i}-s^N_{\theta_0^i}\right) \left(s^N_{\theta_k^i}\right)^\top + s^N_{\theta_0^i} \left(s^N_{\theta_k^i} - s^N_{\theta_0^i}\right)^\top }_{\mathrm{op}} \\
        &\qquad\quad\,\leq \frac{1}{MN}\sum_{i=1}^N \N{\left(s^N_{\theta_k^i}-s^N_{\theta_0^i}\right) \left(s^N_{\theta_k^i}\right)^\top + s^N_{\theta_0^i} \left(s^N_{\theta_k^i} - s^N_{\theta_0^i}\right)^\top }_{\mathrm{op}} \\
        &\qquad\quad\,\leq \frac{1}{MN}\sum_{i=1}^N \N{s^N_{\theta_k^i}-s^N_{\theta_0^i}}_{\mathrm{op}} \N{s^N_{\theta_k^i}}_{\mathrm{op}} + \N{s^N_{\theta_0^i}}_{\mathrm{op}} \N{s^N_{\theta_k^i} - s^N_{\theta_0^i}}_{\mathrm{op}} \\
        &\qquad\quad\,= \frac{1}{MN}\sum_{i=1}^N \left(\N{s^N_{\theta_k^i}}_{\mathrm{op}} + \N{s^N_{\theta_0^i}}_{\mathrm{op}}\right)\N{s^N_{\theta_k^i}-s^N_{\theta_0^i}}_{\mathrm{op}} \\
        &\qquad\quad\,\leq \frac{C(\sigma,\CD,\mu_0,C_c)}{N}\sum_{i=1}^N \Nbig{\theta_k^i-\theta_0^i}_2 \\
        &\qquad\quad\,\leq \frac{C(\sigma,\CD,\mu_0,C_c)}{N}\sum_{i=1}^N \sum_{\ell=0}^{k-1} \Nbig{\theta_{\ell+1}^i-\theta_{\ell}^i}_2\\
        &\qquad\quad\,= \frac{\alpha C(\sigma,\CD,\mu_0,C_c)}{N}\sum_{i=1}^N \sum_{\ell=0}^{k-1} \Nbig{(z^N_{\ell})^i}_2 \\
        &\qquad\quad\,\leq \frac{\alpha C(\sigma,\CD,\mu_0,C_c,C_z)k}{\gamma} \frac{1}{N^{1-\beta}}\\
        &\qquad\quad\,\leq \frac{\alpha C(\sigma,\CD,\mu_0,C_c,C_z)K}{\gamma} \frac{1}{N^{1-\beta}},
    \end{split}
    \end{equation}
    where we used Lemma \ref{lem:bound_s_Lipschitz} in the fifth step, a telescopic sum argument together with triangle inequalities in the sixth step,
    the definition of the Newton update \eqref{eq:regularizedNewton} in the step thereafter,
    before concluding in the next-to-last line
    by using that $\N{(z^N_{\ell})^i}_2 \leq \frac{C_z}{\gamma N^{1-\beta}}$ for all $i=1,\dots,N$ and all $\ell=0,\dots,K$ as described at the beginning of the proof and valid due to Lemma~\ref{prop:NTK_limit_bounds_z}.\\
    In order to tackle the last term on the right-hand side of \eqref{eq:proof:thm:NTK_limit:discrepancy_10:aux:0},
    let us introduce the matrix-valued random variables
    \begin{equation}
        Z^i = \frac{1}{M}s^N_{\theta_0^i} \big(s^N_{\theta_0^i}\big)^\top - B^*_0.
    \end{equation}
    With the NN parameters $\theta_0^i$ per neuron being as of Assumption~\ref{asm:NN_mu0} i.i.d.\@ according to the measure $\mu_0$,
    the random variables $Z^i$ are i.i.d.\@ symmetric random matrices.
    Since, $\bbE \big[\frac{1}{M}s^N_{\theta_0^i} (s^N_{\theta_0^i})^\top\big] = B^*_0$, clearly, $\bbE Z^i = 0$.
    Moreover, with triangle inequality and the sub-multiplicativity of the operator norm,
    we can bound
    \begin{equation}
    \begin{split}
        \lambda_{\max}(Z^i)
        \leq \N{Z^i}_{\mathrm{op}}
        &= \N{\frac{1}{M}s^N_{\theta_0^i} \big(s^N_{\theta_0^i}\big)^\top - B^*_0}_{\mathrm{op}} \\
        &\leq \frac{1}{M}\Nbig{s^N_{\theta_0^i}}_{\mathrm{op}}^2 + \N{B^*_0}_{\mathrm{op}}
        \leq C(\sigma,\CD,\mu_0).
    \end{split}
    \end{equation}
    Here, we employed Lemma~\ref{lem:bound_s} (with $\widetilde{C}=C(\mu_0)$ since $\abs{c_0^i}\leq C(\mu_0)$ as of Assumption~\ref{asm:NN_mu0}\ref{asm:NN_mu0ii}) and Lemma~\ref{lem:standardNTKB:operatornorm} to obtain the estimate in the last step.
    With similar arguments,
    using triangle inequality and Jensen's inequality in the first,
    triangle inequality and the sub-multiplicativity of the operator norm in the next-to-last,
    and again Lemmas~\ref{lem:bound_s} and~\ref{lem:standardNTKB:operatornorm} in the last line,
    we can estimate
    \begin{equation}
    \begin{split}
        &\N{\sum_{i=1}^N\bbE(Z^i)^2}_{\mathrm{op}}
        \leq \sum_{i=1}^N \bbE\N{(Z^i)^2}_{\mathrm{op}}
        = N \bbE\N{(Z^1)^2}_{\mathrm{op}}
        = N \bbE\N{\left(\frac{1}{M}s^N_{\theta_0^1} \big(s^N_{\theta_0^1}\big)^\top - B^*_0\right)^2}_{\mathrm{op}} \\
        &\qquad\quad\,= N \bbE\N{\frac{1}{M^2}s^N_{\theta_0^1} \big(s^N_{\theta_0^1}\big)^\top s^N_{\theta_0^1} \big(s^N_{\theta_0^1}\big)^\top - \frac{1}{M}B^*_0 s^N_{\theta_0^1} \big(s^N_{\theta_0^1}\big)^\top - \frac{1}{M}s^N_{\theta_0^1} \big(s^N_{\theta_0^1}\big)^\top B^*_0 + B^*_0B^*_0}_{\mathrm{op}} \\
        &\qquad\quad\,\leq N \bbE\left[\frac{1}{M^2}\Nbig{s^N_{\theta_0^1}}_{\mathrm{op}}^4 + 2\frac{1}{M}\N{B^*_0}_{\mathrm{op}}\Nbig{s^N_{\theta_0^1}}_{\mathrm{op}}^2 + \N{B^*_0}_{\mathrm{op}}^2\right] \\
        &\qquad\quad\,\leq C(\sigma,\CD,\mu_0)N.
    \end{split}
    \end{equation}
    An application of the matrix Bernstein inequality (recall that $\bbE Z^i = 0$)~\cite[Theorem~6.1.1]{tropp2015introduction} now shows
    \begin{equation}
        \label{eq:proof:thm:NTK_limit:discrepancy_10:aux:3}
    \begin{split}
        \bbE\N{\frac{1}{MN}\sum_{i=1}^N s^N_{\theta_0^i} \left(s^N_{\theta_0^i}\right)^\top - B^*_0}_{\mathrm{op}}
        &= \bbE\N{\frac{1}{N}\sum_{i=1}^N Z^i}_{\mathrm{op}} \\
        &\leq \sqrt{2\frac{C(\sigma,\CD,\mu_0)}{N}\log{M}} + \frac{1}{3}\frac{C(\sigma,\CD,\mu_0)}{N}\log{M} \\
        &\leq C(\sigma,\CD,\mu_0)\frac{\log{M}}{N^{1/2}}.
    \end{split}
    \end{equation}
    Inserting \eqref{eq:proof:thm:NTK_limit:discrepancy_10:aux:1}, \eqref{eq:proof:thm:NTK_limit:discrepancy_10:aux:2}, and \eqref{eq:proof:thm:NTK_limit:discrepancy_10:aux:3} into \eqref{eq:proof:thm:NTK_limit:discrepancy_10:aux:0} after taking the conditional expectation in \eqref{eq:proof:thm:NTK_limit:discrepancy_10:aux:0} w.r.t.\@ the set $\Omega_R$,
    allows us to prove
    \begin{allowdisplaybreaks}
    \begin{align}
        &\bbE\left[\N{\frac{1}{MN}\sum_{i=1}^N s^N_{\theta_k^i} \left(d_{\theta^i_k}^N\right)^{-1} \left(s^N_{\theta_k^i}\right)^\top - \frac{1}{\gamma}B^*_0}_{\mathrm{op}} \Bigg| \,\Omega_R\right] \notag\\
        &\qquad\quad\,\leq \bbE\left[\N{\frac{1}{MN}\sum_{i=1}^N s^N_{\theta_k^i} \left(\left(d_{\theta^i_k}^N\right)^{-1}-\frac{1}{\gamma}\Id\right) \left(s^N_{\theta_k^i}\right)^\top}_{\mathrm{op}}\Bigg| \,\Omega_R\right] \notag\\
        &\qquad\quad\,\quad\,+ \bbE\left[\N{\frac{1}{MN}\sum_{i=1}^N \frac{1}{\gamma} s^N_{\theta_k^i} \left(s^N_{\theta_k^i}\right)^\top - \frac{1}{\gamma}s^N_{\theta_0^i} \left(s^N_{\theta_0^i}\right)^\top}_{\mathrm{op}}\Bigg| \,\Omega_R\right] \notag\\
        &\qquad\quad\,\quad\,+ \bbE\left[\N{\frac{1}{MN}\sum_{i=1}^N \frac{1}{\gamma}s^N_{\theta_0^i} \left(s^N_{\theta_0^i}\right)^\top - \frac{1}{\gamma}B^*_0}_{\mathrm{op}}\Bigg| \,\Omega_R\right] \notag\notag\\
        &\qquad\quad\,\leq \frac{C(\sigma,\CD,C_c,C_R)}{\gamma}\frac{1}{N^{1-\beta}} + \frac{\alpha C(\sigma,\CD,\mu_0,C_c,C_z)K}{\gamma} \frac{1}{N^{1-\beta}} + \frac{C(\sigma,\CD,\mu_0)}{\gamma}\frac{\log{M}}{N^{1/2}} \notag\\
        &\qquad\quad\,\leq \frac{C(\sigma,\CD,\mu_0,C_c,C_R,C_z)K}{\gamma}\left(\frac{1}{N^{1-\beta}}+\frac{\log{M}}{N^{1/2}}\right).
        \label{eq:proof:thm:NTK_limit:discrepancy_10:aux:final}
    \end{align}
    \end{allowdisplaybreaks}%
    For the second step of \eqref{eq:proof:thm:NTK_limit:discrepancy_10:aux:final}, we recall that the computations leading to \eqref{eq:proof:thm:NTK_limit:discrepancy_10:aux:1} and \eqref{eq:proof:thm:NTK_limit:discrepancy_10:aux:2} were directly performed conditionally on the set $\Omega_R$.
    For the third term, on the other hand, we have used \eqref{eq:proof:thm:NTK_limit:discrepancy_10:aux:3} together with the fact that for a non-negative random variable~$X$ it holds $\bbE\left[X\big|\Omega_R\right] \leq \frac{1}{\bbP(\Omega_R)}\bbE X$, which follows from the law of total expectation
    $\bbE X = \bbE\left[X\big|\Omega_R\right]\bbP(\Omega_R) + \bbE\left[X\big|\Omega_R^c\right]\bbP(\Omega_R^c) \geq \bbE\left[X\big|\Omega_R\right]\bbP(\Omega_R)$ thanks to the positivity.
    Noting further that with $\delta\in(0,1/2]$ it holds $\bbP(\Omega_R)\geq 1-\delta \geq 1/2$, gives the bound.

    \textit{Conclusion.}
    After taking the conditional expectation in \eqref{eq:proof:lem:differenceNNTK_finite_infinite:2} w.r.t.\@ the set $\Omega_R$,
    we eventually arrive, using the auxiliary estimate \eqref{eq:proof:thm:NTK_limit:discrepancy_10:aux:final},
    at
    \begin{equation}
    \begin{split}
        &\bbE\left[\N{\CB^N_k - \CB^*_0}_{\mathrm{op}}\Big| \,\Omega_R\right]\\
        &\quad\,\leq
        \left(1+\frac{C(\sigma,\CD,\mu_0)}{\gamma}\right)\bbE\left[\N{\frac{1}{MN}\sum_{i=1}^N s^N_{\theta_k^i} \left(d_{\theta^i_k}^N\right)^{-1} \left(s^N_{\theta_k^i}\right)^\top - \frac{1}{\gamma}B^*_0}_{\mathrm{op}} \Bigg| \,\Omega_R\right] \\
        &\quad\,\leq \left(1+\frac{C(\sigma,\CD,\mu_0)}{\gamma}\right) \frac{C(\sigma,\CD,\mu_0,C_c,C_R,C_z)K}{\gamma}\left(\frac{1}{N^{1-\beta}}+\frac{\log{M}}{N^{1/2}}\right),
    \end{split}
    \end{equation}
    which concludes the proof.
\end{proof}

We now have all necessary tools to provide a detailed proof of the convergence to the infinite-width limit in Theorem~\ref{thm:NTK_limit}.

\begin{proof}[Proof of Theorem~\ref{thm:NTK_limit}]
    We estimate the discrepancy between the finite-width regime and the infinite-width limit for the NN function, i.e., the difference between $f_{\theta}^N$ and $f^*$ as in \eqref{eq:NN} and \eqref{eq:NNfunction_evolution_limit}, respectively.

    By Lemma~\ref{prop:NTK_limit_bounds_z},
    it hold conditionally on a set $\Omega_R$, which is such that $\bbP(\Omega_R)\geq1-\delta$, the a priori bounds $\abs{c^i_k}\leq C_c$ for all $i=1,\dots,N$ and all $k=0,\dots,K$, $\Nbig{\y-f_{\theta_{k}}^N(\x)}_2 \leq \sqrt{M}C_R$ for all $k=0,\dots,K$, as well as $\N{(z^N_k)^i}_2 \leq \frac{C_z}{\gamma N^{1-\beta}}$ for all $i=1,\dots,N$ and all $k=0,\dots,K$ with constants $C_c, C_R, C_z$ as in \eqref{eq:proof:thm:NTK_limit:aprioribounds}, see also \eqref{eq:proof:thm:NTK_limit:aprioribounds_c}, \eqref{eq:proof:thm:NTK_limit:aprioribounds_res}, as well as \eqref{eq:proof:thm:NTK_limit:vanishingz}.
    Such set $\Omega_R$ may be as defined after \eqref{eq:proof:thm:NTK_limit:whp2} in the proof of Lemma~\ref{prop:NTK_limit_bounds_z}.
    All subsequent computations are performed conditionally on this set.\\
    \textit{Step 1: Training-time evolution of the NN function~$f_{\theta}^N$.}
    With a Taylor series expansion,
    we can write for the NN function output
    \begin{equation}
        \label{eq:proof:thm:NTK_limit:evolution_fk+1_10}
        f_{\theta_{k+1}}^N(x^m)
        = f_{\theta_{k}}^N(x^m) + \nabla_\theta f^N_{\theta_k}(x^m) \cdot (\theta_{k+1}-\theta_k) + \frac{1}{2} (\theta_{k+1}-\theta_k)^\top H^N_{\tilde{\theta}_k}(x^m) (\theta_{k+1}-\theta_k),
    \end{equation}
    where $\tilde{\theta}_k$ is a point on the line segment $[\theta_{k},\theta_{k+1}]$.
    For notational simplicity, we denote in what follows the remainder term by
    \begin{equation}
        R_k^N(x)
        = \frac{1}{2} (\theta_{k+1}-\theta_k)^\top H^N_{\tilde{\theta}_k}(x) (\theta_{k+1}-\theta_k)
        = \frac{\alpha^2}{2} \left(z^N_k\right)^\top H^N_{\tilde{\theta}_k}(x)z^N_k,
    \end{equation}
    where we used the definition of the Newton update \eqref{eq:regularizedNewton} in the last equality.
    With the definition of the Jacobian \eqref{eq:Jacobian_fx} and the definition of the Newton update \eqref{eq:regularizedNewton},
    we can write \eqref{eq:proof:thm:NTK_limit:evolution_fk+1_10} for all training data samples in vector-form as
    \begin{equation}
        \label{eq:proof:thm:NTK_limit:evolution_fk+1_20}
    \begin{split}
        f_{\theta_{k+1}}^N(\x)
        &= f_{\theta_{k}}^N(\x) + \alpha J_{\theta_k}^N z^N_k + R^N_k(\x) \\
        &= f_{\theta_{k}}^N(\x) + \alpha \sum_{i=1}^N J_{\theta^i_k}^N (z^N_k)^i + R^N_k(\x) \\
        &= f_{\theta_{k}}^N(\x) + \alpha \frac{1}{M}\sum_{i=1}^N J_{\theta^i_k}^N \left(D_{\theta^i_k}^N\right)^{-1} \left(J_{\theta^i_k}^N\right)^\top \zeta_k^N + R^N_k(\x),
    \end{split}
    \end{equation}
    where we used Formula~\eqref{eq:zN_reformulation}
    together with the observation that the neuron-wise updates separate due to the block-diagonal structure of the matrix $D_{\theta}^N$ (and hence its inverse) in the last step.
    Here,
    \begin{equation}
        \label{eq:proof:thm:NTK_limit:evolution_fk+1_21}
    \begin{split}
        \zeta_k^N
        &= \left(\Id_M + \frac{1}{M}J_{\theta_k}^N\left(D_{\theta_k}^N\right)^{-1}\left(J_{\theta_k}^N\right)^\top \right)^{-1}\left(\y - f_{\theta_k}^N(\x)\right) \\
        &= \left(\Id_M + \frac{1}{M}\sum_{i=1}^N J_{\theta^i_k}^N \left(D_{\theta^i_k}^N\right)^{-1} \left(J_{\theta^i_k}^N\right)^\top \right)^{-1}\left(\y - f_{\theta_k}^N(\x)\right),
    \end{split}
    \end{equation}
    again using the block-diagonal structure of the matrix $D_{\theta}^N$ (and hence its inverse).
    Notice now that with the choice of the regularizer $\gamma^N=\frac{\gamma}{N^{2\beta-1}}$, we can write
    \begin{equation}
    \begin{split}
        \frac{1}{M}\sum_{i=1}^N J_{\theta^i_k}^N \left(D_{\theta^i_k}^N\right)^{-1} \left(J_{\theta^i_k}^N\right)^\top
        &= \frac{1}{MN}\sum_{i=1}^N \left(N^\beta J_{\theta^i_k}^N\right) \left(N^{2\beta-1}D_{\theta^i_k}^N\right)^{-1} \left(N^\beta J_{\theta^i_k}^N\right)^\top \\
        &= \frac{1}{MN}\sum_{i=1}^N s^N_{\theta_k^i} \left(d_{\theta^i_k}^N\right)^{-1} \left(s^N_{\theta_k^i}\right)^\top,
    \end{split}
    \end{equation}
    where we refer to the notations paragraph in the introduction for the notation of~$s^N_{\theta_k^i}$,
    and to \eqref{eq:def_d} for the notation of~$d^N_{\theta_k^i}$.
    Leveraging these reformulations,
    we can re-write \eqref{eq:proof:thm:NTK_limit:evolution_fk+1_20}--\eqref{eq:proof:thm:NTK_limit:evolution_fk+1_21} as
    \begin{equation}
        \label{eq:proof:thm:NTK_limit:evolution_fk+1_40}
    \begin{split}
        f_{\theta_{k+1}}^N(\x)
        &= f_{\theta_{k}}^N(\x) + \alpha \frac{1}{MN}\sum_{i=1}^N s^N_{\theta_k^i} \left(d_{\theta^i_k}^N\right)^{-1} \left(s^N_{\theta_k^i}\right)^\top \zeta_k^N + R^N_k(\x)
    \end{split}
    \end{equation}
    with
    \begin{equation}
        \label{eq:proof:thm:NTK_limit:evolution_fk+1_41}
    \begin{split}
        \zeta_k^N
        &= \left(\Id_M + \frac{1}{MN}\sum_{i=1}^N s^N_{\theta_k^i} \left(d_{\theta^i_k}^N\right)^{-1} \left(s^N_{\theta_k^i}\right)^\top \right)^{-1}\left(\y - f_{\theta_k}^N(\x)\right),
    \end{split}
    \end{equation}
    which can be, using the definition of the NNTK~\eqref{eq:NNTK}, re-written as in \eqref{eq:NNfunction_evolution_prelimit}.
    That is,
    \begin{equation}
            \label{eq:proof:thm:NTK_limit:evolution_fk+1_41rewr}
        \begin{split}
            f_{\theta_{k+1}}^N(\x)
            &= f_{\theta_{k}}^N(\x) + \alpha \CB^N_k\left(\y - f_{\theta_{k}}^N(\x)\right)  + R^N_k(\x).
        \end{split}
    \end{equation}

    \textit{Step 2: Bound on NN function discrepancy between finite-width regime~$f_{\theta}^N$ and infinite-width limit~$f^*$.}
    Using the recursive expressions \eqref{eq:proof:thm:NTK_limit:evolution_fk+1_41rewr} (a.k.a.\@ \eqref{eq:NNfunction_evolution_prelimit}) and \eqref{eq:NNfunction_evolution_limit} for $f_{\theta_{k}}^N(\x)$ and $f_{k}^*(\x)$, respectively,
    we can bound for all $k=0,\dots,K-1$ with triangle inequality
    \begin{equation}
        \label{eq:proof:thm:NTK_limit:discrepancy_10}
    \begin{split}
        \N{f_{\theta_{k+1}}^N(\x) - f_{k+1}^*(\x)}_2
        &\leq \N{f_{\theta_{k}}^N(\x)-f_{k}^*(\x)}_2 \\
        &\quad\, + \alpha \N{\CB^N_k\left(\y - f_{\theta_{k}}^N(\x)\right) - \CB^*_0\left(\y - f_k^*(\x)\right)}_2 \\
        &\quad\,+ \N{R^N_k(\x)}_2 \\
        &\leq \N{f_{\theta_{k}}^N(\x)-f_{k}^*(\x)}_2 \\
        &\quad\, + \alpha \left(\N{\left(\CB^N_k-\CB^*_0\right)\left(\y - f_{\theta_{k}}^N(\x)\right)}_2 + \N{\CB^*_0\left(f_{\theta_{k}}^N(\x)-f_k^*(\x)\right)}_2\right) \\
        &\quad\,+ \N{R^N_k(\x)}_2,\\
        &\leq \N{f_{\theta_{k}}^N(\x)-f_{k}^*(\x)}_2 \\
        &\quad\, + \alpha \left(\N{\CB^N_k-\CB^*_0}_{\mathrm{op}}\N{\y - f_{\theta_{k}}^N(\x)}_2 + \N{\CB^*_0}_{\mathrm{op}}\N{f_{\theta_{k}}^N(\x)-f_k^*(\x)}_2\right) \\
        &\quad\,+ \N{R^N_k(\x)}_2 \\
        &\leq \left(1+\alpha C(\sigma,\CD,\mu_0)\right)\N{f_{\theta_{k}}^N(\x)-f_{k}^*(\x)}_2 \\
        &\quad\, + \alpha \N{\CB^N_k-\CB^*_0}_{\mathrm{op}}\N{\y - f_{\theta_{k}}^N(\x)}_2 + \N{R^N_k(\x)}_2,
    \end{split}
    \end{equation}
    where the bound in the last step holds due to Lemma~\ref{lem:standardNTKB:operatornorm}.
    It remains to estimate the last two terms on the right-hand side of \eqref{eq:proof:thm:NTK_limit:discrepancy_10} in order to be able to conclude recursively.

    For the first of which, after taking the conditional expectation w.r.t.\@ the set $\Omega_R$,
    and recalling that on this set it holds $\Nbig{\y-f_{\theta_{k}}^N(\x)}_2 \leq \sqrt{M}C_R$ for all $k=0,\dots,K$ for the residual,
    we obtain with Lemma~\ref{lem:differenceNNTK_finite_infinite} used in the second step that
    \begin{equation}
        \label{eq:proof:thm:NTK_limit:discrepancy_10:1:0_final}
    \begin{split}
        &\bbE\left[\N{\CB^N_k-\CB^*_0}_{\mathrm{op}}\N{\y - f_{\theta_{k}}^N(\x)}_2\Big| \,\Omega_R\right] \\
        &\qquad\,
        \leq\sqrt{M}C_R\bbE\left[\N{\CB^N_k-\CB^*_0}_{\mathrm{op}}\Big| \,\Omega_R\right] \\
        &\qquad\,
        \leq \left(1+\frac{C(\sigma,\CD,\mu_0)}{\gamma}\right) \frac{C(\sigma,\CD,\mu_0,C_c,C_R,C_z)\sqrt{M}K}{\gamma}\left(\frac{1}{N^{1-\beta}}+\frac{\log{M}}{N^{1/2}}\right).
    \end{split}
    \end{equation}
    For the remaining term on the right-hand side of \eqref{eq:proof:thm:NTK_limit:discrepancy_10}, we estimate
    \begin{equation}
        \label{eq:proof:thm:NTK_limit:discrepancy_10:2:0}
    \begin{split}
        \N{R^N_k(\x)}_2^2
        &= \frac{\alpha^4}{4}\sum_{m=1}^M \abs{ \left(z^N_k\right)^\top H^N_{\tilde{\theta}_k}(x^m)z^N_k}^2 = \frac{\alpha^4}{4}\sum_{m=1}^M \abs{\sum_{i=1}^N\left((z^N_k)^i\right)^\top H^N_{\tilde{\theta}_k^i}(x^m)(z^N_k)^i}^2 \\
        &\leq N\frac{\alpha^4}{4}\sum_{m=1}^M \sum_{i=1}^N\abs{\left((z^N_k)^i\right)^\top H^N_{\tilde{\theta}_k^i}(x^m)(z^N_k)^i}^2 \\
        &\leq N\frac{\alpha^4}{4}\sum_{m=1}^M \sum_{i=1}^N \N{H^N_{\tilde{\theta}_k^i}(x^m)}_{\mathrm{op}}^2\N{(z^N_k)^i}_2^4 \\
        &\leq N\frac{\alpha^4}{4}\sum_{m=1}^M \sum_{i=1}^N C(\sigma,\CD,C_c)N^{-2\beta}\frac{C_z^4}{\left(\gamma N^{1-\beta}\right)^4} \\
        &\leq C(\sigma,\CD,C_c,C_z)M\frac{\alpha^4}{\gamma^4} \frac{1}{N^{2(1 - \beta)}},
    \end{split}
    \end{equation}
    where we recall the block-diagonal structure of the matrix $H^N_{\theta}(x^m)$ to obtain the second equality in the first line,
    while employing Cauchy-Schwarz inequality and the definition of the operator norm in the second line.
    Moreover, we used Lemma~\ref{lem:op_norm:Hessian} to obtain the inequality in the fourth line after noticing that with $\tilde{\theta}_k\in[\theta_{k},\theta_{k+1}]$ it holds
    $\absnormal{\tilde{c}_k^i}\leq C_c$ for all $i=1,\dots,N$
    due to $\absnormal{c_{k}^i}\leq C_c$ and $\absnormal{c_{k+1}^i}\leq C_c$ being valid due to the a priori bounds described at the beginning of the proof.
    Moreover, we used that $\N{(z^N_k)^i}_2 \leq \frac{C_z}{\gamma N^{1-\beta}}$ for all $i=1,\dots,N$ and all $k=1,\dots,K$ due to the a priori bounds described at the beginning of the proof and valid due to Lemma~\ref{prop:NTK_limit_bounds_z}.

    Combining the estimates~\eqref{eq:proof:thm:NTK_limit:discrepancy_10:1:0_final} and \eqref{eq:proof:thm:NTK_limit:discrepancy_10:2:0},
    and inserting them into \eqref{eq:proof:thm:NTK_limit:discrepancy_10} after taking the conditional expectation w.r.t.\@ the set $\Omega_R$,
    yields
    \begin{allowdisplaybreaks}
    \begin{align}
        &\bbE\left[\N{f_{\theta_{k+1}}^N(\x) - f_{k+1}^*(\x)}_2\Big| \,\Omega_R\right] \notag\\
        &\qquad\quad\,\leq \left(1+\alpha C(\sigma,\CD,\mu_0)\right)\bbE\left[\N{f_{\theta_{k}}^N(\x)-f_{k}^*(\x)}_2\Big| \,\Omega_R\right]\notag \\
        &\qquad\quad\,\quad\, + \alpha \left(1+\frac{C(\sigma,\CD,\mu_0)}{\gamma}\right)\frac{C(\sigma,\CD,\mu_0,C_c,C_R,C_z)\sqrt{M}K}{\gamma}\left(\frac{1}{N^{1-\beta}}+\frac{\log{M}}{N^{1/2}}\right)  \notag\\
        &\qquad\quad\,\quad\,+ C(\sigma,\CD,C_c,C_z)\sqrt{M}\frac{\alpha^2}{\gamma^2} \frac{1}{N^{1 - \beta}} \notag\\
        &\qquad\quad\,\leq \left(1+\alpha C(\sigma,\CD,\mu_0)\right)\bbE\left[\N{f_{\theta_{k}}^N(\x)-f_{k}^*(\x)}_2\Big| \,\Omega_R\right]\notag \\
        &\qquad\quad\,\quad\, + \frac{\alpha}{\gamma}\left(1+\frac{1}{\gamma}+\frac{\alpha}{\gamma}\right)C(\sigma,\CD,\mu_0,C_c,C_R,C_z)\sqrt{M}K\left(\frac{1}{N^{1-\beta}}+\frac{\log{M}}{N^{1/2}}\right).
        \label{eq:proof:thm:NTK_limit:discrepancy_10_final}
    \end{align}
    \end{allowdisplaybreaks}%

    \textit{Step 3: Conclusion.}
    Abbreviating $e_k \coloneqq \bbE\big[\!\N{f_{\theta_{k}}^N(\x)-f_{k}^*(\x)}_2\big| \,\Omega_R\big]$, we can re-write \eqref{eq:proof:thm:NTK_limit:discrepancy_10_final} in the form $e_{k+1} \leq (1+c_1) e_k + c_2$,
    for which one can easily obtain recursively
    \begin{equation}
        e_k
        \leq (1+c_1)^k e_0 + c_2 \sum_{\ell=0}^{k-1}(1+c_1)^{\ell}
        \leq (1+c_1)^k e_0 + c_2 \frac{(1+c_1)^k-1}{c_1}
        \leq (1+c_1)^k \left(e_0 + \frac{c_2}{c_1}\right),
    \end{equation}
    where we used that $c_1$ and $c_2$ are positive to obtain the last step.
    Since by Lemma~\ref{lem:estimate_NNfunction_E} it holds
    \begin{equation}
        \label{eq:proof:thm:NTK_limit:final_0}
    \begin{split}
        \bbE \N{f_{\theta_{0}}^N(\x)-f_{0}^*(\x)}_2
        &= \bbE \N{f_{\theta_{0}}^N(\x)}_2 \\
        &\leq \sqrt{\bbE \N{f_{\theta_{0}}^N(\x)}_2^2}
        = \sqrt{\sum_{m=1}^M \bbE \abs{f_{\theta_{0}}^N(x^m)}^2}
        \leq C(\sigma,\mu_0) \sqrt{M} \frac{1}{N^{\beta-1/2}},
    \end{split}
    \end{equation}
    we eventually obtain
    \begin{equation}
        \label{eq:proof:thm:NTK_limit:final}
    \begin{split}
        &\bbE\left[\N{f_{\theta_{k}}^N(\x) - f_{k}^*(\x)}_2\Big| \,\Omega_R\right]  \\
        &\qquad\quad\, \leq \left(1+\alpha C(\sigma,\CD,\mu_0)\right)^k \left(1+\frac{1}{\gamma}+\frac{\alpha}{\gamma}\right)C(\sigma,\CD,\mu_0,C_c,C_R,C_z)\sqrt{M}K \cdot\\
        &\qquad\quad\,\quad\, \cdot \left(\frac{1}{N^{\beta-1/2}} + \frac{1}{N^{1-\beta}}+\frac{\log{M}}{N^{1/2}}\right) \\
        &\qquad\quad\, \leq C(\sigma,\CD,\mu_0,C_c,C_R,C_z,K)\sqrt{M}(1+\alpha)^{K}\left(1+\frac{1}{\gamma}+\frac{\alpha}{\gamma}\right) \cdot\\
        &\qquad\quad\,\quad\, \cdot  \left(\frac{1}{N^{\beta-1/2}} + \frac{1}{N^{1-\beta}}+\frac{\log{M}}{N^{1/2}}\right) \\
        &\qquad\quad\, \leq C(\sigma,\CD,\mu_0,C_c,C_R,C_z,K)\sqrt{M}(1+\alpha)^{K+1}\left(1+\frac{1}{\gamma}\right) \cdot\\
        &\qquad\quad\,\quad\, \cdot  \left(\frac{1}{N^{\beta-1/2}} + \frac{1}{N^{1-\beta}}+\frac{\log{M}}{N^{1/2}}\right) \\
        &\qquad\quad\, \leq C(\alpha,\gamma,\sigma,\CD,\mu_0,C_c,C_R,C_z,K)\sqrt{M} \left(\frac{1}{N^{\beta-1/2}} + \frac{1}{N^{1-\beta}}+\frac{\log{M}}{N^{1/2}}\right).
    \end{split}
    \end{equation}
    Let us denote by $\Omega_N$ the event where
    \begin{equation}
        \N{f_{\theta_{k}}^N(\x) - f_{k}^*(\x)}_2 > \frac{C\sqrt{M}}{\delta}\left(\frac{1}{N^{\beta-1/2}} + \frac{1}{N^{1-\beta}}+\frac{\log{M}}{N^{1/2}}\right)
    \end{equation}
    with $C=C(\alpha,\gamma,\sigma,\CD,\mu_0,C_c,C_R,C_z,K)$ denoting the constant from \eqref{eq:proof:thm:NTK_limit:final}.
    For the probability of this set,
    it holds with conditional Markov's inequality in the third line that
    \begin{equation}
    \begin{split}
        \bbP\left(\Omega_N\right)
        = \bbP\left(\Omega_N \cap \Omega_R\right) + \bbP\left(\Omega_N \cap \Omega_R^c\right)
        &\leq \bbP\left(\Omega_N \cap \Omega_R\right) + \bbP\left(\Omega_R^c\right) \\
        &\leq \bbP\left(\Omega_N \mid \Omega_R\right)\bbP\left(\Omega_R\right) + \bbP\left(\Omega_R^c\right)
        \leq \bbP\left(\Omega_N \mid \Omega_R\right) + \delta \\
        &\leq \frac{\bbE\left[\N{f_{\theta_{k}}^N(\x) - f_{k}^*(\x)}_2 \mid \Omega_R\right]}{\frac{C\sqrt{M}}{\delta}\left(\frac{1}{N^{\beta-1/2}} + \frac{1}{N^{1-\beta}}+\frac{\log{M}}{N^{1/2}}\right)} + \delta, \\
        &\leq 2\delta,
    \end{split}
    \end{equation}
    where we recall for the second line that $\bbP\left(\Omega_R^c\right)\leq \delta$ since $\bbP\left(\Omega_R\right)\geq 1-\delta$.
    The last step follows using
    \eqref{eq:proof:thm:NTK_limit:final},
    which concludes the proof.
\end{proof}

\subsection{Proof of Corollary~\ref{cor:NTK_limit}}
\label{sec:app:proof_cor_NTK_limit}

\begin{proof}[Proof of Corollary~\ref{cor:NTK_limit}]
    To keep the notation clear,
    let us write
    \begin{equation}
        e^N = \max_{k=0,\dots,K}\N{f_{\theta_k}^N(\x) - f_k^*(\x)}_2.
    \end{equation}
    For sufficiently large $N$ (as required by \eqref{eq:thm:asm:Nlargeenough}), it holds by Theorem~\ref{thm:NTK_limit} that $\bbP\big(e^N \leq \nu_N(\delta) \big)$,
    where we abbreviate
    \begin{equation}
        \nu_N(\delta) \coloneqq \frac{C\sqrt M}{\delta}\left(\frac{1}{N^{\beta-1/2}} + \frac{1}{N^{1-\beta}}+\frac{1}{N^{1/2}}\right).
    \end{equation}
    Since $\beta \in (1/2,1)$, each exponent w.r.t.\@ $N$ in $\nu_{N}(\delta)$ is negative,
    hence ensuring for fixed $\delta>0$ that $\nu_N(\delta)\rightarrow 0$ as $N\rightarrow\infty$.
    Therefore,
    for arbitrary $\varepsilon>0$,
    there exists $\widetilde{N} = \widetilde{N}(\varepsilon,\delta)$ such that $\nu_N(\delta) \le \varepsilon$ for all $N \geq \widetilde{N}$.
    For such $N\geq \widetilde{N}(\varepsilon,\delta)$, we have
    $\mathbb{P}(e^N > \varepsilon) \leq \mathbb{P}\big(e^N > \nu_N(\delta)\big) \leq 2\delta$.
    Since $\varepsilon>0$ and $\delta>0$ were arbitrary, it follows that $e^N \rightarrow 0$ in probability as $N\rightarrow\infty$.
\end{proof}

\subsection{Proof of Theorem~\ref{thm:convergence_finite}}
\label{sec:app:convergence_finite}

Using the results of Theorems~\ref{thm:convergence} and \ref{thm:NTK_limit}, we can directly infer Theorem~\ref{thm:convergence_finite}.

\begin{proof}[Proof of Theorem~\ref{thm:convergence_finite}]
    For a given number of iterations $K$, the statement follows with triangle inequality
    \begin{equation}
    \begin{split}
        \N{f_{\theta_{K}}^N(\x) - \y}_2
        &\leq \N{f_{\theta_{K}}^N(\x) - f_{K}^*(\x)}_2 + \N{f_{K}^*(\x) - \y}_2,
    \end{split}
    \end{equation}
    where the first summand is bounded with probability $1-2\delta$ by $\frac{C\sqrt{M}}{\delta}\left(\frac{1}{N^{\beta-1/2}} + \frac{1}{N^{1-\beta}}+\frac{1}{N^{1/2}}\right)$ thanks to  Theorem~\ref{thm:NTK_limit},
    while the second term is upper bounded by $\left(1-\alpha \lambda_{\min}(\CB^*_0)\right)^K\N{\y}_2$ due to Theorem~\ref{thm:convergence}.
\end{proof}

\subsection{Proof of Corollary~\ref{cor:convergence_finite}}
\label{sec:app:proof_cor_convergence_finite}

\begin{proof}[Proof of Corollary~\ref{cor:convergence_finite}]
    To keep the notation clear,
    let us write
        \begin{equation}
        e^{K,N} = \N{f_{\theta_K}^N(\x) - \y}_2.
    \end{equation}
    For sufficiently large $N$ (as required by \eqref{eq:thm:asm:Nlargeenough}), it holds by Theorem~\ref{thm:convergence_finite} that $\bbP\big(e^{K,N} \leq \nu_{K,N}(\delta) \big)$,
    where we abbreviate
    \begin{align}
        \nu_{K,N}(\delta) &\coloneqq \nu^1_{K,N}(\delta) + \nu^2_{K}
        \intertext{with}
        \nu^1_{K,N}(\delta) &\coloneqq \frac{C(K)\sqrt{M}}{\delta}\left(\frac{1}{N^{\beta-1/2}} + \frac{1}{N^{1-\beta}}+\frac{\log{M}}{N^{1/2}}\right), \\
        \nu^2_{K} &\coloneqq \left(1-\alpha \lambda_{\min}(\CB^*_0)\right)^K\N{\y}_2.
    \end{align}
    Note that the constant $C$ depends, amongst others which we do not denote explicitly, on $K$.
    Since $\left(1-\alpha \lambda_{\min}(\CB^*_0)\right)<1$,
    for arbitrary $\varepsilon>0$,
    there exists $\widetilde{K} = \widetilde{K}(\varepsilon)$ such that $\nu^2_{K} \le \frac{\varepsilon}{2}$ for all $K \geq \widetilde{K}$.
    Since $\beta \in (1/2,1)$, each exponent w.r.t.\@ $N$ in $\nu^1_{K,N}(\delta)$ is negative,
    hence ensuring for any $K\geq \widetilde{K}$ and $\delta>0$ that $\nu^1_{K,N}(\delta)\rightarrow 0$ as $N\rightarrow\infty$.
    Therefore,
    for the above $\varepsilon$,
    there exists $\widetilde{N} = \widetilde{N}(\varepsilon,\delta,K)$ such that $\nu_{K,N}^1(\delta) \le \frac{\varepsilon}{2}$ for all $N \geq \widetilde{N}$.
    For such $K\geq \widetilde{K}(\varepsilon)$ and such $N\geq\widetilde{N}(\varepsilon,\delta,K)$, we have
    $\mathbb{P}(e^{K,N} > \varepsilon) \leq \mathbb{P}\big(e^{K,N} > \nu_{K,N}(\delta)\big) \leq 2\delta$.
    Since $\varepsilon>0$ and $\delta>0$ were arbitrary, it follows that $e^{K,N} \rightarrow 0$ in probability as $K, N \rightarrow \infty$, provided $N$ grows sufficiently fast relative to $K$ (i.e., $N \geq \widetilde{N}(K)$).
\end{proof}

\begin{ack}
    This research project was supported by ``DMS-EPSRC: Asymptotic Analysis of Online Training Algorithms in Machine Learning: Recurrent, Graphical, and Deep Neural Networks'' (NSF DMS-2311500).

    For the purpose of Open Access, the authors have applied a CC BY public copyright license to any Author Accepted Manuscript (AAM) version arising from this submission.
\end{ack}

\bibliographystyle{abbrv}
\bibliography{biblio.bib}

\medskip

\end{document}